\def\eqref#1{equation~\ref{#1}}
\def\1{\bm{1}}
\def\vzero{{\bm{0}}}
\def\vone{{\bm{1}}}
\def\vmu{{\bm{\mu}}}
\def\vb{{\bm{b}}}
\def\vh{{\bm{h}}}
\def\vm{{\bm{m}}}
\def\vw{{\bm{w}}}
\def\vx{{\bm{x}}}
\def\vy{{\bm{y}}}
\def\mB{{\bm{B}}}
\def\mM{{\bm{M}}}
\def\mW{{\bm{W}}}
\def\mX{{\bm{X}}}
\def\mY{{\bm{Y}}}
\def\mSigma{{\bm{\Sigma}}}
\DeclareMathAlphabet{\mathsfit}{\encodingdefault}{\sfdefault}{m}{sl}
\SetMathAlphabet{\mathsfit}{bold}{\encodingdefault}{\sfdefault}{bx}{n}
\newcommand{\R}{\mathbb{R}}
\DeclareMathOperator*{\argmax}{arg\,max}
\DeclareMathOperator*{\argmin}{arg\,min}
\newcommand{\cmark}{\ding{51}}%
\newcommand{\xmark}{\ding{55}}%
\newcommand\greybox[1]{%
  \vskip\baselineskip%
  \par\noindent\colorbox{lightgray}{%
    \begin{minipage}{\textwidth}#1\end{minipage}%
  }%
  \vskip\baselineskip%
}
\newtheorem{theorem}{Theorem}[section]
\newtheorem{proposition}[theorem]{Proposition}
\newtheorem*{unnum-proposition}{Proposition}
\newcommand{\ag}[1]{{\textcolor{blue}{Abhirup: #1}}}
\newcommand{\am}[1]{{\textcolor{red}{Alex: #1}}}
\newcommand{\so}[1]{{\textcolor{magenta}{Soumya: #1}}}
\newcommand{\acronym}{\texttt{NEO}}
\title{NEO — No-Optimization Test-Time Adaptation through Latent Re-Centering}
\author{Alexander Murphy\textsuperscript{1}\thanks{Correspondence to: \texttt{alexandermurphy784@gmail.com}} \quad Michal Danilowski\textsuperscript{1} \quad Soumyajit Chatterjee\textsuperscript{2}\textsuperscript{3} \quad Abhirup Ghosh\textsuperscript{1}\textsuperscript{3}\\
\textsuperscript{1}University of Birmingham \qquad \textsuperscript{2}Brave Software Research \qquad \textsuperscript{3}University of Cambridge} 
\begin{document}

\maketitle

\begin{abstract}
Test-Time Adaptation (TTA) methods are often computationally expensive, require a large amount of data for effective adaptation, or are brittle to hyperparameters. Based on a theoretical foundation of the geometry of the latent space, we are able to significantly improve the alignment between source and distribution-shifted samples by re-centering target data embeddings at the origin. This insight motivates \acronym{} -- a hyperparameter-free fully TTA method, that adds no significant compute compared to vanilla inference. \acronym{} is able to improve the classification accuracy of ViT-Base on ImageNet-C from 55.6\% to 59.2\% after adapting on just one batch of 64 samples. When adapting on 512 samples \acronym{} beats all 7 TTA methods we compare against on ImageNet-C, ImageNet-R and ImageNet-S and beats 6/7 on CIFAR-10-C, while using the least amount of compute. \acronym{} performs well on model calibration metrics and additionally is able to adapt from 1 class to improve accuracy on 999 other classes in ImageNet-C. On Raspberry Pi and Jetson Orin Nano devices, \acronym{} reduces inference time by 63\% and memory usage by 9\% compared to baselines. Our results based on 3 ViT architectures and 4 datasets show that \acronym{} can be used efficiently and effectively for TTA.
\end{abstract}

\section{Introduction}

A central challenge in machine learning is maintaining performance under distribution shifts between training and deployment. For instance, an image classifier may excel on curated training data but degrade on real-world inputs with snow, fog, or motion blur. Test-Time Adaptation (TTA) methods \citep{adabn,oftta,shot,tent,sar} address this by leveraging unlabeled test samples without requiring access to training data, making them particularly suited to the modern setting of large pre-trained models. 

Existing TTA methods face several limitations, such as backpropagation-based updates that significantly increase memory consumption \citep{cotta, ma2025surgeon}, inference latency \citep{foa,sar}, and sensitivity to hyperparameter choices \citep{tent}. Others impose architectural assumptions (e.g., the presence of batch normalization layers) \citep{tent, sar, song2023ecotta} or require a large number of target samples to achieve stable adaptation \citep{t3a}. Moreover, as adaptation and inference are performed continually on data arrival, TTA methods with high computational demands incur significant latency and memory overhead on both edge and server deployments.

We propose \acronym{}, an optimization and hyperparameter-free fully (not using source data) TTA method with no significant additional latency or memory overhead. Moreover, \acronym{} is more accurate and better calibrated than baseline TTA methods, which use up to several times the compute, as shown in Figure~\ref{fig:neo-perf-amd}. It is robust with the capability to adapt with just a single sample and with unbalanced classes for considerable accuracy improvements as shown in Figure~\ref{fig:acc_class_sample_cont}.

\begin{figure}
     \centering
     \fbox{%
     \includegraphics[width=0.6\linewidth]{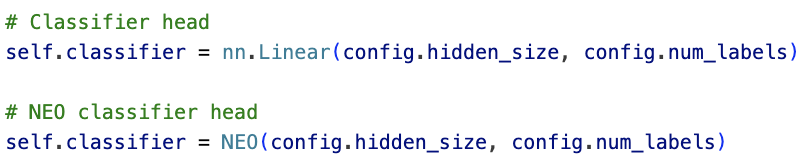}
     }
     \caption{Elegant adoption: \acronym{} can be added by replacing the \texttt{nn.Linear} with our custom layer.}
     \label{fig:neo-layer}
 \end{figure}

\begin{figure}[ht]
    \centering
    % --- Subfigure a ---
    \begin{subfigure}{0.45\textwidth}
        \centering
        \includegraphics[width=\linewidth]{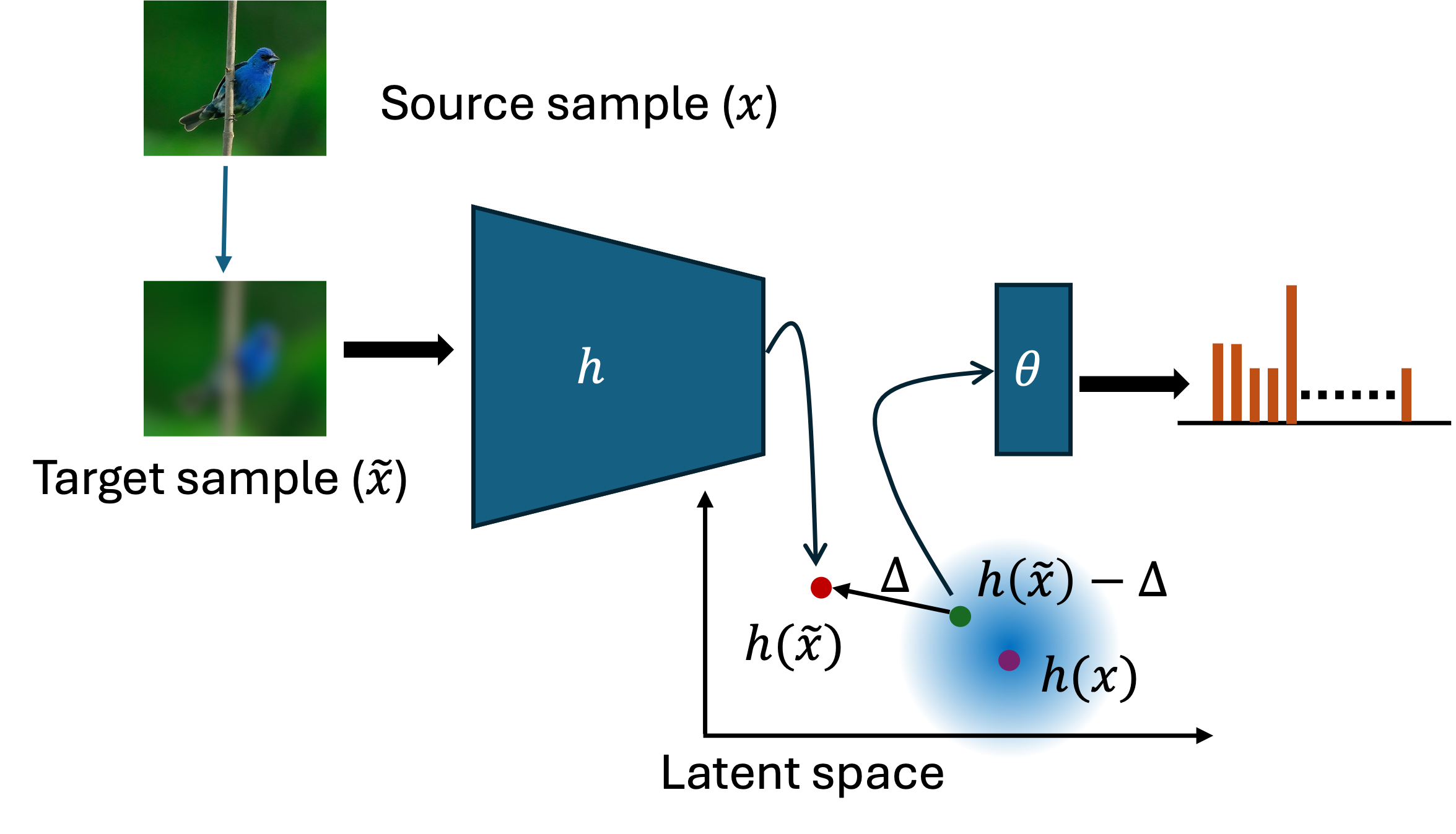}
        \subcaption{High-level overview of \acronym{}}
        \label{fig:neo-schematic}
    \end{subfigure}
    \hfill
    % --- Subfigure b ---
    \begin{subfigure}{0.45\textwidth}
        \centering
        \includegraphics[width=\linewidth]{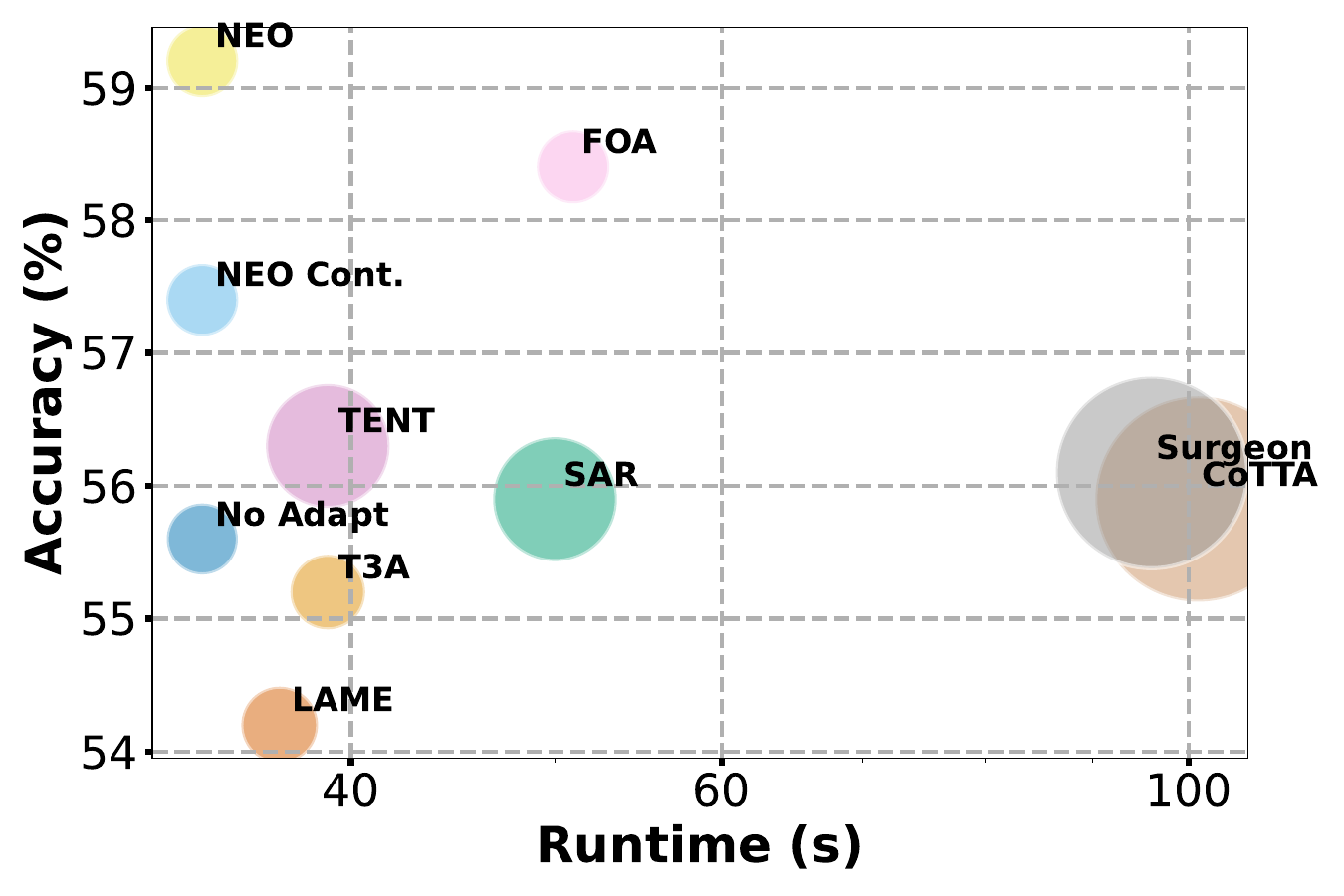}
        \subcaption{\acronym{} improves accuracy using little latency or memory}
        \label{fig:neo-perf-amd}
    \end{subfigure}
    \caption{\textbf{(a)} Given a domain shifted sample, $\tilde{\vx}$, we encode it to $h(\tilde{\vx})$ and shift it using a single shared vector $\Delta$. The shifted representation is closer to the embedding of the corresponding clean sample (unknown), $h(\vx)$, resulting in more accurate predictions. \textbf{(b)} Runtime (x axis), accuracy (y axis), and memory usage (point radius) of TTA methods for ViT-Base on 15 corruption from ImageNet-C evaluated on 512 samples per corruption. \acronym{} outperforms all methods in terms of runtime, accuracy, and memory.}
    \label{fig:high_level}
\end{figure}

\begin{comment}
\ag{Add a paragraph here saying that we solve all the above challenges and this is novel. We point to Fig 1b saying we achieve the lowest runtime with highest accuracy against SOTA baselines.}\so{I think that will be repetitive as we are saying that the end of the next paragraph.}

\ag{Then connect the above, saying the way we do it is really simple but theoretically grounded. Then have current description of the technique, I like it.}
\end{comment}

% Motivated by the recent development on neural collapse~\citep{neural_collapse}, we treat the activations from the penultimate layer as \emph{embeddings} and observe that even when the input distribution is perturbed by seemingly simple corruptions, such as uniform stochastic noise across all input dimensions, the latent space exhibits a structural shift. 
We observe that due to the distribution shift at input, including data-independent stochastic shifts, the activations from the penultimate layer, \emph{embeddings}, shift structurally. Our key idea is to approximate this shift by tracking the displacement of the global centroid of the embeddings and then re-centering test-time features accordingly. In practice, this means that given a corrupted sample $\tilde{\vx}$ with embedding $h(\tilde{\vx})$, \acronym{} corrects it by subtracting a single shared vector $\Delta$, bringing the representation closer to the embedding of its clean counterpart $h(\vx)$ (although unknown) and thereby restoring accuracy – we illustrate the re-centering in Figure~\ref{fig:neo-schematic}. While estimating such a distribution shift is challenging in pre-trained models, often trained on unknown data sources, we leverage structural properties of neural collapse~\citep{neural_collapse} to develop a practical and principled method, without relying on source data.

\acronym{} is elegant and simple to adopt, requiring only a single line change in a standard Vision Transformer implementation, by replacing \texttt{nn.Linear} with our custom \texttt{NEO} layer as shown in Figure~\ref{fig:neo-layer}. It incurs negligible computational and memory overhead as it stores only a single vector to correct the shift. Furthermore, with just a handful of samples, it consistently improves accuracy over state-of-the-art TTA methods that demand orders of magnitude more compute and memory. Finally, we also extend \acronym{} with a continual variant, using just one easily interpretable hyperparameter, that maintains robustness under evolving test distributions, making it particularly suitable for real-world, resource-constrained deployments.  

\begin{enumerate}
    \item We propose~\acronym{}, a lightweight TTA method, that re-centers embeddings using a global centroid estimate. We evaluate \acronym{} on 4 datasets and 3 different ViT architectures, showing consistent improvement in accuracy and model calibration. Moreover, \acronym{} beats all 7 TTA methods that we compare against, when adapting on 512 samples from ImageNet-C.
    % \item We introduce \acronym{}, a novel TTA method that achieves better accuracy and model confidence scores than SOTA baselines with significantly less memory and latency cost while using a significantly lower number of samples. It does not have a hyperparameter to tune, and we provide an easy-to-adapt code.
    
    % \item We introduce \acronym{}, a novel TTA method that outperforms state-of-the-art approaches in low-data and low-resource settings.
    % \item We show that \acronym{} serves as a strong baseline in high-data settings, especially for architectures without batch normalization, such as Vision Transformers.
    \item We perform a thorough investigation on the effect that an input distribution shift has on the latent space, finding a significant shared shift across samples and classes. We connect this shift with neural collapse to provide a principled explanation for why global re-centering is sufficient for adaptation.
    %\item We show that input distribution shifts manifest as a global displacement in the embedding space, and under approximate neural collapse, centering embeddings at the origin aligns them with classifier weights. This provides a principled 
    % \item We provide theoretical insights into how input distribution shifts manifest in the latent feature space and show why global shift correction is sufficient using the recently developed theoretical framework on Neural Collapse.
    % \item We perform experiments to explain the inner workings of \acronym{}, showing that we are able to adapt on just 1 sample and 1 class. Moreover, we show that \acronym{} can also be used for continual adaptation and can adapt between different types of corruptions.
    % \item We evaluate \acronym{} on edge devices as well as on larger cloud servers, showing no significant increases in latency or memory consumption – combined with being hyperparameter-free, this makes \acronym{} practical for a variety of use-cases.
    \item We show that~\acronym{} is both efficient and versatile: it can adapt with as little as a single sample or class, extend naturally to continual adaptation across evolving corruptions, and maintain low latency and memory usage on both edge devices and cloud servers. Combined with being hyperparameter-free, this makes~\acronym{} practical for diverse real-world deployment scenarios.
    %\item We evaluate NEO on four domain-shift benchmarks (ImageNet-C, CIFAR-10-C, ImageNet-R, ImageNet-S) across three ViT architectures, demonstrating consistent improvements in both accuracy and calibration. We further show that NEO runs efficiently on edge devices (Raspberry Pi, Jetson Orin Nano), where it maintains or reduces inference cost compared to vanilla inference.
    % \item We present extensive empirical validation on 4 benchmark domain adaptation datasets while using 3 vision transformer architectures, including efficiency evaluations on Raspberry Pi 4 and Jetson Orin Nano devices.
\end{enumerate}
\section{Problem Statement and Setup}

Consider a pre-trained classification model 
$f: \R^{m} \rightarrow \R^{C}$, composed of an encoder 
$h: \R^{m} \rightarrow \R^{d}$ and a linear classification head 
$\theta: \R^{d} \rightarrow \R^{C}$, such that 
$f = \theta \circ h$. The model is trained on a source dataset 
$\mathcal{D} = (\mX, \mY)$, where 
$\mX \in \R^{n \times m}$ contains $n$ labeled training samples. We aim to adapt $f$ to a domain-shifted target dataset 
$\tilde{\mathcal{D}} = (\tilde{\mX}, \tilde{\mY})$, where 
$\tilde{\mX} \in \R^{n' \times m}$ contains $n'$ target input 
samples, and $\tilde{\mY} \in \R^{n' \times C}$ contains the associated labels. We focus on a covariate shift that changes $\tilde{\mX}$ and leaves the conditional distribution of the labels on input data unchanged.

A fully TTA algorithm $\mathcal{A}$ adapts the model \(f\) by accessing just the target samples $\tilde{\mX}$ without access to source data or target labels, such that $f_{adapt} = \mathcal{A}(f, \tilde{\mX})$. The goal of $\mathcal{A}$ is to optimize an evaluation metric $\xi(f_{adapt}, \tilde{\mX}, \tilde{\mY})$, such as accuracy. The evaluation metric may access the target labels, but $\mathcal{A}$ does not. We focus on two metrics: accuracy and expected calibration error (ECE) \citep{ece}. ECE quantifies the mismatch between predicted confidence and true accuracy which is important for evaluating model trustworthiness (see more in Appendix \ref{app-metrics}). 

TTA is useful in practical systems requiring adaptation to in-the-wild data distributions. For example, consider an on-car camera-based traffic signal recognizer that feeds into the car dashboard. Here, the application requires the model to be on-car for robustness against network failures. The model also needs to be adapted to remain accurate on images captured by the local camera. To meet real-time inference requirements and limited on-car compute resources, the adaptation needs to be time and resource-efficient, which forms the constraints we focus on in this paper.
\section{Related Works and Background}
% \begin{table}[!ht]
% \md{trim it to max 2 paragraph, dont describe each method, simply group them and cite without details} 
% TTA is a practical approach to adapt to domain shift.

% A host of existing TTA methods minimize classification entropy through updating affine parameters of batch normalization (BN) layers: TENT~\citep{tent}, SAR~\citep{sar}, and EATA~\citep{niu2022efficient}. Their performance has been improved with pseudo-label-based clustering (SHOT~\citep{shot}), in non-i.i.d. settings (NOTE~\citep{gong2022note}, and continual setting (CoTTA~\citep{cotta}).

A host of existing TTA methods minimize classification entropy through updating affine parameters of batch normalization (BN) layers, including TENT~\citep{tent}, SAR~\citep{sar}, and EATA~\citep{niu2022efficient}. This line of work has been complemented by pseudo-label-based clustering \citep{shot}, non-i.i.d. adaptation \citep{gong2022note}, and continual adaptation \citep{cotta}.

\begin{wraptable}{r}{0.6\linewidth}
    \centering
    \small
    \caption{Comparisons of selected baselines. FTTA: Fully TTA without using source data. HF: hyperparameter free, OF: optimization free, BI: Batch size independence, BF: backpropagation free,  FP: $\#$ of forward propagations.}
    \begin{tabular}{c|c|c|c|c|c|c}
    \toprule
     & FTTA & HF & OF & BI & BF & \# FP \\
    \midrule
    T3A & \cmark & \xmark & \cmark & \xmark & \cmark & 1 \\
    SAR & \cmark & \xmark & \xmark & \xmark & \xmark & 2 \\
    LAME & \cmark & \cmark & \xmark & \xmark & \cmark & 1 \\
    TENT & \cmark & \xmark & \xmark & \xmark & \xmark & 1 \\
    CoTTA & \cmark & \xmark & \xmark & \xmark & \xmark & 35 \\
    FOA\footnotemark & \xmark & \xmark & \xmark & \xmark & \cmark & 2 \\
    Surgeon & \cmark & \xmark & \xmark & \xmark & \xmark & 3 \\
    \acronym{}-Cont. (ours) & \cmark & \xmark & \cmark & \xmark & \cmark & 1 \\
    \acronym{} (ours) & \cmark & \cmark & \cmark & \cmark & \cmark & 1 \\
    \bottomrule
    \end{tabular}
    \label{tab:intro_compare_baselines}
\end{wraptable}
\footnotetext{FOA uses 32 samples to compute statistics from the source data.}

Despite their effectiveness, the above methods are computationally intensive as they rely on optimization through backpropagation \citep{BOTTA}. There are three primary approaches of efficient TTA methods: firstly, reducing the memory footprint by selecting important layers to adapt \citep{ma2025surgeon}, secondly, using custom model architectures \citep{song2023ecotta, hong2023mecta, jia2024tinytta}, and finally, avoiding backpropagation altogether. The final approach is the most efficient, and there are two broad ways to achieve it: $i)$ optimization through only forward passes using iterative optimizers \citep{foa, ebats} and $ii)$ adapt analytically without optimization. For example, adjusting BN statistics \citep{schneider2020improving, nado2020evaluating, su2024unraveling} or classifier adaptation via prototypes or outputs \citep{oftta, t3a, boudiaf2022parameter}. \acronym{} belongs to the latter, most efficient category, avoiding optimizers wholly.

An analytical solution to TTA is hard, due to the stochastic nature of the input noise and the complexity of modern neural architectures. Prior optimization-free methods require large batch sizes along with a large dataset to compute robust statistics, which is essential for their algorithms. Some existing TTA methods, use components which center embeddings at the center of source data, but rely on available source data to do so \citep{foa}. We find a surprising structure in the change in the latent geometry, which we connect through the neural collapse phenomenon \citep{neural_collapse}, to make \acronym{} fully independent of training data. Neural collapse emerges in the terminal phase of training and formalizes the geometric structure of the classifier weights and embeddings (see more in Appendix~\ref{app-nc}). While this has been explored for domain generalization \citep{chen2024neural} and detecting out-of-distribution samples \citep{harun2025controlling}, as per our knowledge, this has not been investigated in the context of TTA methods.
\section{Characterizing input data distribution shift}

% First, we will show useful geometric insight into the vector $h(\tilde{\vx}) - h(\vx)$, and then we will use the properties of neural collapse to approximate it efficiently.

In this section, we observe that input data distribution shifts introduce a structural change in the latent space. 

\subsection{Latent shift affects a few dimensions the most}\label{sec:motivation_few_dim}

We find that the most significant change between the source $h(\vx)$ and corrupted embeddings, $h(\tilde{\vx})$, occurs in just a few embedding dimensions. We calculate the dimension with the largest change in magnitude for each sample when comparing source and corrupted data. In Figure~\ref{fig:motivation} (left), we show that for corruptions such as \emph{contrast}, less than 20 out of 768 dimensions have the largest difference for 95\% of samples. For all corruptions, 80\% of data has less than 50 dimensions as their highest magnitude change, signifying the possible existence of a globally shared shift across samples and classes. Motivated by this we formalize the structural change in latent space and create the foundation for \acronym{}.

\begin{figure}
    \centering
    \begin{subfigure}[b]{0.47\textwidth}
        \includegraphics[width=\linewidth]{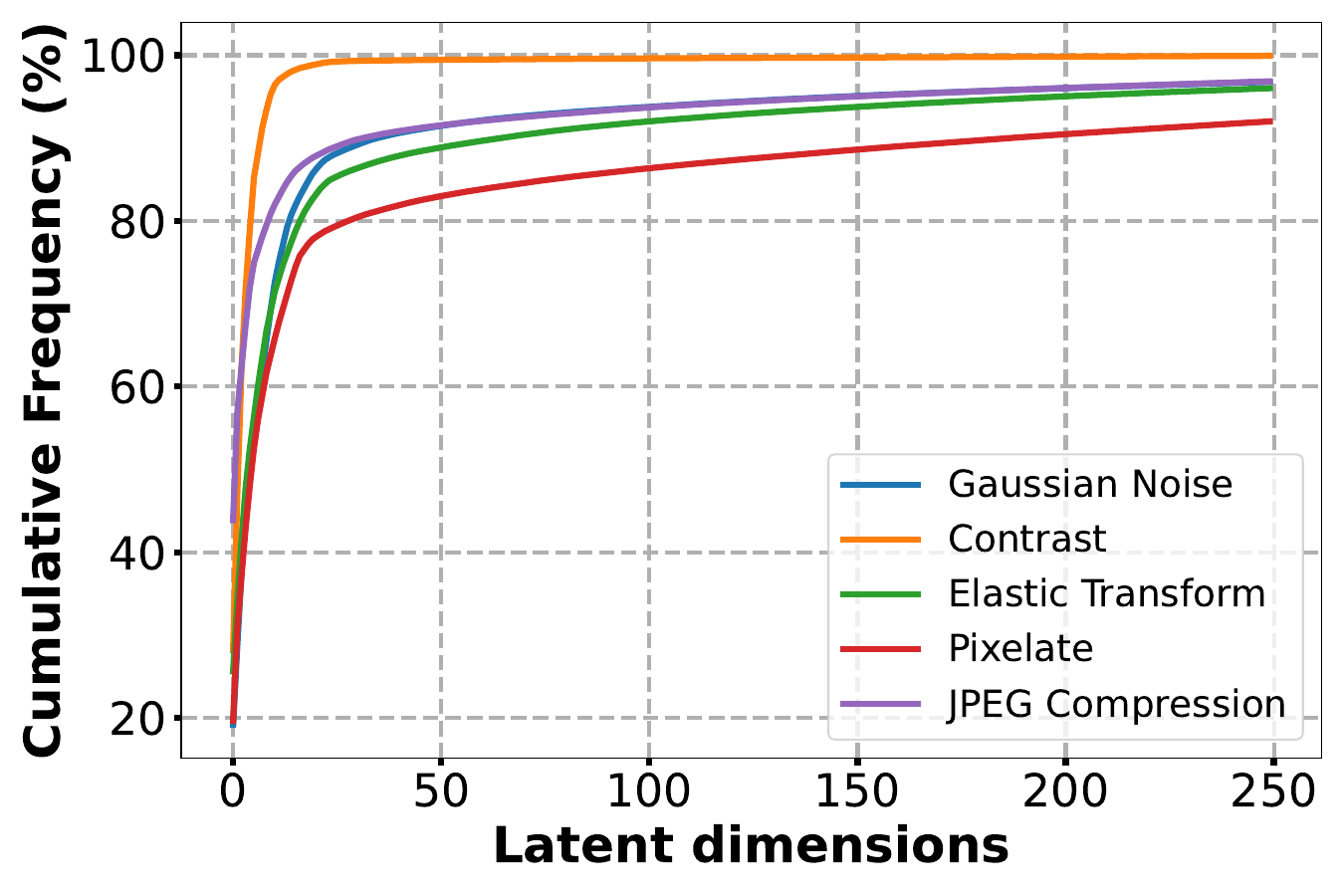}
        \caption{Few high magnitude dimensions form the shift}
        \label{fig:motivation-plot}
    \end{subfigure}
    \hfill
    \begin{subfigure}[b]{0.52\textwidth}
        \centering
        \small
        \begin{tabular}{l|c|c}
            \toprule
            Vector compared with $h(\vx)$ & Cos. & L2 Diff. \\
            \midrule
            $h(\tilde{\vx})$ & $-0.44$ & $4.33$\\
            \hline
            $h(\tilde{\vx})-\mathbf{\Delta}_G$ & $0.51$ & $3.65$ \\
            \hline
            $h(\tilde{\vx})-\mathbf{\Delta}_G-\mathbf{\Delta}_c$ & $0.64$ & $5.47$ \\
            \hline
            $h(\tilde{\vx})-\mathbf{\Delta}_G-\mathbf{\Delta}_c-\mathbf{\delta}$ & $1.00$ & $0.00$ \\
            \hline
            $h(\tilde{\vx})-\widetilde{\vmu}_G$ & $0.49$ & $3.64$ \\
            \bottomrule
        \end{tabular}
        \\[2em]
        \caption{Different shifts improve cosine similarity and L2 norm difference between corrupted and source data}
        \label{fig:motivation-table}
    \end{subfigure}
    \caption{\textbf{(a)} Cumulative frequency of highest magnitude dimension in $h(\vx) - h(\tilde{\vx})$ over 50000 samples (showing 250 out of 768 dimensions). A small number of dimensions account for the largest magnitude of the difference between source and corrupted embeddings. \textbf{(b)} Cosine similarities and difference of L2 norms between source embeddings and (adjusted) corrupted embeddings (i.e. first row contains average of $cos(h(\vx), h(\tilde{\vx}))$ and average of $|\|h(\vx)\| - \|h(\tilde{\vx})\||$). Embeddings are taken from ImageNet-C severity level 5 Gaussian Noise, on ViT-Base model (pre-trained on ImageNet). Values are averaged over 50000 samples.}
    \label{fig:motivation}
\end{figure}

\subsection{Characterizing per-sample latent shift vectors}

Let $\mX_c \subseteq \mX$ be the collection of samples from class $c$. The (global) centroid of the latent representations for samples from $\mX$ is $\vmu_G$ and the class-wise centroid of the samples from class $c$, $\mX_c$ is $\vmu_c$. Let $\tilde{\vmu}_G$ and $\tilde{\vmu}_c$ denote the same quantities for corrupt samples, $\tilde{\mX}$ and $\tilde{\mX}_c$.

% Motivated by our finding that a few dimensions of the embedding are the most significant under a distribution shift, we define the global centroid shift $\mathbf{\Delta}_G = \tilde{\vmu}_G - \vmu_G$. We additionally define the class centroid shift $\mathbf{\Delta}_c = \tilde{\vmu}_c - \vmu_c - \mathbf{\Delta}_G$ (accounting for the global centroid shift). 

Let us define the global centroid shift as $\mathbf{\Delta}_G = \tilde{\vmu}_G - \vmu_G$ and class centroid shift as $\mathbf{\Delta}_c = \tilde{\vmu}_c - \vmu_c - \mathbf{\Delta}_G$, while accounting for the global centroid shift.
Using the global and class centroid shifts, we can define $h(\tilde{\vx}) = h(\vx) + \mathbf{\Delta}_G + \mathbf{\Delta}_c + \mathbf{\delta}$, where $\mathbf{\delta}$ is a sample-specific residual component and $\vx$ is from class $c$.

In Figure~\ref{fig:motivation-plot} we show the cosine similarity between source data and corrupted data, that is aligned globally, class-wise and sample-wise. The largest increase in cosine similarity is caused by the global alignment, signifying the importance of globally aligning the corrupted latent space. Class-wise alignment increases cosine-similarity again, but only marginally when compared to the increase caused by global alignment. We also show the difference in norms between source and aligned corrupted embeddings. We can see that global alignment greatly reduces the difference in norms. Surprisingly, class-wise alignment increases the difference in norms significantly. This shows that even though cosine similarity may be increased by class-wise alignment, it does not guarantee that norms are similar to those of source embeddings.

Note that none of the alignments we tested above are computable in the fully TTA setting, as no source data or target labels are available. Thus, we also show the cosine similarity and difference of norms for a computable alignment in the last row, which centers corrupted data at the origin. The centering increases cosine-similarity nearly as much as the global shift does while delivering the smallest difference in norms to the source embeddings — this centering is the foundation for~\acronym{}.

\subsection{Alignment and accuracy}

There is a strong link between linear classification, the cosine similarity between the classifier weights and the embeddings, and the L2 norm of the embeddings. To further analyze this connection, we draw on the theory of neural collapse~\citep{neural_collapse, deep_neural_collapse}.

\begin{proposition}
Consider a network $f$ with a linear classifier. Assume the model is trained to neural collapse with cross-entropy loss, weight regularization and uniformly distributed classes. Then given $\vw_c$, the classifier weight vector corresponding to class $c$, we have

$$y = \argmax_c \vw_c h(\vx) + b_c \iff y= \argmax_c \|\vw_c\|\|h(\vx)\|\text{cos}(\vw_c,h(\vx)). $$

\label{prop:acc-cos}
\end{proposition}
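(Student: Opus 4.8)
The plan is to collapse both sides of the claimed equivalence to a single argmax and then argue that the bias terms are invisible to it under neural collapse. The first move is the defining identity of cosine similarity, $\vw_c h(\vx) = \|\vw_c\|\,\|h(\vx)\|\cos(\vw_c, h(\vx))$. Since $\|h(\vx)\|\ge 0$ is independent of the class index, it is a common nonnegative scalar that can be pulled out of the maximization over $c$ (and in the degenerate case $h(\vx)=\vzero$ every logit is equal, so both argmaxes return the full label set provided the biases agree — which they will, below). Hence the right-hand side equals $\argmax_c \vw_c h(\vx)$, and the proposition reduces to the statement $\argmax_c\,(\vw_c h(\vx) + b_c) = \argmax_c \vw_c h(\vx)$.

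Next I would invoke the neural collapse hypotheses. Training to neural collapse with cross-entropy loss, weight regularization, and uniformly distributed classes forces the terminal solution into the $\mathcal{NC}$ configuration: class features concentrate at their centroids, the centered centroids and the classifier rows $\vw_c$ form a simplex equiangular tight frame (so in particular $\|\vw_c\|$ is the same for every $c$), and — the ingredient I actually need — the bias vector is a class-independent constant, $b_c = \beta$ for all $c$ (indeed $\beta = 0$ in the balanced regularized unconstrained-features model). I would supply this either by citing the layer-peeled / unconstrained-features analyses of neural collapse referenced in the paper, or by the short symmetry argument: the balanced objective is invariant under relabeling classes and its regularized minimizer is unique up to that symmetry, so all $b_c$ must coincide at the optimum.

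With $b_c = \beta$ constant, adding $\beta$ to every logit is a shift by a class-independent constant and does not change the arg-maximizing set, so $\argmax_c(\vw_c h(\vx) + b_c) = \argmax_c \vw_c h(\vx) = \argmax_c \|\vw_c\|\,\|h(\vx)\|\cos(\vw_c, h(\vx))$, which is exactly the claimed equivalence (and, using equinorm, one may simplify the last expression further to $\argmax_c \cos(\vw_c, h(\vx))$, the purely angular rule, though that stronger form is not needed here). The main obstacle is precisely the bias claim — pinning down that $b_c$ is independent of $c$ at the neural-collapse optimum, and stating cleanly which formalization of ``trained to neural collapse'' is being assumed; everything else is the dot-product identity together with the elementary invariance of argmax under adding a constant and scaling by a positive constant.
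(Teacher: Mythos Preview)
Your proposal is correct and follows essentially the same approach as the paper: both proofs combine the dot-product/cosine identity with the fact that under neural collapse with cross-entropy loss, regularization, and balanced classes the bias vector vanishes (the paper cites this directly rather than arguing by symmetry), so that the $b_c$ term drops out of the argmax. Your treatment is slightly more careful about the degenerate $h(\vx)=\vzero$ case and the role of the common factor $\|h(\vx)\|$, but the core argument is identical.
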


Proposition ~\ref{prop:acc-cos} (proof in Appendix \ref{app-prop-acc-cos}) shows that under neural collapse assumptions with cross-entropy loss, the assigned class is solely determined by the cosine similarity of embeddings and classifier weights. If we have a large cosine similarity between $h(\vx)$ and $h(\tilde{\vx})$, then the similarity between the classifier weights and $h(\tilde{\vx})$ will be similar to that of the classifier weights and $h(\vx)$. Note that $\|h(\vx)\|$ does not influence the assigned class, as it is a constant factor for all classes, but does influence the logit distribution. Thus the difference in norms between $h(\vx)$ and $h(\tilde{\vx})$ will influence the confidence of predictions.

\subsection{Centering at the origin is global alignment}

In Figure~\ref{fig:motivation-table} we show that centering the corrupted embeddings at the origin, significantly improves cosine similarity and difference of norms, compared with source embedding.  This is not a given though; a neural network's embeddings could be centered at any point that is not the origin.

\begin{proposition}
    Consider a network $f$ exhibiting neural collapse and trained with cross-entropy loss and regularization. Under the assumption of the unconstrained features model \citep{mixon_neural_2022} (treating $h(\vx)$ as a freely optimizable variable) and balanced classes, we have $\Delta_G = \tilde{\vmu}_G - \vmu_G = \tilde{\vmu}_G$.\footnote{We also provide the results for MSE loss in Appendix \ref{app-neo-mse}.}
\label{prop:glob-al}
\end{proposition}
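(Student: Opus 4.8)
The entire content of the statement is the claim that $\vmu_G = 0$; granting this, the displayed identity $\Delta_G = \tilde{\vmu}_G - \vmu_G = \tilde{\vmu}_G$ is immediate. So the plan is to show that at a global minimizer of the regularized cross-entropy objective in the unconstrained features model with balanced classes, the feature centroid lies at the origin.

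First I would write the unconstrained-features objective explicitly: over free features $\{h_i\}$ (one balanced batch per class), a classifier $\mW$ with rows $\vw_c$, and a bias $\vb$ — matching the $+\,b_c$ term of Proposition~\ref{prop:acc-cos} and treating $\vb$ as unregularized, the standard convention — minimize $\frac{1}{N}\sum_i \text{CE}(\mW h_i + \vb,\, c(i)) + \frac{\lambda_W}{2}\|\mW\|_F^2 + \frac{\lambda_H}{2}\sum_i \|h_i\|^2$ with $\lambda_W,\lambda_H>0$. The key step is a translation symmetry: for any $\vv$, the substitution $h_i \mapsto h_i + \vv$, $\vb \mapsto \vb - \mW\vv$ leaves every cross-entropy term and $\|\mW\|_F^2$ untouched and changes only the feature regularizer to $\frac{\lambda_H}{2}\sum_i\|h_i+\vv\|^2 = \text{const} + \lambda_H\langle\textstyle\sum_i h_i,\vv\rangle + \frac{\lambda_H N}{2}\|\vv\|^2$, a strictly convex quadratic in $\vv$ minimized at $\vv = -\frac{1}{N}\sum_i h_i$. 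Optimality of the original point along this one-parameter family forces $\frac{1}{N}\sum_i h_i = 0$, i.e. $\vmu_G = 0$. Invoking the neural-collapse hypothesis (within-class collapse, $h_i = \vmu_{c(i)}$) together with balanced classes then also gives $\vmu_G = \frac{1}{C}\sum_{c=1}^{C}\vmu_c$, so the same conclusion can be read off from the fact that the optimal class-mean configuration is a (centered) simplex ETF whose rows sum to zero; I would present the translation argument as the self-contained route and mention the ETF argument as a cross-check consistent with the assumptions already used for Proposition~\ref{prop:acc-cos}.

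I expect the bias term to be the main obstacle, or at least the place that needs care. The translation argument requires the head to carry a free, unregularized bias; with a regularized bias or a bias-free head one instead notes that class symmetry forces the optimal bias to be a constant vector $b\mathbf{1}$ (which leaves the softmax unchanged), absorbs it, and reruns the same translation step, or else appeals directly to the closed-form ETF minimizer of the unconstrained features model~\citep{mixon_neural_2022}. A secondary point worth one sentence is that some formulations state the ETF property only for the centered class means $\vmu_c - \vmu_G$; the translation argument shows this centering is vacuous under the present assumptions, so no generality is lost, and $\Delta_G = \tilde{\vmu}_G$ follows.
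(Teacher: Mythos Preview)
Your proposal is correct. The paper's own proof is a single-sentence citation: it simply invokes prior results \citep{hong_neural_2024, zhu_geometric_analysis} establishing $\vmu_G = \vzero_d$ for the unconstrained-features cross-entropy model and reads off $\Delta_G = \tilde{\vmu}_G$. You instead supply a self-contained argument via the translation symmetry $h_i \mapsto h_i + \vv$, $\vb \mapsto \vb - \mW\vv$, observing that only the feature regularizer changes and is strictly minimized at $\vv = -\vmu_G$, forcing $\vmu_G = 0$ at any global optimum. This is a genuinely different route in presentation: the paper outsources the content to the literature, while you reprove it from first principles. What your approach buys is transparency about exactly where each hypothesis enters---in particular, your careful handling of the bias (unregularized vs.\ regularized vs.\ absent) makes explicit a point the paper leaves entirely to the cited works, and your cross-check via the centered simplex ETF ties the result back to the neural-collapse geometry. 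What the paper's approach buys is brevity and a pointer to the broader analyses in those references. Your argument is essentially one of the standard proofs underlying the cited results, so there is no tension between the two.
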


%The proof follows from the fact that when a model is trained on balanced training data with under cross-entropy loss and regularization, $\vmu_G = \vzero_d$ \citep{hong_neural_2024}\citep{zhu_geometric_analysis}.

\begin{proof}
We define $\Delta_G = \tilde{\vmu}_G - \vmu_G$, as the difference between the mean embeddings of clean and corrupted data. It's been shown that under cross-entropy loss $\vmu_G = \vzero_d$ \citep{hong_neural_2024, zhu_geometric_analysis}. From this follows that $\Delta_G = \tilde{\vmu}_G - \vmu_G = \tilde{\vmu}_G$.
\end{proof}

Proposition~\ref{prop:glob-al} shows that under neural collapse assumptions, the global alignment is equivalent to centering at the origin. This underpins our empirical findings shown in Figure \ref{fig:motivation-table}, where we show that shifting $h(\tilde{\vx})$ by $\Delta_G$ and by $\tilde{\vmu}_G$ result in nearly identical cosine similarities and L2 differences when comparing with $h(\vx)$. Using the theoretical foundation we developed, we now propose~\acronym{}.

\subsection{\acronym{}}

Having established that $h(\tilde{\vx}) - \tilde{\vmu}_G$ has desirable properties when fed into the linear classifier, we present \acronym{}, which works on this exact principle. The global mean of corrupted embeddings is updated with each new batch of data, weighting all samples it sees equally, under the assumption that all test-time samples come from the same distribution. This global mean of corrupted embedding is then used to center the embeddings at the origin.

\begin{wrapfigure}{r}{0.5\textwidth} % r=right, l=left, 0.5\textwidth=half page
  \vspace{-10pt} % optional: tighten top spacing
  \begin{minipage}{\linewidth}
    \begin{algorithm}[H]
    \caption{NEO}
    \label{alg:feature_extraction}
    \begin{algorithmic}[1]
    \Require Dataset $\tilde{\mX}$, feature extractor $h$, classifier $\theta$
    \State $\tilde{\vmu}_G \gets \vzero_{d}$, $ i \gets 0$
    \For{each batch $\mB \in \R^{b \times m} \text{ in } \tilde{\mX}$}
        \State $i \gets i + 1$
        \State $\tilde{\vmu}_G \gets (i-1)/i \cdot\tilde{\vmu}_G + 1/i \cdot \text{Avg}(h(\mB))$
        \State $\vy = \theta(h(\mB) - \tilde{\vmu}_G \vone_b^T)$
    \EndFor
    \end{algorithmic}
    \end{algorithm}
  \end{minipage}
\end{wrapfigure}

For line 4 in the pseudo-code, the $\text{Avg}$ operator is used to turn the matrix of embedding vectors returned by the encoder function $h$ into a single embedding vector averaged across $b$ samples in the batch. The feature extractor and classifier are slightly modified from the previous sections, as they are able to process batches of data. It is easy to see that no significant computational requirements are added by \acronym{}. The only operations involving vectors or matrices are averaging, addition, scalar multiplication and a single expansion of a vector into a matrix.

\acronym{} is robust in various ways. Various existing TTA methods try to approximate shifted class centers \citep{t3a}, which results in few samples per class-center, and unreliable estimates. \acronym{} approximates just one global shift and is able to use every sample to estimate this shift, making it a very robust estimate. Moreover, methods that rely on pseudo-labeling \citep{shot} can cause a reduction in accuracy when pseudo-labels are inaccurate \citep{cotta}. \acronym{} does not rely on labels, thus adding another layer of robustness. Furthermore, the weights of $h$ stay unchanged, preventing catastrophic forgetting, that may occur in other methods \citep{niu2022efficient}. Lastly, \acronym{} does not depend on the size of batches, just the amount of data it sees throughout adaptation,taking any samples into account equally and independently, making it reliable in settings where large batches are unavailable.

\begin{comment}
\am{do we need to state this as a proposition. I think we can put it in a sentence and save space.}
\begin{proposition}
Let $\tilde{\mX} = \{\tilde{\vx}_1,\dots,\tilde{\vx}_n\}$ denote the unlabeled adaptation dataset. For any batch size $b \in \{1,\dots,n\}$, any partition of $\tilde{\mX}$ into batches of size at most $b$, and any permutation of the batches or of the samples within each batch, Algorithm~\ref{alg:feature_extraction} returns the same output $\tilde{\vmu}_G$. In other words, $\tilde{\vmu}_G$ is invariant to both the batch size and any reordering of $\tilde{\mX}$.
\end{proposition}

 It is easy to see the above, as the $\mathtt{Avg}(.)$ does not depend on the ordering of the samples and the calculation at line 4 makes $\tilde{\vmu}_G$ independent of $b$.

\paragraph{Data efficiency.} \ag{Argue that T3A will need more samples simply because they do class-wise averages. So, they kind of need $c\times$ samples than us to achieve the same `stability' in averages.} 
\end{comment}

Since \acronym{} weights all samples equally, it may be unsuitable for scenarios where the distribution shift changes. Thus we propose \acronym{}-Continual, which uses the same equation to realign the test-time features to the source features, but uses an exponential moving average, controlled by a hyper-parameter $\alpha$, to keep track of the test-time feature simplex mean, making it suitable for continual adaptation problems. We simply replace the update rule from \acronym{} with:

$$\widetilde{\vmu}_G \gets (1-\alpha) \cdot \widetilde{\vmu}_G + \alpha \cdot \text{Avg}(h(\mB)).$$

\section{Experiments}

In this section, we first introduce baseline TTA methods, datasets, models and devices, we use in our experiments. We then perform multiple experiments to show the effectiveness of our proposed method and also why and how it works. Even though our motivation has made theoretical assumptions such as the existence of neural collapse, we do not require this for our experiments and use regular, publicly available models and datasets.

\paragraph{Baseline TTA Methods.} We compare our method with popular adaptation methods: T3A \citep{t3a}, SAR \citep{sar}, LAME \citep{boudiaf2022parameter}, TENT \citep{tent}, CoTTA \citep{cotta}, FOA \citep{foa} and Surgeon \citep{ma2025surgeon}. The hyperparameters used are taken directly from their original paper, or slightly adjusted when default hyperparameters cause catastrophic forgetting on our experiment setup. We use a batch size of 64 for all experiments, except those investigating adaptation sample size, class distribution and resource utilization.

\paragraph{Datasets.} We evaluate on ImageNet-C (50 samples $\times$ 1000 classes $\times$ 15 corruption types) \citep{cifar10_imagenet_c}, CIFAR-10-C (1000 samples $\times$ 10 classes $\times$ 15 corruption types) \citep{cifar10_imagenet_c}, ImageNet-Rendition (30,000 samples, 200 classes) \citep{imagenet_r} and ImageNet-Sketch (50 samples $\times$ 1000 classes) \citep{imagenet_sketch}.

\paragraph{Models.} In our experiments we use three different sizes of Vision Transformer (ViT) \citep{vit}: ViT-S, ViT-Base and ViT-L, which have 22, 86, and 307 million parameters and embedding dimensions of 384, 768 and 1024 respectively. We use versions finetuned on ImageNet \citep{imagenet} or CIFAR-10 \citep{cifar10}.

\paragraph{Metrics.} We evaluate accuracy in two ways. Firstly, the accuracy achieved on samples used during the adaptation process. Secondly, the accuracy achieved on data not used for adaptation, after finishing the adaptation process. The type of accuracy used is stated in experiment descriptions. We use expected calibration error (ECE) \citep{ece} to evaluate the trustworthiness of the model (i.e. does 90\% confidence translate to 90\% accuracy), where a lower ECE is better.

\paragraph{Devices.} In our experiments, we evaluate on a Raspberry Pi 4B (8GB) and an NVIDIA Jetson Orin Nano (8GB), comparing TTA methods in terms of memory consumption and execution time. We also evaluate on a cloud server with an AMD Instinct MI300X (192GB VRAM).

For more details on implementation, including code, see Appendix \ref{app-repro}.

\subsection{Results}
\paragraph{\acronym{} improves accuracy across corruption types.} Table~\ref{tab:corr_comparison8} shows accuracy results over 15 corruption types in ImageNet-C for \acronym{} compared to 7 baselines. Under 12 corruptions \acronym{} has the highest accuracy and the second highest in the remaining 3, only beaten by Surgeon, which uses considerably more runtime and memory. On average \acronym{} increases performance by 3.6\% and in the contrast corruption \acronym{} can nearly double accuracy compared to using no adaptation. It is also notable that \acronym{} does not reduce accuracy under any corruption type. It has been shown that TTA is often sensitive to hyperparameters and can suffer from catastrophic forgetting, thus delivering near-zero accuracy \citep{sar} \citep{Kojima2023}. \acronym{} is hyperparameter-free and thus extremely robust compared to other TTA methods.

\begin{table}
\centering
\caption{Accuracy (\%) with 95\% confidence intervals across different corruption types and adaptation methods with ViT-Base on ImageNet-C. Accuracy is calculated on the 512 samples used to adapt. The highest accuracy per corruption type is in bold, and the second-highest is underlined.}
\resizebox{\textwidth}{!}{%
\begin{tabular}{lccccccccc}
\toprule
\textbf{Corruption} & \textbf{No Adapt} & \textbf{T3A} & \textbf{SAR} & \textbf{LAME} & \textbf{TENT} & \textbf{CoTTA} & \textbf{FOA} & \textbf{Surgeon} & \textbf{\acronym{}}\\
\midrule
\multicolumn{10}{l}{\textit{Noise}} \\
Gaussian   & 57.0 (0.5) & 56.7 (1.7) & 57.0 (1.7) & 56.5 (1.7) & 57.2 (1.6) & 57.0 (1.6) & 57.2 (0.6) & \textbf{58.7 (0.6)} & \underline{57.7 (0.5)} \\
Shot       & 56.9 (0.5) & 57.0 (1.0) & 57.3 (0.9) & 56.8 (1.0) & 57.5 (1.0) & 57.1 (1.1) & 58.6 (0.4) & \textbf{58.8 (0.5)} & \underline{57.6 (0.5)} \\
Impulse    & 57.4 (0.4) & 57.0 (1.0) & 57.5 (1.1) & 56.7 (0.9) & 57.6 (1.0) & 57.7 (1.1) & 57.7 (0.4) & \textbf{58.9 (0.6)} & \underline{58.1 (0.4)} \\
\midrule
\multicolumn{10}{l}{\textit{Blur}} \\
Defocus    & 46.9 (0.5) & 47.5 (1.2) & 47.5 (1.2) & 47.1 (1.1) & 48.0 (1.2) & 48.2 (1.2) & \underline{49.2 (0.4)} & 49.1 (0.8) & \textbf{49.8 (0.5)} \\
Glass      & 35.3 (0.5) & 35.9 (0.9) & 36.3 (0.9) & 34.9 (1.0) & \underline{36.8 (0.8)} & 36.0 (1.0) & \underline{36.8 (0.5)} & \underline{36.8 (0.7)} & \textbf{37.9 (0.4)} \\
Motion     & 53.3 (0.4) & 53.1 (1.1) & 53.5 (1.1) & 52.9 (1.0) & 54.0 (1.0) & 53.3 (1.1) & 54.4 (0.4) & \underline{54.8 (0.6)} & \textbf{55.0 (0.4)} \\
Zoom       & 44.8 (0.5) & 45.3 (1.4) & 46.3 (1.5) & 44.7 (1.3) & 46.4 (1.4) & 45.0 (1.4) & \underline{47.1 (0.5)} & 45.7 (0.6) & \textbf{47.5 (0.5)} \\
\midrule
\multicolumn{10}{l}{\textit{Weather}} \\
Snow       & 62.2 (0.5) & 62.7 (1.7) & 62.6 (1.5) & 58.5 (1.6) & 63.0 (1.6) & 63.2 (1.4) & \underline{64.3 (0.4)} & 62.2 (0.7) & \textbf{64.6 (0.5)} \\
Frost      & 62.6 (0.5) & 63.3 (1.5) & 63.3 (1.5) & 62.2 (1.5) & 63.3 (1.4) & 63.0 (1.4) & \underline{63.9 (0.4)} & 61.7 (0.6) & \textbf{65.0 (0.5)} \\
Fog        & 65.8 (0.4) & 62.9 (1.2) & 65.4 (1.1) & 62.0 (1.0) & 62.4 (1.3) & 64.8 (1.0) & \underline{70.7 (0.4)} & 63.0 (0.6) & \textbf{71.2 (0.4)} \\
Brightness & 77.9 (0.4) & 78.1 (1.1) & 78.0 (1.1) & 78.1 (1.2) & 78.2 (1.2) & 77.9 (0.8) & \underline{78.2 (0.4)} & 78.1 (0.5) & \textbf{78.3 (0.4)} \\
\midrule
\multicolumn{10}{l}{\textit{Digital}} \\
Contrast   & 32.6 (0.4) & 27.5 (1.2) & 34.0 (1.3) & 24.9 (1.4) & 36.9 (1.0) & 33.2 (1.0) & \underline{54.5 (0.5)} & 31.7 (1.5) & \textbf{58.2 (0.4)} \\
Elastic    & 45.8 (0.4) & 45.8 (0.9) & 45.7 (0.9) & 44.4 (0.7) & 46.7 (0.8) & 46.3 (1.2) & \underline{49.6 (0.4)} & 46.2 (0.7) & \textbf{49.8 (0.4)} \\
Pixelate   & 67.5 (0.4) & 67.4 (1.2) & 67.5 (1.1) & 67.1 (1.1) & \underline{68.0 (1.1)} & 67.8 (1.1) & 67.2 (0.4) & 67.6 (0.7) & \textbf{68.2 (0.4)} \\
JPEG       & 67.9 (0.4) & 67.3 (1.5) & 67.3 (1.5) & 66.9 (1.4) & 67.6 (1.5) & 67.8 (1.7) & 68.6 (0.5) & \underline{68.9 (0.7)} & \textbf{69.1 (0.4)} \\
\midrule
\textbf{ImageNet-C} & 55.6 (0.4) & 55.2 (1.3) & 55.9 (1.2) & 54.2 (1.2) & 56.3 (1.2) & 55.9 (1.2) & \underline{58.4 (0.4)} & 56.1 (0.7) & \textbf{59.2 (0.4)} \\
\midrule
\textbf{CIFAR-10-C} & 80.4 (2.8) & 80.1 (2.6) & 80.6 (2.8) & 79.8 (2.9) & 81.3 (3.2) & 80.6 (2.0) & 80.9 (2.8) & \textbf{82.7 (1.7)} & \underline{82.4 (2.2)} \\
\midrule
\textbf{ImageNet-R} & 59.2 (1.1) & 58.7 (1.2) & 59.3 (1.1) & 58.5 (1.1) & 59.4 (1.1) & 59.2 (1.2) & \underline{60.2 (1.4)} & \underline{60.2 (1.6)} & \textbf{60.3 (1.0)} \\
\midrule
\textbf{ImageNet-S} & 45.4 (1.4) & 45.5 (1.4) & 45.5 (1.4) & 45.0 (1.3) & 45.7 (1.4) & 45.2 (1.6) & 46.3 (1.7) & \underline{47.0 (1.7)} & \textbf{47.2 (1.4)} \\
\bottomrule
\end{tabular}%
}
\label{tab:corr_comparison8}
\end{table}

\paragraph{\acronym{} improves accuracy and ECE across models and datasets.} In Figure~\ref{fig:acc_ece_bar} we show that \acronym{} consistently improves accuracy across ImageNet-C, CIFAR-10-C, ImageNet-Sketch and ImageNet-Rendition, with an over 4\% accuracy increase on Sketch. Across all ViT sizes~\acronym{} improves accuracy, with a trend of ViT-B consistently gaining less accuracy than ViT-S and ViT-L. When comparing with no adaptation, there is an improvement or match in ECE for 9/12 settings we evaluated. On ImageNet-C with ViT-S ~\acronym{} achieves lower ECE than all compared TTA methods. This shows that \acronym{} can be used to not only efficiently improve model performance, but also trustworthiness, by producing well-calibrated predictions.

\begin{figure}[t]
    \centering
    % --- Subfigure a ---
    \begin{subfigure}{0.32\textwidth}
        \centering
        \includegraphics[width=\linewidth]{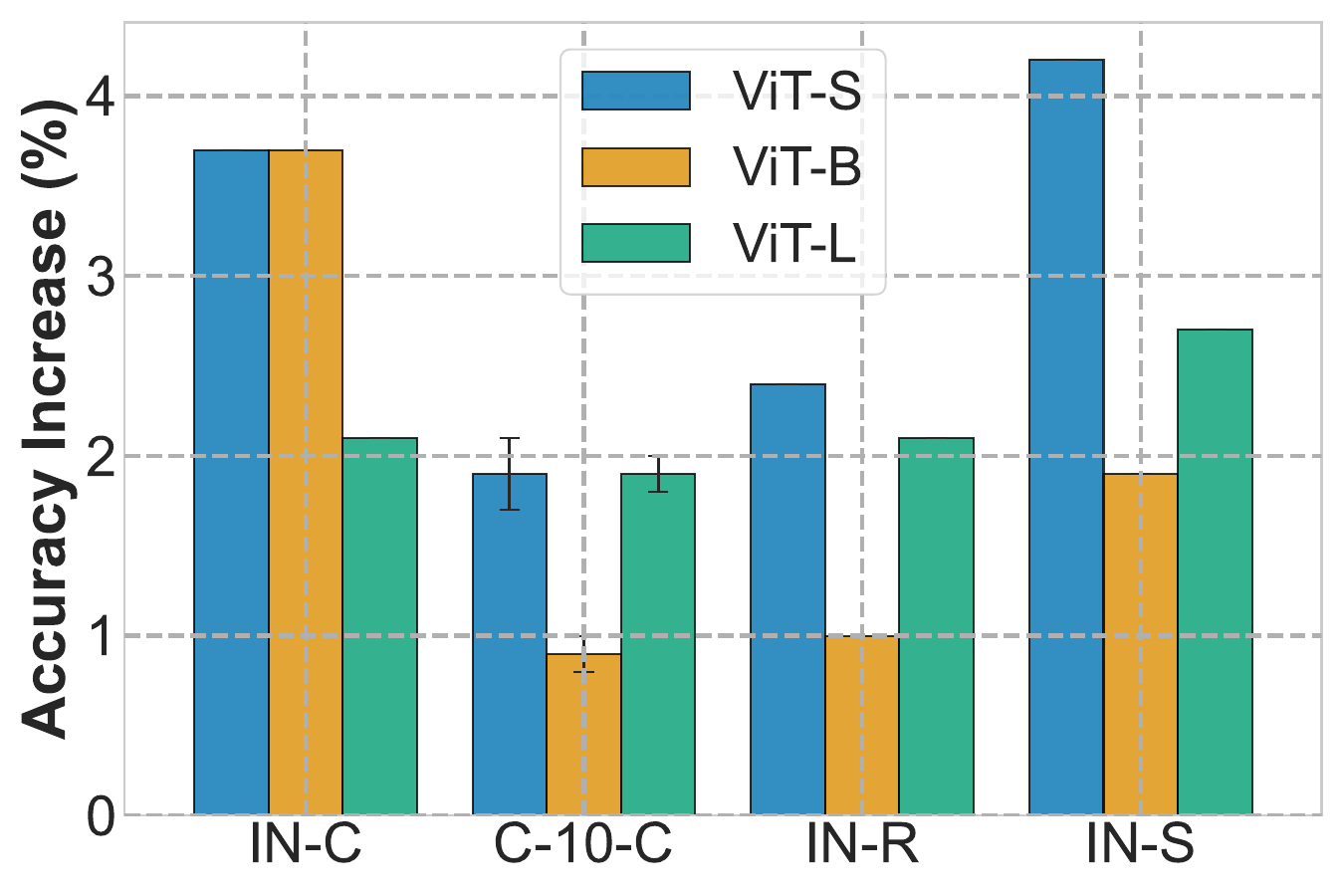}
        \subcaption{\acronym{} accuracy}
        \label{fig:neo-acc}
    \end{subfigure}
    % --- Subfigure b ---
    \begin{subfigure}{0.32\textwidth}
        \centering
        \includegraphics[width=\linewidth]{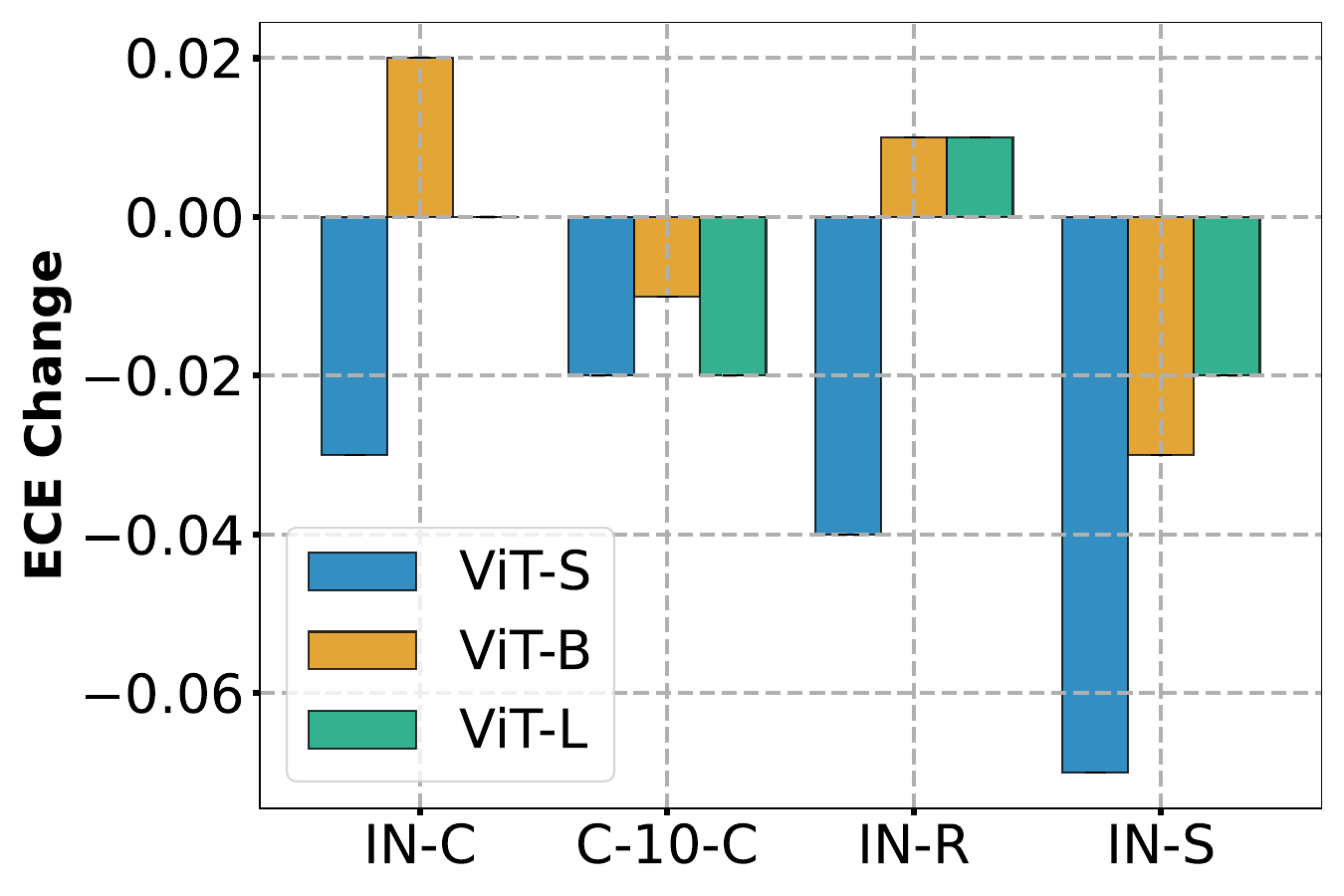}
        \subcaption{\acronym{} ECE}
        \label{fig:neo-ece}
    \end{subfigure}
    % --- Subfigure c ---
    \begin{subfigure}{0.32\textwidth}
        \centering
        \includegraphics[width=\linewidth]{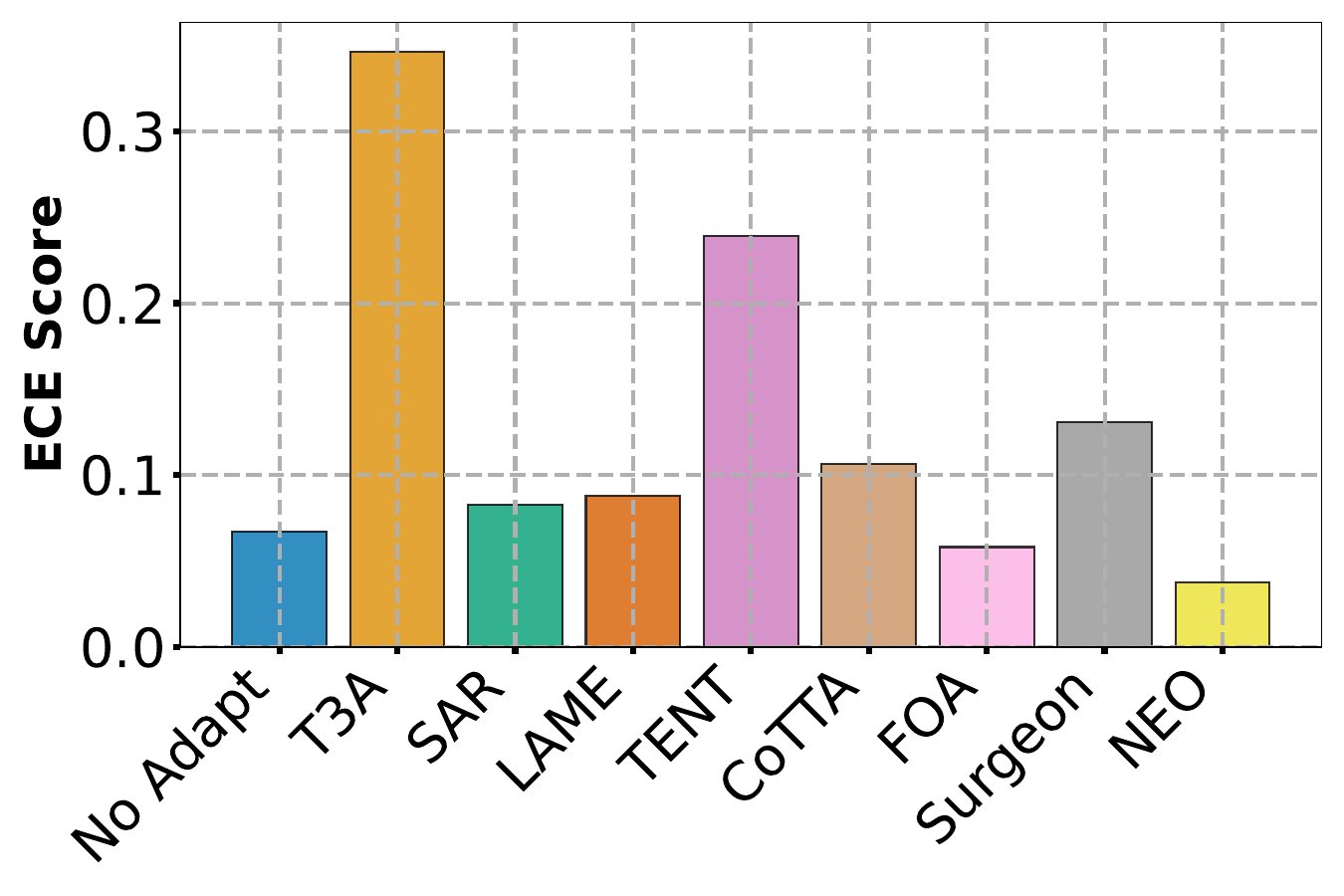}
        \subcaption{ViT-S ImageNet-C ECE}
        \label{fig:vit-ins-ece}
    \end{subfigure}
    \caption{\textbf{(a)} Accuracy increase (\%) and \textbf{(b)} ECE change compared to no-adaptation for ViT-S, ViT-B and ViT-L on ImageNet-C, CIFAR-10-C, ImageNet-Sketch and ImageNet-Rendition. Accuracy is taken for the whole dataset and no confidence intervals signify a 95\% confidence interval of less than 0.05 for accuracy and less than 0.005 for ECE. \textbf{(c)}  ECE scores for ViT-S on ImageNet-C averaged over the whole dataset, 15 corruptions and multiple runs.}
    \label{fig:acc_ece_bar}
\end{figure}

\begin{figure}[t]
    \centering
    % --- Subfigure a ---
    \begin{subfigure}{0.32\textwidth}
        \centering
        \includegraphics[width=\linewidth]{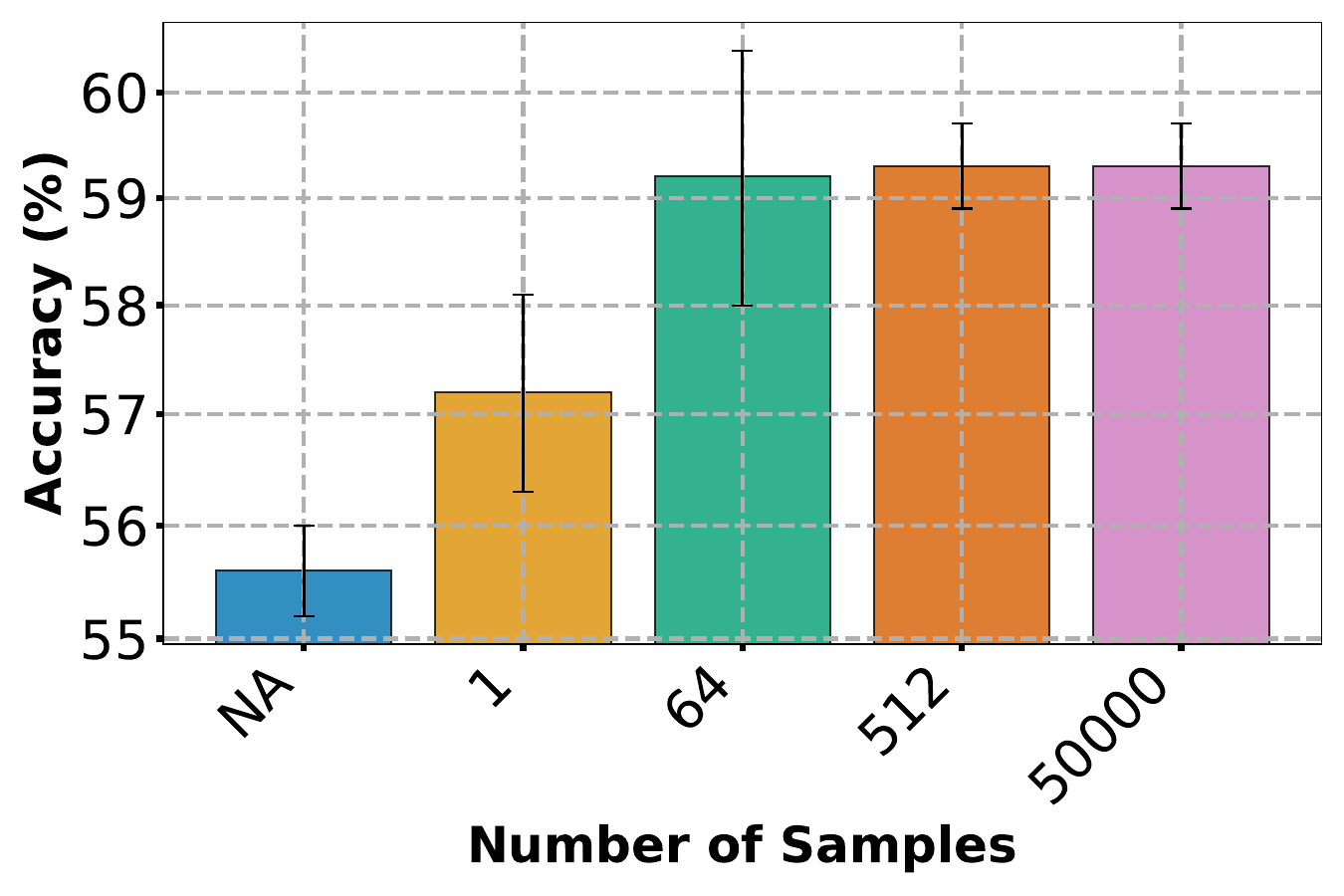}
        \subcaption{\acronym{} adaptation size}
        \label{fig:neo-adap-size}
    \end{subfigure}
    % --- Subfigure b ---
    \begin{subfigure}{0.32\textwidth}
        \centering
        \includegraphics[width=\linewidth]{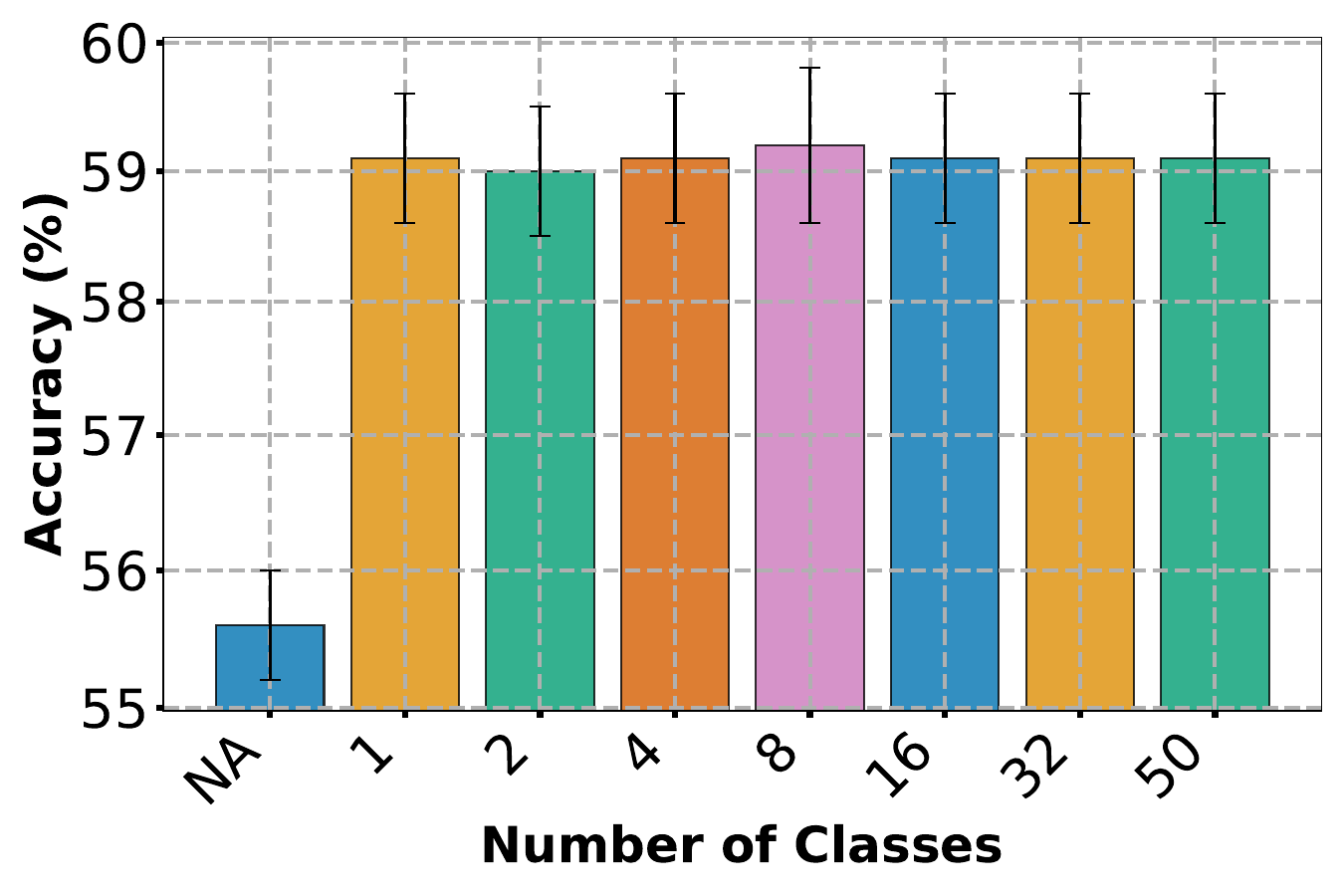}
        \subcaption{\acronym{} adaptation classes}
        \label{fig:neo-adap-class}
    \end{subfigure}
    % --- Subfigure c ---
    \begin{subfigure}{0.32\textwidth}
        \centering
        \includegraphics[width=\linewidth]{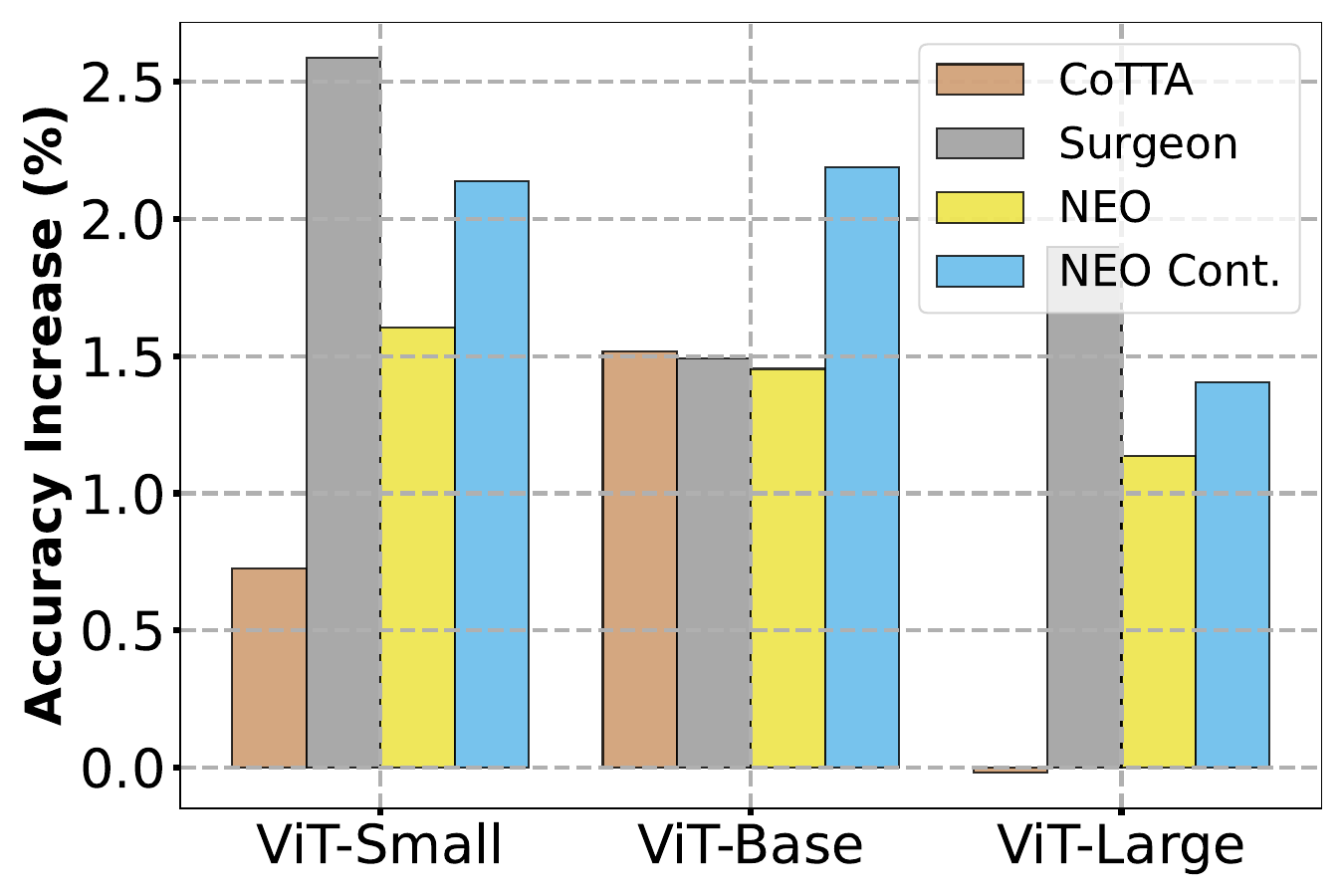}
        \subcaption{Continual Adaptation}
        \label{fig:cont-adap}
    \end{subfigure}
    \caption{\textbf{(a)} Accuracy (\%) for ViT-B on ImageNet-C under varying number of samples to adapt with. \textbf{(b)} Accuracy (\%) for ViT-B on ImageNet-C under varying number of classes to adapt with (50 samples used to adapt in total). Accuracy is calculated on samples not used for adaptation except for 50,000 samples. \textbf{(c)} Accuracy increase (\%) for continual adaptation, adapting on 15 randomly ordered corruptions from ImageNet-C with 512 samples from each.}
    \label{fig:acc_class_sample_cont}
\end{figure}

\paragraph{\acronym{} can adapt with just 1 sample or class.}

In some settings, samples to adapt with may be very limited, and not all classes may appear during adaptation. In Figure~\ref{fig:neo-adap-size}, we show that adapting with just 1 sample is enough to improve accuracy by 1.5\% on ImageNet-C. Adapting with just 1 batch (64 samples) is so effective that further samples improve accuracy only marginally. Figure~\ref{fig:neo-adap-class} shows that adapting with data from just 1 class improves accuracy by over 3\% on the remaining 999 classes in ImageNet-C. Adapting with 50 classes does not improve accuracy noticeably. Being able to adapt with few samples and classes shows that~\acronym{} is extremely robust and efficient.

\paragraph{\acronym{} is effective for continual adaptation.} We show the accuracy of the continual version of \acronym{} in Figure~\ref{fig:cont-adap}, where we present average accuracies while adapting on randomly ordered sequences of 15 corruptions from ImageNet-C. We use 512 samples from each corruption and the model is not informed when a corruption type is changed. \acronym{}-Continual is more accurate than CoTTA and is only outperformed by Surgeon. Both of these methods use far more resources than \acronym{}-Continual.

\paragraph{Understanding $\tilde{\vmu}_G$ across corruptions.} To understand the behavior of $\tilde{\vmu}_G$ across corruption types, we $i)$ adapted using a single domain and tested on the remaining 14 in ImageNet-C as shown in Figure~\ref{fig:cross-corr-acc}, and $ii)$ measured cosine similarity of $\tilde{\vmu}_G$ between domains shown in Figure~\ref{fig:cross-corr-cos}. We discover that $\tilde{\vmu}_G$ has high accuracies and cosine similarities between certain corruptions such as noises or blurs. This has practical relevance as it can save resources needed to adapt to every domain observed.

\begin{figure}
    \centering
    \begin{subfigure}{0.5\linewidth}
        \includegraphics[width=\linewidth]{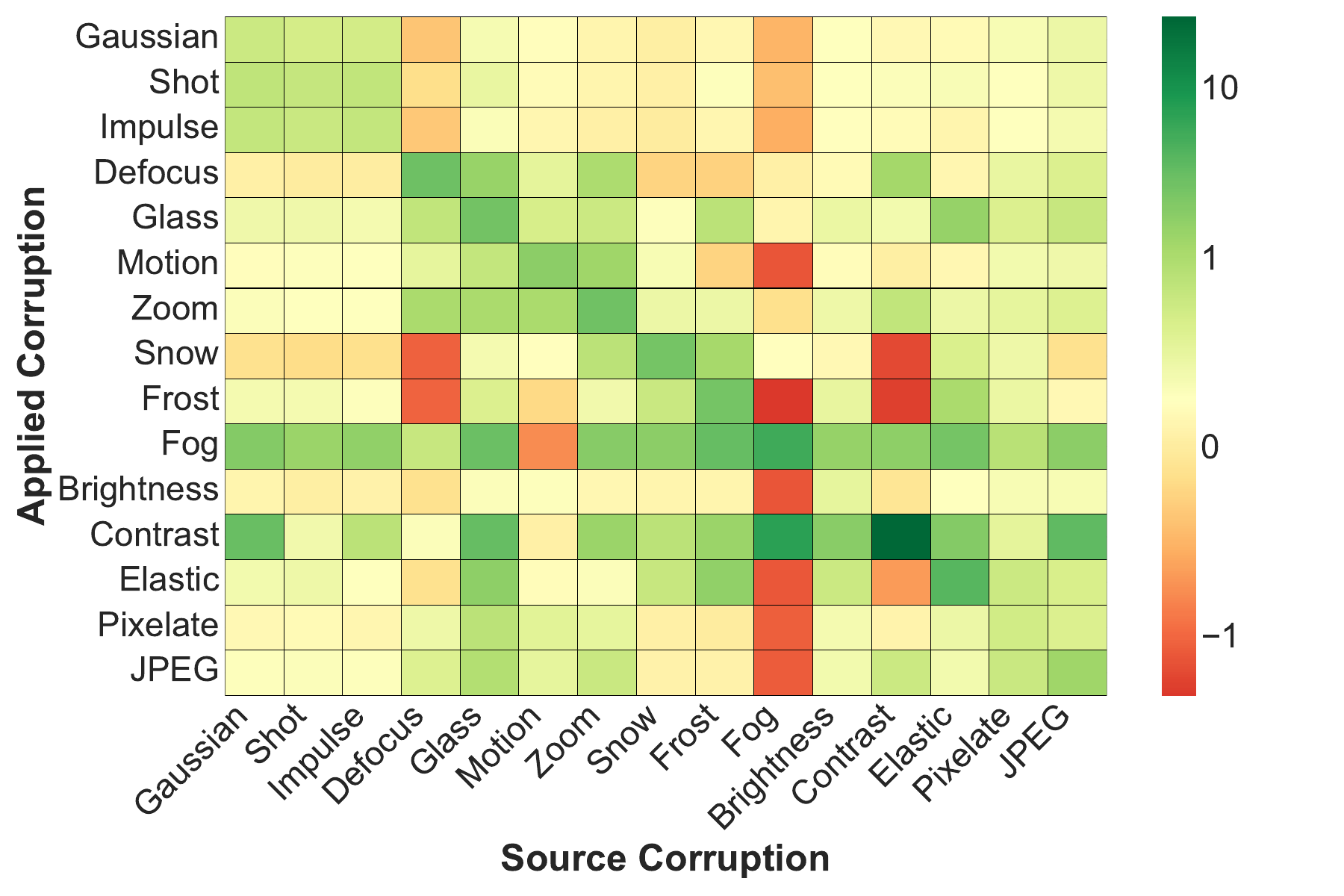}
        \caption{Accuracy change between corruptions}
        \label{fig:cross-corr-acc}
    \end{subfigure}%
    \begin{subfigure}{0.5\linewidth}
        \includegraphics[width=\linewidth]{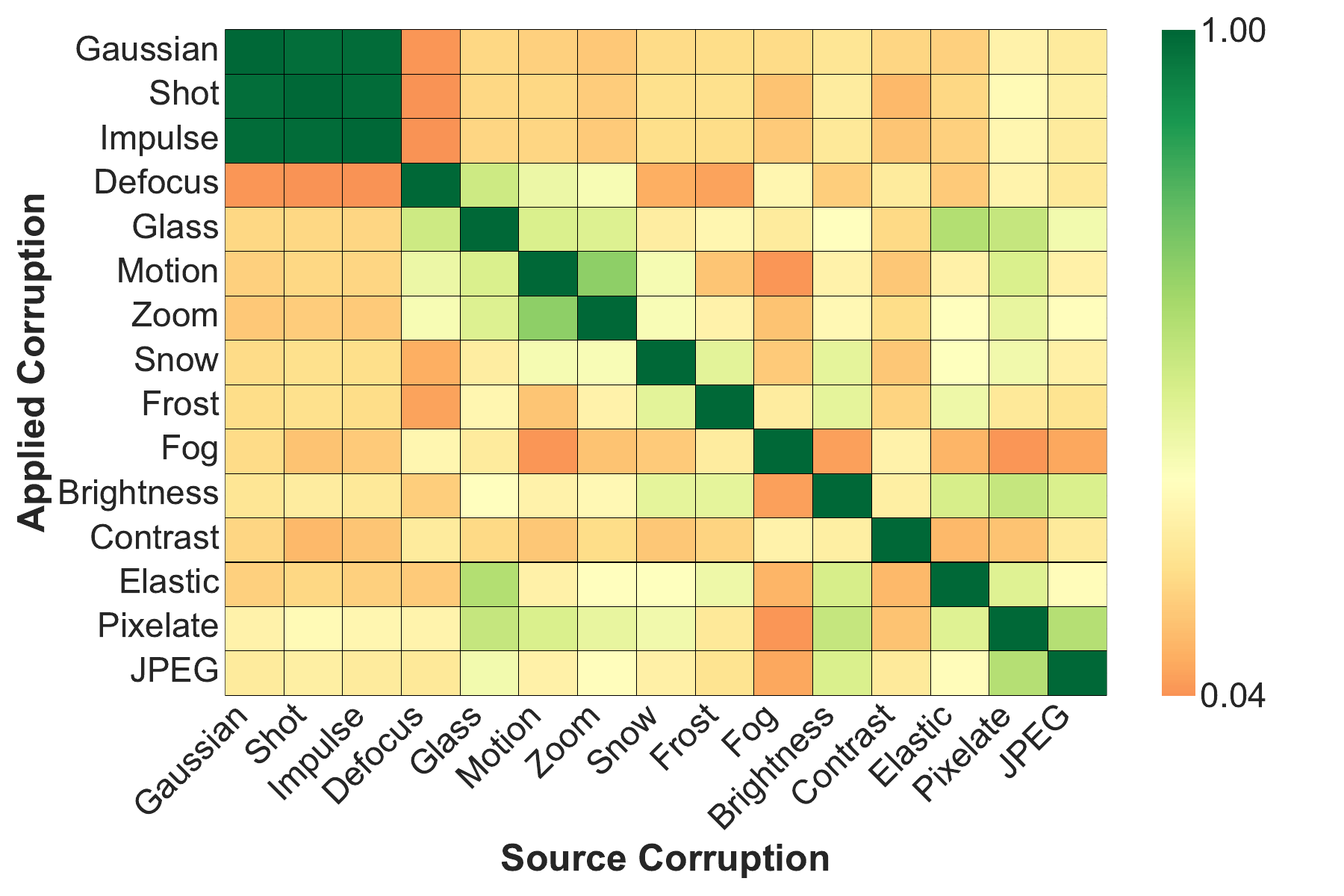}
        \caption{Cosine similarities between corruptions}
        \label{fig:cross-corr-cos}
    \end{subfigure}
    \caption{\textbf{(a)} Accuracy change using $\tilde{\vmu}_G$ calculated from "Source Corruption" and adapting to samples from "Applied Corruption" \textbf{(b)} Cosine similarity between $\tilde{\vmu}_G$ calculated from "Source Corruption" and "Applied Corruption".}
    \label{fig:cross_corruption_matrix}
\end{figure}

\paragraph{\acronym{} is as efficient as no adaptation.} While Figure~\ref{fig:neo-perf-amd} shows that~\acronym{} adds no additional compute in a larger cloud environment, in Figure~\ref{fig:compute_memory} we show efficiency results when running the methods on an edge device, Nvidia Jetson Orin Nano. \acronym{} is the only method that does not increase the time used for inference, as well as the peak memory usage. Results on Raspberry Pi follow similar patterns and are in Appendix \ref{app-resource-rasp}.

\begin{figure}[t]
    \centering 
    \begin{subfigure}{0.35\textwidth}
        \centering
        \includegraphics[width=\linewidth]{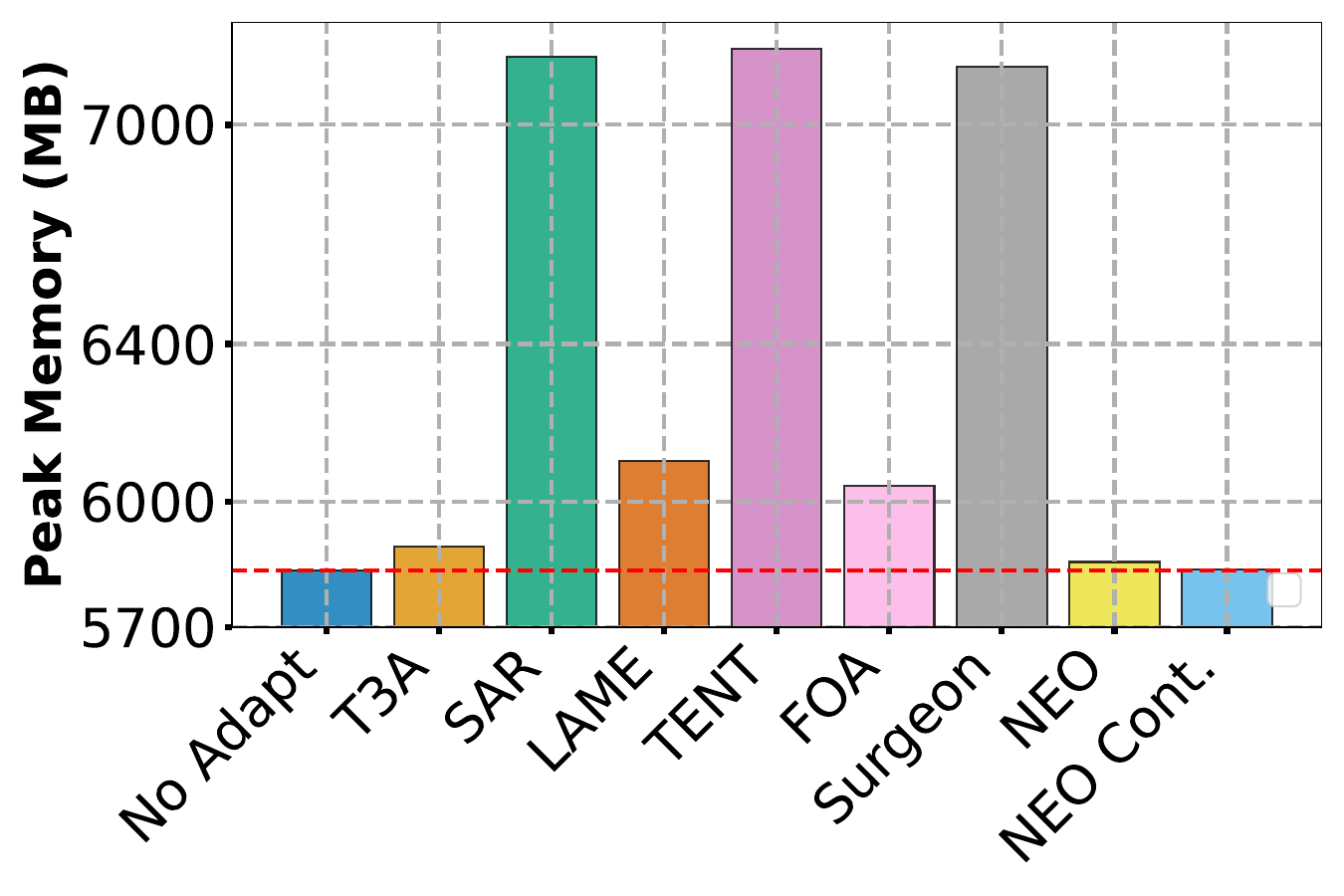}
        \caption{Peak Memory for Jetson}
        \label{fig:jetson-mem}
    \end{subfigure}
    \begin{subfigure}{0.35\textwidth}
        \centering
        \includegraphics[width=\linewidth]{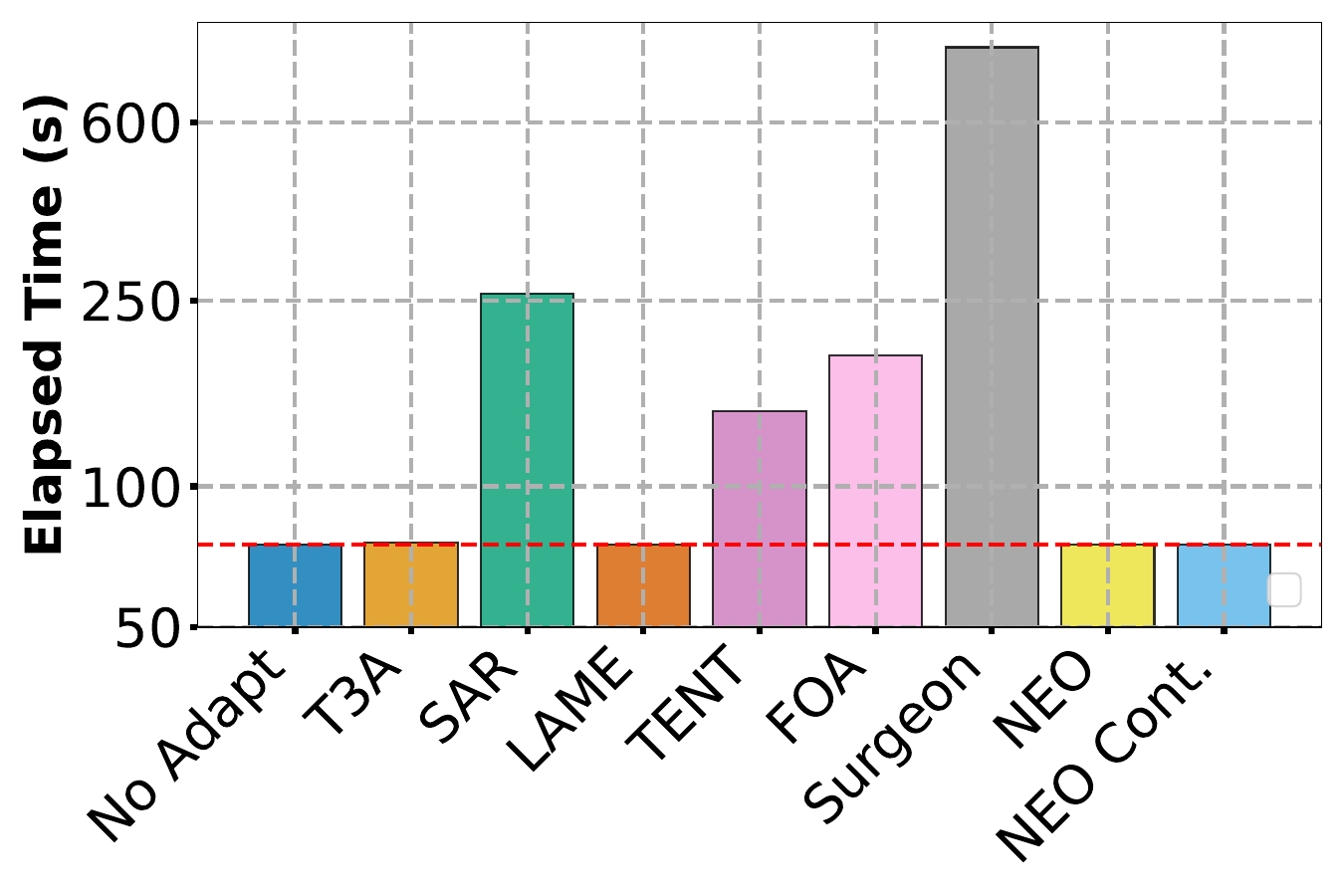}
        \caption{Elapsed Time for Jetson}
        \label{fig:jetson-runtime}
    \end{subfigure}
    \vspace{-5pt}
    \caption{Peak memory and elapsed time for adapting on Vit-Base on ImageNet-C (1000 samples - Gaussian Noise) with Nvidia Jetson Orin Nano.}
    \label{fig:compute_memory}
\end{figure}
\section{Conclusions}
In this paper, we present a novel yet simple TTA algorithm, \acronym{} grounded on our insights on the geometry of latent space and the theory of neural collapse. It is extremely efficient on both server and edge devices and is robust to scarcity and bias in the adaptation dataset. Our simple implementation will help practitioners to adopt this method. We believe that our paper makes important contributions in two directions: the understanding of how embedding space is structured for neural networks and efficient test-time adaptation. We believe our work will trigger further development in both avenues.

\paragraph{Limitations.} While~\acronym{} by design is not restricted to any specific model architecture, in line with the current technology trend, we evaluate on vision transformer architectures, and leave other architecture choices for the future. Our insights are inherently limited to understanding the activations from the penultimate layer and leave the investigations for other layers for future scope.     

\section{Acknowledgments}

Abhirup Ghosh and Alex Murphy acknowledge Royal Society Research Funding RGS\textbackslash R2\textbackslash 242238 for supporting this research.
We acknowledge Dr. Wendy Yanez Pazmino, for supporting us with edge device environments.

\section{Ethics Statement}

While \acronym{} overall is an effective and efficient TTA method, there are ethical caveats, such as TTA being applied in settings that are illegal or immoral. Improved TTA performance could be misused to do harm in such situations. Of course, there are a range of positive impacts as well, such as reduced energy consumption and increased performance for in applications such as health and safety. Our code is publicly available to allow others to use \acronym{} in the safest and most transparent way possible.

\section{Reproducibility Statement}

To aid with reproduction and transparency, we make our code publicly available \href{https://github.com/awesomealex1/NEO}{here}. We provide more details on reproduction, such as hyperparameters and evaluation metrics, in Appendix \ref{app-repro}.

\bibliography{iclr2026_conference}

@article{ebats,
  title={E-BATS: Efficient Backpropagation-Free Test-Time Adaptation for Speech Foundation Models},
  author={Dong, Jiaheng and Jia, Hong and Chatterjee, Soumyajit and Ghosh, Abhirup and Bailey, James and Dang, Ting},
  journal={arXiv preprint arXiv:2506.07078},
  year={2025}
}

@article{BOTTA,
  title={BoTTA: Benchmarking on-device Test Time Adaptation},
  author={Danilowski, Michal and Chatterjee, Soumyajit and Ghosh, Abhirup},
  journal={arXiv preprint arXiv:2504.10149},
  year={2025}
}

@inproceedings{t3a,
 author = {Iwasawa, Yusuke and Matsuo, Yutaka},
 booktitle = {Advances in Neural Information Processing Systems},
 editor = {M. Ranzato and A. Beygelzimer and Y. Dauphin and P.S. Liang and J. Wortman Vaughan},
 pages = {2427--2440},
 publisher = {Curran Associates, Inc.},
 title = {Test-Time Classifier Adjustment Module for Model-Agnostic Domain Generalization},
 url = {https://proceedings.neurips.cc/paper_files/paper/2021/file/1415fe9fea0fa1e45dddcff5682239a0-Paper.pdf},
 volume = {34},
 year = {2021}
}

@inproceedings{foa,
author = {Niu, Shuaicheng and Miao, Chunyan and Chen, Guohao and Wu, Pengcheng and Zhao, Peilin},
title = {Test-time model adaptation with only forward passes},
year = {2024},
publisher = {JMLR.org},
abstract = {Test-time adaptation has proven effective in adapting a given trained model to unseen test samples with potential distribution shifts. However, in real-world scenarios, models are usually deployed on resource-limited devices, e.g., FPGAs, and are often quantized and hard-coded with nonmodifiable parameters for acceleration. In light of this, existing methods are often infeasible since they heavily depend on computation-intensive backpropagation for model updating that may be not supported. To address this, we propose a testtime Forward-Optimization Adaptation (FOA) method. In FOA, we seek to solely learn a newly added prompt (as model's input) via a derivativefree covariance matrix adaptation evolution strategy. To make this strategy work stably under our online unsupervised setting, we devise a novel fitness function by measuring test-training statistic discrepancy and model prediction entropy. Moreover, we design an activation shifting scheme that directly tunes the model activations for shifted test samples, making them align with the source training domain, thereby further enhancing adaptation performance. Without using any backpropagation and altering model weights, FOA runs on quantized 8-bit ViT outperforms gradient-based TENT on full-precision 32-bit ViT, while achieving an up to 24-fold memory reduction on ImageNet-C. Code: https://github.com/mr-eggplant/FOA.},
booktitle = {Proceedings of the 41st International Conference on Machine Learning},
articleno = {1553},
numpages = {18},
location = {Vienna, Austria},
series = {ICML'24}
}

@inproceedings{tent,
title={Tent: Fully Test-Time Adaptation by Entropy Minimization},
author={Dequan Wang and Evan Shelhamer and Shaoteng Liu and Bruno Olshausen and Trevor Darrell},
booktitle={International Conference on Learning Representations},
year={2021},
url={https://openreview.net/forum?id=uXl3bZLkr3c}
}

@article{oftta,
author = {Wang, Shuoyuan and Wang, Jindong and Xi, Huajun and Zhang, Bob and Zhang, Lei and Wei, Hongxin},
title = {Optimization-Free Test-Time Adaptation for Cross-Person Activity Recognition},
year = {2024},
issue_date = {December 2023},
publisher = {Association for Computing Machinery},
address = {New York, NY, USA},
volume = {7},
number = {4},
url = {https://doi.org/10.1145/3631450},
doi = {10.1145/3631450},
abstract = {Human Activity Recognition (HAR) models often suffer from performance degradation in real-world applications due to distribution shifts in activity patterns across individuals. Test-Time Adaptation (TTA) is an emerging learning paradigm that aims to utilize the test stream to adjust predictions in real-time inference, which has not been explored in HAR before. However, the high computational cost of optimization-based TTA algorithms makes it intractable to run on resource-constrained edge devices. In this paper, we propose an Optimization-Free Test-Time Adaptation (OFTTA) framework for sensor-based HAR. OFTTA adjusts the feature extractor and linear classifier simultaneously in an optimization-free manner. For the feature extractor, we propose Exponential Decay Test-time Normalization (EDTN) to replace the conventional batch normalization (CBN) layers. EDTN combines CBN and Test-time batch Normalization (TBN) to extract reliable features against domain shifts with TBN's influence decreasing exponentially in deeper layers. For the classifier, we adjust the prediction by computing the distance between the feature and the prototype, which is calculated by a maintained support set. In addition, the update of the support set is based on the pseudo label, which can benefit from reliable features extracted by EDTN. Extensive experiments on three public cross-person HAR datasets and two different TTA settings demonstrate that OFTTA outperforms the state-of-the-art TTA approaches in both classification performance and computational efficiency. Finally, we verify the superiority of our proposed OFTTA on edge devices, indicating possible deployment in real applications. Our code is available at https://github.com/Claydon-Wang/OFTTA.},
journal = {Proc. ACM Interact. Mob. Wearable Ubiquitous Technol.},
month = jan,
articleno = {183},
numpages = {27},
keywords = {Human activity recognition, sensors, test-time adaptation, transfer learning}
}

@inproceedings{cifar10_imagenet_c,
title={Benchmarking Neural Network Robustness to Common Corruptions and Perturbations},
author={Dan Hendrycks and Thomas Dietterich},
booktitle={International Conference on Learning Representations},
year={2019},
url={https://openreview.net/forum?id=HJz6tiCqYm},
}

@INPROCEEDINGS{imagenet,
  author={Deng, Jia and Dong, Wei and Socher, Richard and Li, Li-Jia and Kai Li and Li Fei-Fei},
  booktitle={2009 IEEE Conference on Computer Vision and Pattern Recognition}, 
  title={ImageNet: A large-scale hierarchical image database}, 
  year={2009},
  volume={},
  number={},
  pages={248-255},
  keywords={Large-scale systems;Image databases;Explosions;Internet;Robustness;Information retrieval;Image retrieval;Multimedia databases;Ontologies;Spine},
  doi={10.1109/CVPR.2009.5206848}
}

@INPROCEEDINGS{imagenet_r,
  author={Hendrycks, Dan and Basart, Steven and Mu, Norman and Kadavath, Saurav and Wang, Frank and Dorundo, Evan and Desai, Rahul and Zhu, Tyler and Parajuli, Samyak and Guo, Mike and Song, Dawn and Steinhardt, Jacob and Gilmer, Justin},
  booktitle={2021 IEEE/CVF International Conference on Computer Vision (ICCV)}, 
  title={The Many Faces of Robustness: A Critical Analysis of Out-of-Distribution Generalization}, 
  year={2021},
  volume={},
  number={},
  pages={8320-8329},
  keywords={Degradation;Computer vision;Computational modeling;Benchmark testing;Gain measurement;Market research;Robustness;Transfer/Low-shot/Semi/Unsupervised Learning;Recognition and classification},
  doi={10.1109/ICCV48922.2021.00823}
}

@inproceedings{imagenet_sketch,
        title={Learning Robust Global Representations by Penalizing Local Predictive Power},
        author={Wang, Haohan and Ge, Songwei and Lipton, Zachary and Xing, Eric P},
        booktitle={Advances in Neural Information Processing Systems},
        pages={10506--10518},
        year={2019}
}

@Techreport{cifar10,
 author = {Krizhevsky, Alex and Hinton, Geoffrey},
 address = {Toronto, Ontario},
 institution = {University of Toronto},
 number = {0},
 publisher = {Technical report, University of Toronto},
 title = {Learning multiple layers of features from tiny images},
 year = {2009},
 title_with_no_special_chars = {Learning multiple layers of features from tiny images},
 url = {https://www.cs.toronto.edu/~kriz/learning-features-2009-TR.pdf}
}

@inproceedings{vit,
title={An Image is Worth 16x16 Words: Transformers for Image Recognition at Scale},
author={Alexey Dosovitskiy and Lucas Beyer and Alexander Kolesnikov and Dirk Weissenborn and Xiaohua Zhai and Thomas Unterthiner and Mostafa Dehghani and Matthias Minderer and Georg Heigold and Sylvain Gelly and Jakob Uszkoreit and Neil Houlsby},
booktitle={International Conference on Learning Representations},
year={2021},
url={https://openreview.net/forum?id=YicbFdNTTy}
}

@article{neural_collapse,
author = {Vardan Papyan  and X. Y. Han  and David L. Donoho },
title = {Prevalence of neural collapse during the terminal phase of deep learning training},
journal = {Proceedings of the National Academy of Sciences},
volume = {117},
number = {40},
pages = {24652-24663},
year = {2020},
doi = {10.1073/pnas.2015509117},
URL = {https://www.pnas.org/doi/abs/10.1073/pnas.2015509117},
eprint = {https://www.pnas.org/doi/pdf/10.1073/pnas.2015509117},
abstract = {Modern deep neural networks for image classification have achieved superhuman performance. Yet, the complex details of trained networks have forced most practitioners and researchers to regard them as black boxes with little that could be understood. This paper considers in detail a now-standard training methodology: driving the cross-entropy loss to zero, continuing long after the classification error is already zero. Applying this methodology to an authoritative collection of standard deepnets and datasets, we observe the emergence of a simple and highly symmetric geometry of the deepnet features and of the deepnet classifier, and we document important benefits that the geometry conveys—thereby helping us understand an important component of the modern deep learning training paradigm. Modern practice for training classification deepnets involves a terminal phase of training (TPT), which begins at the epoch where training error first vanishes. During TPT, the training error stays effectively zero, while training loss is pushed toward zero. Direct measurements of TPT, for three prototypical deepnet architectures and across seven canonical classification datasets, expose a pervasive inductive bias we call neural collapse (NC), involving four deeply interconnected phenomena. (NC1) Cross-example within-class variability of last-layer training activations collapses to zero, as the individual activations themselves collapse to their class means. (NC2) The class means collapse to the vertices of a simplex equiangular tight frame (ETF). (NC3) Up to rescaling, the last-layer classifiers collapse to the class means or in other words, to the simplex ETF (i.e., to a self-dual configuration). (NC4) For a given activation, the classifier’s decision collapses to simply choosing whichever class has the closest train class mean (i.e., the nearest class center [NCC] decision rule). The symmetric and very simple geometry induced by the TPT confers important benefits, including better generalization performance, better robustness, and better interpretability.}
}

@inproceedings{deep_neural_collapse,
title={Deep Neural Collapse Is Provably Optimal for the Deep Unconstrained Features Model},
author={Peter S{\'u}ken{\'\i}k and Marco Mondelli and Christoph H Lampert},
booktitle={Thirty-seventh Conference on Neural Information Processing Systems},
year={2023},
url={https://openreview.net/forum?id=v9yC7sSXf3}
}

@article{adabn,
title = {Adaptive Batch Normalization for practical domain adaptation},
journal = {Pattern Recognition},
volume = {80},
pages = {109-117},
year = {2018},
issn = {0031-3203},
doi = {https://doi.org/10.1016/j.patcog.2018.03.005},
url = {https://www.sciencedirect.com/science/article/pii/S003132031830092X},
author = {Yanghao Li and Naiyan Wang and Jianping Shi and Xiaodi Hou and Jiaying Liu},
keywords = {Domain adaptation, Batch normalization, Neural networks},
abstract = {Deep neural networks (DNN) have shown unprecedented success in various computer vision applications such as image classification and object detection. However, it is still a common annoyance during the training phase, that one has to prepare at least thousands of labeled images to fine-tune a network to a specific domain. Recent study (Tommasi et al., 2015) shows that a DNN has strong dependency towards the training dataset, and the learned features cannot be easily transferred to a different but relevant task without fine-tuning. In this paper, we propose a simple yet powerful remedy, called Adaptive Batch Normalization (AdaBN) to increase the generalization ability of a DNN. By modulating the statistics from the source domain to the target domain in all Batch Normalization layers across the network, our approach achieves deep adaptation effect for domain adaptation tasks. In contrary to other deep learning domain adaptation methods, our method does not require additional components, and is parameter-free. It archives state-of-the-art performance despite its surprising simplicity. Furthermore, we demonstrate that our method is complementary with other existing methods. Combining AdaBN with existing domain adaptation treatments may further improve model performance.}
}

@inproceedings{shot,
author = {Liang, Jian and Hu, Dapeng and Feng, Jiashi},
title = {Do we really need to access the source data? source hypothesis transfer for unsupervised domain adaptation},
year = {2020},
publisher = {JMLR.org},
abstract = {Unsupervised domain adaptation (UDA) aims to leverage the knowledge learned from a labeled source dataset to solve similar tasks in a new unlabeled domain. Prior UDA methods typically require to access the source data when learning to adapt the model, making them risky and inefficient for decentralized private data. This work tackles a practical setting where only a trained source model is available and investigates how we can effectively utilize such a model without source data to solve UDA problems. We propose a simple yet generic representation learning framework, named Source HypOthesis Transfer (SHOT). SHOT freezes the classifier module (hypothesis) of the source model and learns the target-specific feature extraction module by exploiting both information maximization and self-supervised pseudo-labeling to implicitly align representations from the target domains to the source hypothesis. To verify its versatility, we evaluate SHOT in a variety of adaptation cases including closed-set, partial-set, and open-set domain adaptation. Experiments indicate that SHOT yields state-of-the-art results among multiple domain adaptation benchmarks.},
booktitle = {Proceedings of the 37th International Conference on Machine Learning},
articleno = {560},
numpages = {12},
series = {ICML'20}
}

@inproceedings{sar,
title={Towards Stable Test-time Adaptation in Dynamic Wild World},
author={Shuaicheng Niu and Jiaxiang Wu and Yifan Zhang and Zhiquan Wen and Yaofo Chen and Peilin Zhao and Mingkui Tan},
booktitle={The Eleventh International Conference on Learning Representations },
year={2023},
url={https://openreview.net/forum?id=g2YraF75Tj}
}

@inproceedings{cotta,
  title={Continual Test-Time Domain Adaptation},
  author={Wang, Qin and Fink, Olga and Van Gool, Luc and Dai, Dengxin},
  booktitle={Proceedings of Conference on Computer Vision and Pattern Recognition},
  year={2022}
}

@article{mixon_neural_2022,
	title = {Neural collapse with unconstrained features},
	volume = {20},
	issn = {2730-5724},
	url = {https://doi.org/10.1007/s43670-022-00027-5},
	doi = {10.1007/s43670-022-00027-5},
	abstract = {Neural collapse is an emergent phenomenon in deep learning that was recently discovered by Papyan, Han and Donoho. We propose a simple unconstrained features model in which neural collapse also emerges empirically. By studying this model, we provide some explanation for the emergence of neural collapse in terms of the landscape of empirical risk.},
	number = {2},
	journal = {Sampling Theory, Signal Processing, and Data Analysis},
	author = {Mixon, Dustin G. and Parshall, Hans and Pi, Jianzong},
	month = jul,
	year = {2022},
	pages = {11},
}

@article{webb_optimised_1990,
	title = {The optimised internal representation of multilayer classifier networks performs nonlinear discriminant analysis},
	volume = {3},
	issn = {0893-6080},
	url = {https://www.sciencedirect.com/science/article/pii/089360809090019H},
	doi = {https://doi.org/10.1016/0893-6080(90)90019-H},
	abstract = {This paper illustrates why a nonlinear adaptive feed-forward layered network with linear output units can perform well as a pattern classification device. The central result is that minimising the error at the output of the network is equivalent to maximising a particular norm, the network discriminant function, at the output of the hidden units. The first part of the network is explicitly performing a nonlinear transformation of the data into a space in which the classes may be more easily separated. The specific nature of this transformation is constrained to maximise the network discriminant function. If the targets are appropriately chosen, this discriminant function relates the pseudo-inverse of the total covariance matrix and the weighted between-class covariance matrix of the hidden unit patterns. Numerical simulations are presented to illustrate the results.},
	number = {4},
	journal = {Neural Networks},
	author = {Webb, Andrew R. and Lowe, David},
	year = {1990},
	keywords = {Adaptive layered networks, Learning, Nonlinear discriminant analysis, Nonlinear optimisation, Pattern classification},
	pages = {367--375},
}

@article{hong_neural_2024,
	title = {Neural {Collapse} for {Unconstrained} {Feature} {Model} under {Cross}-entropy {Loss} with {Imbalanced} {Data}},
	volume = {25},
	url = {http://jmlr.org/papers/v25/23-1215.html},
	number = {192},
	journal = {Journal of Machine Learning Research},
	author = {Hong, Wanli and Ling, Shuyang},
	year = {2024},
	pages = {1--48},
}

@inproceedings{zhu_geometric_analysis,
author = {Zhu, Zhihui and Ding, Tianyu and Zhou, Jinxin and Li, Xiao and You, Chong and Sulam, Jeremias and Qu, Qing},
title = {A geometric analysis of neural collapse with unconstrained features},
year = {2021},
isbn = {9781713845393},
publisher = {Curran Associates Inc.},
address = {Red Hook, NY, USA},
abstract = {We provide the first global optimization landscape analysis of Neural Collapse— an intriguing empirical phenomenon that arises in the last-layer classifiers and features of neural networks during the terminal phase of training. As recently reported in [1], this phenomenon implies that (i) the class means and the last-layer classifiers all collapse to the vertices of a Simplex Equiangular Tight Frame (ETF) up to scaling, and (ii) cross-example within-class variability of last-layer activations collapses to zero. We study the problem based on a simplified unconstrained feature model, which isolates the topmost layers from the classifier of the neural network. In this context, we show that the classical cross-entropy loss with weight decay has a benign global landscape, in the sense that the only global minimizers are the Simplex ETFs while all other critical points are strict saddles whose Hessian exhibit negative curvature directions. Our analysis of the simplified model not only explains what kind of features are learned in the last layer, but also shows why they can be efficiently optimized, matching the empirical observations in practical deep network architectures. These findings provide important practical implications. As an example, our experiments demonstrate that one may set the feature dimension equal to the number of classes and fix the last-layer classifier to be a Simplex ETF for network training, which reduces memory cost by over 20\% on ResNet18 without sacrificing the generalization performance.},
booktitle = {Proceedings of the 35th International Conference on Neural Information Processing Systems},
articleno = {2282},
numpages = {15},
series = {NIPS '21}
}

@article{chen2024neural,
  title={Neural collapse inspired feature alignment for out-of-distribution generalization},
  author={Chen, Zhikang and Zhang, Min and Cui, Sen and Li, Haoxuan and Niu, Gang and Gong, Mingming and Zhang, Changshui and Zhang, Kun},
  journal={Advances in Neural Information Processing Systems},
  volume={37},
  pages={93671--93689},
  year={2024}
}

@InProceedings{niu2022efficient,
  title={Efficient Test-Time Model Adaptation without Forgetting},
  author={Niu, Shuaicheng and Wu, Jiaxiang and Zhang, Yifan and Chen, Yaofo and Zheng, Shijian and Zhao, Peilin and Tan, Mingkui},
  booktitle = {The Internetional Conference on Machine Learning},
  year = {2022}
}

@article{gong2022note,
  title={Note: Robust continual test-time adaptation against temporal correlation},
  author={Gong, Taesik and Jeong, Jongheon and Kim, Taewon and Kim, Yewon and Shin, Jinwoo and Lee, Sung-Ju},
  journal={Advances in Neural Information Processing Systems},
  volume={35},
  pages={27253--27266},
  year={2022}
}

@article{schneider2020improving,
  title={Improving robustness against common corruptions by covariate shift adaptation},
  author={Schneider, Steffen and Rusak, Evgenia and Eck, Luisa and Bringmann, Oliver and Brendel, Wieland and Bethge, Matthias},
  journal={Advances in neural information processing systems},
  volume={33},
  pages={11539--11551},
  year={2020}
}

@article{nado2020evaluating,
  title={Evaluating prediction-time batch normalization for robustness under covariate shift},
  author={Nado, Zachary and Padhy, Shreyas and Sculley, D and D'Amour, Alexander and Lakshminarayanan, Balaji and Snoek, Jasper},
  journal={arXiv preprint arXiv:2006.10963},
  year={2020}
}

@inproceedings{su2024unraveling,
  title={Unraveling batch normalization for realistic test-time adaptation},
  author={Su, Zixian and Guo, Jingwei and Yao, Kai and Yang, Xi and Wang, Qiufeng and Huang, Kaizhu},
  booktitle={Proceedings of the AAAI Conference on Artificial Intelligence},
  volume={38},
  pages={15136--15144},
  year={2024}
}

@inproceedings{song2023ecotta,
  title={EcoTTA: Memory-Efficient Continual Test-time Adaptation via Self-distilled Regularization},
  author={Junha Song and Jungsoo Lee and In So Kweon and Sungha Choi},
  booktitle={IEEE Conference on Computer Vision and Pattern Recognition (CVPR)},
  year={2023}
}

@inproceedings{ma2025surgeon,
  title={SURGEON: Memory-Adaptive Fully Test-Time Adaptation via Dynamic Activation Sparsity},
  author={Ke Ma and Jiaqi Tang and Bin Guo and Fan Dang and Sicong Liu and Zhui Zhu and Lei Wu and Cheng Fang and Ying-Cong Chen and Zhiwen Yu and Yunhao Liu},
  booktitle={CVPR},
  year={2025}
}

@inproceedings{hong2023mecta,
  title={MECTA: Memory-Economic Continual Test-time Adaptation},
  author={Hong, Junyuan and Lyu, Lingjuan and Zhou, Jiayu and Spranger, Michael},
  booktitle={ICLR},
  year={2023}
}

@inproceedings{
jia2024tinytta,
title={Tiny{TTA}: Efficient Test-time Adaptation via Early-exit Ensembles on Edge Devices},
author={Hong Jia and Young D. Kwon and Alessio Orsino and Ting Dang and Domenico Talia and Cecilia Mascolo},
booktitle={The Thirty-eighth Annual Conference on Neural Information Processing Systems},
year={2024}
}

@inproceedings{boudiaf2022parameter,
  title={Parameter-free online test-time adaptation},
  author={Boudiaf, Malik and Mueller, Romain and Ben Ayed, Ismail and Bertinetto, Luca},
  booktitle={Proceedings of the IEEE/CVF Conference on Computer Vision and Pattern Recognition},
  pages={8344--8353},
  year={2022}
}

@inproceedings{
harun2025controlling,
title={Controlling Neural Collapse Enhances Out-of-Distribution Detection and Transfer Learning},
author={Md Yousuf Harun and Jhair Gallardo and Christopher Kanan},
booktitle={Forty-second International Conference on Machine Learning},
year={2025},
url={https://openreview.net/forum?id=8AGdUCdDyI}
}

@article{ece, title={Obtaining Well Calibrated Probabilities Using Bayesian Binning}, volume={29}, url={https://ojs.aaai.org/index.php/AAAI/article/view/9602}, DOI={10.1609/aaai.v29i1.9602}, abstractNote={ &lt;p&gt; Learning probabilistic predictive models that are well calibrated is critical for many prediction and decision-making tasks in artificial intelligence. In this paper we present a new non-parametric calibration method called Bayesian Binning into Quantiles (BBQ) which addresses key limitations of existing calibration methods. The method post processes the output of a binary classification algorithm; thus, it can be readily combined with many existing classification algorithms. The method is computationally tractable, and empirically accurate, as evidenced by the set of experiments reported here on both real and simulated datasets. &lt;/p&gt; }, number={1}, journal={Proceedings of the AAAI Conference on Artificial Intelligence}, author={Pakdaman Naeini, Mahdi and Cooper, Gregory and Hauskrecht, Milos}, year={2015}, month={Feb.} }

@misc{timm, title={Pytorch image models}, author={Wightman, R.}, year={2019}, url={https://github.
com/rwightman/pytorch-image-models}}

@article{Kojima2023,
  author = {Kojima, Takeshi and Iwasawa, Yusuke and Matsuo, Yutaka},
  title = {Robustifying Vision Transformer Without Retraining from Scratch Using Attention-Based Test-Time Adaptation},
  journal = {New Generation Computing},
  volume = {41},
  number = {1},
  pages = {5--24},
  year = {2023},
  month = {3},
  issn = {1882-7055},
  doi = {10.1007/s00354-022-00197-9},
  url = {https://doi.org/10.1007/s00354-022-00197-9},
  abstract = {Vision Transformer (ViT) is becoming more and more popular in the field of image processing. This study aims to improve the robustness against the unknown perturbations without retraining the ViT model from scratch. Since our approach does not alter the training phase, it does not need to repeat computationally heavy pretraining of ViT. Specifically, we use test-time adaptation (TTA) for this purpose, which corrects its prediction during test-time by itself. The representative test-time adaptation method, Tent, is recently found to be applicable to ViT by modulating parameters and gradient clipping. However, we observed that Tent sometimes catastrophically fails, especially under severe perturbations. To stabilize the adaptation, we propose a new loss function called Attent, which minimizes the distributional differences of the attention entropy between the source and target. Experiments of image classification task on CIFAR-10-C, CIFAR-100-C, and ImageNet-C show that both Tent and Attent are effective on a wide variety of corruptions. The results also show that by combining Attent and Tent, the classification accuracy on corrupted data is further improved.}
}

@inproceedings{NEURIPS2024_f88cc893,
    author = {Wu, Robert and Papyan, Vardan},
    booktitle = {Advances in Neural Information Processing Systems},
    editor = {A. Globerson and L. Mackey and D. Belgrave and A. Fan and U. Paquet and J. Tomczak and C. Zhang},
    pages = {137432--137473},
    publisher = {Curran Associates, Inc.},
    title = {Linguistic Collapse: Neural Collapse in (Large) Language Models},
    url = {https://proceedings.neurips.cc/paper_files/paper/2024/file/f88cc8930b47a45ec4733123bf3039b9-Paper-Conference.pdf},
    volume = {37},
    year = {2024}
}
\bibliographystyle{iclr2026_conference}

\appendix
\section{More theoretical results}

Here we present ~\acronym{} for MSE, an explanation of neural collapse and the proof for Proposition \ref{prop:acc-cos}.

\subsection{NEO for MSE} \label{app-neo-mse}

For models trained under MSE loss instead of cross-entropy loss, we need to adjust NEO.

\begin{proposition}
    Consider a network $f$ exhibiting neural collapse and trained with MSE loss. Then, $\Delta_G = \tilde{\vmu}$ and the bias of the classifier $\vb = \frac{1}{C} \vone_C$. Under the assumption of the unconstrained features model \citep{mixon_neural_2022} (treating $h(\vx)$ as a freely optimizable variable), we have  $$\mW (h(\tilde{\vx}) - \tilde{\vmu}_G) + \frac{1}{C} \vone_C = \mW h(\vx) + \vb$$.
\end{proposition}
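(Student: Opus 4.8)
The plan is to reduce the claimed identity to three ingredients about the MSE-trained source network at neural collapse, and then combine them by substitution. The ingredients are: (i) the global centroid of the clean embeddings vanishes, $\vmu_G=\vzero_d$, so that $\Delta_G=\tilde{\vmu}_G-\vmu_G=\tilde{\vmu}_G$; (ii) the classifier bias satisfies $\vb=\tfrac1C\vone_C$; and (iii) in the neural-collapse regime the covariate shift perturbs the penultimate features only through their global centroid, i.e.\ in the decomposition $h(\tilde{\vx})=h(\vx)+\Delta_G+\Delta_c+\delta$ introduced earlier both $\Delta_c$ and $\delta$ vanish, equivalently $h(\tilde{\vx})=h(\vx)+\Delta_G$. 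Granting these, $h(\tilde{\vx})-\tilde{\vmu}_G=h(\vx)+\Delta_G-\tilde{\vmu}_G=h(\vx)$ by (i) and (iii); applying $\mW$ and then adding $\tfrac1C\vone_C=\vb$ from (ii) gives
\[
\mW\bigl(h(\tilde{\vx})-\tilde{\vmu}_G\bigr)+\tfrac1C\vone_C \;=\; \mW h(\vx)+\vb .
\]

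For (i) I would re-run the gauge argument behind Proposition~\ref{prop:glob-al}: in the unconstrained features model \citep{mixon_neural_2022} with feature and weight regularization and an unregularized bias, replacing each embedding $h_i$ by $h_i-\vmu_G$ and $\vb$ by $\vb+\mW\vmu_G$ leaves every prediction $\mW h_i+\vb$ --- hence the MSE data term and the weight penalty --- unchanged, while it strictly decreases $\sum_i\|h_i\|^2$ unless $\vmu_G=\vzero_d$; therefore any global minimizer has $\vmu_G=\vzero_d$, in line with the MSE neural-collapse analyses \citep{hong_neural_2024, zhu_geometric_analysis}, and then $\Delta_G=\tilde{\vmu}_G$. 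For (ii), stationarity in the (unregularized) bias gives $\vzero=\sum_i(\mW h_i+\vb-\mathbf{y}_i)$, i.e.\ $\vb=\bar{\mathbf{y}}-\mW\vmu_G$ with $\bar{\mathbf{y}}$ the mean one-hot target; for uniformly distributed classes $\bar{\mathbf{y}}=\tfrac1C\vone_C$, and substituting (i) yields $\vb=\tfrac1C\vone_C$. An explicit form of $\mW$ as a scaled simplex ETF --- which I would not need for the identity --- follows from the remaining stationarity conditions, as in the MSE neural-collapse literature.

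The main obstacle is ingredient (iii): unlike (i) and (ii), it is not a property of the fitted source model but a statement about how the test-time shift interacts with the learned geometry, and it does not follow from source-side neural collapse alone. I would treat it as an explicit assumption on the covariate shift --- within-class collapse of the corrupted embeddings (forcing $\delta=\vzero_d$) together with invariance of the centered class-mean configuration (forcing $\Delta_c=\vzero_d$) --- i.e.\ exactly the idealized regime in which \acronym{} is lossless, which the near-zero residuals in Figure~\ref{fig:motivation} support; the weaker hypothesis $\Delta_c+\delta\in\ker\mW$ already suffices for the logit-level identity. A secondary, purely technical point is the regularization bookkeeping (no bias regularization; the feature/weight penalties under which the UFM minimizer attains $\vmu_G=\vzero_d$), without which the equalities $\Delta_G=\tilde{\vmu}_G$ and $\vb=\tfrac1C\vone_C$ hold only up to small error; everything after (i)--(iii) is substitution.
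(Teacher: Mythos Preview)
Your proposal is correct, and you deserve credit for isolating and flagging ingredient (iii) --- the paper silently uses the same substitution $h(\tilde{\vx})=h(\vx)+\tilde{\vmu}_G-\vmu_G$ without comment. The route, however, is genuinely different. The paper does not go through $\vmu_G=\vzero_d$ at all for the displayed identity: it simply quotes the closed-form optimal MSE classifier from \citet{neural_collapse} (via \citet{webb_optimised_1990}), namely $\mW=\alpha\mM^T$ and $\vb=\tfrac1C\vone_C-\alpha\mM^T\vmu_G$, and then the $-\vmu_G$ that appears after substituting (iii) is absorbed directly into $\vb$:
\[
\alpha\mM^T\bigl(h(\vx)-\vmu_G\bigr)+\tfrac1C\vone_C
=\alpha\mM^T h(\vx)+\bigl(\tfrac1C\vone_C-\alpha\mM^T\vmu_G\bigr)
=\mW h(\vx)+\vb.
\]
So the paper's argument for the equation is shorter, works for general $\vmu_G$, and does not need your no-bias-regularization bookkeeping. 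On the other hand, your gauge argument actually establishes the first clause of the statement, $\Delta_G=\tilde{\vmu}_G$ (equivalently $\vmu_G=\vzero_d$) and hence $\vb=\tfrac1C\vone_C$ exactly, whereas the paper's proof only establishes the boxed identity and leaves those two preliminary claims to the cited formula (which gives $\vb=\tfrac1C\vone_C$ only if $\mM^T\vmu_G=\vzero$). In short: the paper's route is a one-line citation plus algebra that sidesteps your ingredient (i); your route is more self-contained and more faithfully covers every clause of the proposition, at the cost of the extra regularization hypothesis.
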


\begin{proof}
Under the assumption of neural collapse, \citet{neural_collapse} have proven, using a result from \citet{webb_optimised_1990}, that the ideal weights and bias of the classifier under mean square error loss and balanced classes are the following:

$$\mW = \alpha \mM^T,$$
$$\vb = \frac{1}{C} \vone_C - \alpha \mM^T\vmu_G.$$

Thus, for the MSE case we subtract the shifted simplex mean and set the bias to $(1/C) \vone_C$:

$$\mW (h(\tilde{\vx}) - \tilde{\vmu}_G) + \frac{1}{C} \vone_C = \alpha \mM^T (h(\vx) + \tilde{\vmu}_G - \vmu_G - \tilde{\vmu}_G) + \frac{1}{C} \vone_C = \mW h(\vx) + \vb.$$

\end{proof}

The pseudo-code for NEO under MSE loss:

\begin{algorithm}[H]
\caption{NEO-MSE}
\label{alg:neo_mse}
\begin{algorithmic}[1]
\Require Dataset $S$, feature extractor $h$, classifier weights $\mW$
\State $\tilde{\vmu}_G \gets \vzero_{d}$, $ i \gets 0$
\For{each batch $\mB \in \R^{b \times m} \text{ in } S$}
    \State $i \gets i + 1$
    \State $\tilde{\vmu}_G \gets (i-1)/i \cdot\tilde{\vmu}_G + 1/i \cdot \text{mean}(h(\mB))$
    \State $\vy = \mW(h(\mB) - \tilde{\vmu}_G \vone_b^T) + \frac{1}{C} \vone_C \vone_b^T$
\EndFor
\end{algorithmic}
\end{algorithm}

\subsection{Neural Collapse} \label{app-nc}

Recent work analyzing the behavior of the last layer in neural networks, has discovered a phenomenon known as neural collapse \citep{neural_collapse}. It occurs when continuing to train a neural network after it has achieved near zero loss, also known as the terminal phase of training. It's a property of the last layer of the feature extractor, but recent work has also been expanding neural collapse to more layers in a neural network \citep{deep_neural_collapse}. 

In addition to $\vmu_G$ and $\vmu_c$ which we defined in the problem statement, we also define the within-class covariance to be $\mSigma_W = \text{Avg}_{i,c} \{(\vh_{i,c} - \vmu_c)(\vh_{i,c} - \vmu_c)^T\}$, where $\vh_{i,c}$ is the embedding of sample $\vx^{(i)}$ from class $c$. \citet{neural_collapse} identify the following four properties of neural collapse:

\begin{enumerate}[label=\textbullet]
    \item \textbf{(NC1) Variability collapse:} The variation of features within the same class goes to near zero.
    $$\mSigma_W \rightarrow 0$$
    \item \textbf{(NC2) Convergence to simplex equiangular tight frame (ETF):} The class means of the features form the vertices of an ETF simplex, with equal length and angles between them. Essentially, all class means are an equal distance and at an equal angle from each other.
    $$ \big| \left\lVert \vmu_c - \vmu_G \right\rVert_2 - \left\lVert \vmu_{c'} - \vmu_G \right\rVert_2 \big| \rightarrow 0 \quad \forall c, c' $$
    $$\langle \tilde{\mu}_c, \tilde{\mu}_{c'} \rangle \rightarrow \frac{C}{C-1} \delta_{c,c'} - \frac{1}{C-1}\quad \forall c, c' $$
    \item \textbf{(NC3) Convergence to self-duality:} The class means and classifier weights converge to each other upon rescaling.
    $$\left\lVert \frac{\mW^T}{\left\lVert \mW \right\rVert_F} - \frac{\mM^T}{\left\lVert \mM \right\rVert_F} \right\rVert_F \rightarrow 0$$
    \item \textbf{(NC4) Simplification to nearest class-center:} At inference, the classifier solely decides which class to predict, by taking the class mean with the lowest euclidean distance.
    $$\argmax_{c'} \langle \vm_c, h(\vx) \rangle + b_{c'} \rightarrow \argmin_{c'} \left\lVert h(\vx) - \vmu_{c'} \right\rVert_2$$
\end{enumerate}

where $\tilde{\vmu}_c = (\vmu_c - \vmu_G) / \left\lVert \vmu_c - \vmu_G \right\rVert_2$, $\mM = [\vmu_c - \vmu_G , c= 1, ... , C] \in \R^{p \times C}$, $\mW$ contains the classifier weights and $\delta_{c, c'}$ is the Kronecker delta symbol \citep{neural_collapse}.

\subsection{Alignment and accuracy} \label{app-prop-acc-cos}

Recall that Proposition \ref{prop:acc-cos} states:

\begin{unnum-proposition}
Consider a network $f$ with a last-layer linear classifier. Assume the model is trained to neural collapse with cross-entropy loss, regularization and evenly distributed classes. Then letting $\vw_c$ denote the classifier weight vector corresponding to class $c$, we have:

$$y = \argmax_c \vw_c h(\vx) + b_c \iff y= \argmax_c \|\vw_c\|\|h(\vx)\|\text{cos}(\vw_c,h(\vx)) $$

\end{unnum-proposition}

The proof for Proposition ~\ref{prop:acc-cos} follows.

\begin{proof}
Under neural collapse with cross-entropy loss, regularization and evenly distributed classes, the bias $\vb$ equals the zero vector \citep{hong_neural_2024}. Then by combining

$$y = \argmax_c \vw_c h(\vx) + b_c = \argmax_c \vw_c h(\vx) \quad \text{and} \quad \text{cos}(\vw_c,h(\vx)) = \frac{\vw_c \cdot h(\vx)}{\|\vw_c\|\|h(\vx)\|}$$

we can conclude that

$$y = \argmax_c \vw_c h(\vx) + b_c \iff y = \argmax_c \|\vw_c\|\|h(\vx)\|\text{cos}(\vw_c,h(\vx)) $$

\end{proof}

\section{Reproduction Details} \label{app-repro}

In this section we present details of our experiments to aid with reproduction. Additionally, please find the code used for our experiments here: \href{https://github.com/awesomealex1/NEO}{https://github.com/awesomealex1/NEO}. A large part of our code is based on the repository used by FOA \citep{foa}. The implementation for Surgeon is based on the original paper repository \citep{ma2025surgeon}.

\subsection{TTA Methods}

In our experiments we compare with T3A \citep{t3a}, SAR \citep{sar}, LAME \citep{boudiaf2022parameter}, TENT \citep{tent}, CoTTA \citep{cotta}, FOA \citep{foa} and Surgeon \citep{ma2025surgeon}. We use the default hyperparameters specified in the papers, unless the default hyperparameters cause catastrophic forgetting (accuracy goes to zero), in which case we modify the method to use hyperparameters that do not cause catastrophic forgetting (as most papers do not have results for all models or datasets that we use). 

For Surgeon we set the learning rate of Adam to $10^{-5}$. For TENT we use a learning rate 0.00025 for all datasets, as we keep a batch size of 64 throughout most experiments. The batch size is only reduced for on-device resource consumption (due to memory limitations) and sample/class size experiments (only applicable to \acronym{}).

\subsection{Datasets}

We evaluate on ImageNet-C (50 samples $\times$ 1000 classes $\times$ 15 corruption types) \citep{cifar10_imagenet_c}, CIFAR-10-C (1000 samples $\times$ 10 classes $\times$ 15 corruption types) \citep{cifar10_imagenet_c}, ImageNet-Rendition (30,000 samples, 200 classes) \citep{imagenet_r} and ImageNet-Sketch (50 samples $\times$ 1000 classes) \citep{imagenet_sketch}.

CIFAR-10-C is available here: \href{https://zenodo.org/records/2535967}{https://zenodo.org/records/2535967}. ImageNet-C is available here: \href{https://zenodo.org/records/2235448}{https://zenodo.org/records/2235448}. ImageNet-Rendition is available here: \href{https://people.eecs.berkeley.edu/\~hendrycks/imagenet-r.tar}{https://people.eecs.berkeley.edu/\~hendrycks/imagenet-r.tar}. ImageNet-Sketch is available here: \href{https://drive.google.com/file/d/1Mj0i5HBthqH1p\_yeXzsg22gZduvgoNeA/view}{https://drive.google.com/file/d/1Mj0i5HBthqH1p\_yeXzsg22gZduvgoNeA/view}.

For CIFAR-10-C we only use the 15 basic corruption types and not the 4 additional types.

\subsection{Models}

In our experiments we use three different sizes of Vision Transformer (ViT) \citep{vit}: ViT-S, ViT-Base and ViT-L, which have 22, 86, and 307 million parameters and embedding dimensions of 384, 768 and 1024 respectively. We use versions finetuned on ImageNet \citep{imagenet} or CIFAR-10 \citep{cifar10}.

For models used on ImageNet we obtained model weights from timm \citep{timm}. We used 'vit\_small\_patch16\_224', 'vit\_base\_patch16\_224' and 'vit\_large\_patch16\_224'. For models finetuned on CIFAR-10, we used publicly available weights from huggingface: 'MF21377197/vit-small-patch16-224-finetuned-Cifar10', 'nateraw/vit-base-patch16-224-cifar10' and 'tzhao3/vit-L-CIFAR10'. They use the same ViT architecture as our ImageNet ViTs, but are finetuned on CIFAR-10.

\subsection{Metrics} \label{app-metrics}

We evaluate accuracy in two ways, depending on the type of experiments. 

The first way is that we use the accuracy achieved on samples used during the adaptation process. This means that the model starts out unadapted (resulting in potentially low accuracy) and adapt over time (increasing accuracy).

The second way is that we use the accuracy achieved on data not used for adaptation, after finishing the adaptation process. This means we split the dataset into an adaptation set and a validation set. We then calculate accuracy on the validation set, only after adaptation is finished.

We evaluate model calibration using ECE \citep{ece}, which quantifies how well a model's assigned probabilities align with the actual correctness. ECE is computed by grouping predictions into bins (in our case 15) based on the confidence of the prediction. The difference between observed accuracy and average confidence is calculated and then a weighted average is taken. A low ECE signifies good calibration while a high one implies bad calibration that is over-confident on wrong predictions or under-confident on correct predictions.

\subsection{Resource Efficiency}

The following component versions were used for experiments on resource usage:
\begin{itemize}
    \item The Raspberry PI 4B (8GB RAM) used the following software versions: Debian 12 ("bookworm", kernel: 6.6.51+rpt-rpi-v8), Python 3.11.2, torch 2.8.0, torchvision 0.23.0.

    \item NVIDIA Jetson Orin Nano (8GB) used the following software versions: Ubuntu 20.04.6 LTS (kernel: 5.10.192-tegra), Python 3.8.10, CUDA 11.4, torch 2.1.0a0+41361538.nv23.6, torchvision 0.16.0.

    \item AMD Instinct MI300X (192GB VRAM) and INTEL(R) XEON(R) PLATINUM 8568Y+ used the following software versions: Ubuntu 24.04.1 LTS, Python 3.12.3, rocm-libs version 6.4.1.60401-83~24.04, torch 2.6.0+rocm6.4.2.git76481f7c, torchvision 0.21.0+rocm6.4.2.git4040d51f.
\end{itemize}

\section{Additional Results}

\subsection{Results on CNNs}

\begin{table}
\centering
\caption{Accuracy (\%) with standard deviations across different corruption types and adaptation methods with ResNet-50 on ImageNet-C. The highest accuracy per corruption type is in bold, and the second-highest is underlined. 512 samples used for adaptation and level 5 corruptions.}
\resizebox{\textwidth}{!}{%
\begin{tabular}{lllllll}
\toprule
\textbf{Corruption} & \textbf{No Adapt} & \textbf{SAR} & \textbf{TENT} & \textbf{CoTTA} & \textbf{NEO} & \textbf{NEO-BN} \\
\midrule
\multicolumn{7}{l}{\textit{Noise}} \\
Gaussian   & \underline{20.96 (0.64)} & 14.00 (1.09) & 14.00 (1.09) & 13.80 (1.48) & \textbf{23.05 (0.32)} & 13.87 (1.75) \\
Shot       & \underline{21.94 (1.12)} & 16.34 (1.48) & 16.67 (1.20) & 16.54 (1.33) & \textbf{24.35 (1.43)} & 16.73 (1.37) \\
Impulse    & \underline{21.94 (0.09)} & 13.87 (0.70) & 13.87 (0.16) & 13.80 (0.09) & \textbf{23.89 (1.29)} & 13.80 (0.40) \\
\midrule
\multicolumn{7}{l}{\textit{Blur}} \\
Defocus    & \underline{14.71 (1.34)} & 11.65 (0.96) & 12.11 (0.57) & 11.59 (0.33) & \textbf{16.93 (1.15)} & 11.85 (1.52) \\
Glass      & 8.33 (1.06) & \textbf{13.61 (0.51)} & 13.35 (0.40) & 13.15 (0.37) & 9.24 (1.33) & \underline{13.48 (0.70)} \\
Motion     & 16.93 (0.66) & 23.50 (1.04) & \textbf{24.15 (1.45)} & \underline{23.76 (1.90)} & 19.47 (1.06) & 23.11 (1.36) \\
Zoom       & 19.40 (0.40) & 35.42 (0.56) & \textbf{35.61 (0.37)} & \underline{35.48 (0.24)} & 21.03 (1.18) & 34.96 (0.32) \\
\midrule
\multicolumn{7}{l}{\textit{Weather}} \\
Snow       & 23.76 (1.36) & \underline{40.69 (2.23)} & \textbf{40.82 (2.19)} & 40.43 (2.38) & 26.76 (1.00) & 39.45 (2.51) \\
Frost      & 33.01 (1.31) & 41.28 (2.31) & \underline{41.41 (2.32)} & \textbf{41.47 (2.02)} & 35.61 (0.51) & 40.82 (0.97) \\
Fog        & 29.62 (0.24) & 56.90 (0.93) & \textbf{57.36 (1.30)} & \underline{57.03 (0.89)} & 33.01 (0.89) & 56.12 (1.78) \\
Brightness & 64.65 (1.12) & 66.67 (0.75) & \underline{66.73 (0.66)} & \textbf{67.06 (0.72)} & 64.71 (1.33) & 65.49 (0.72) \\
\midrule
\multicolumn{7}{l}{\textit{Digital}} \\
Contrast   & 9.24 (1.13) & \textbf{24.87 (0.88)} & \underline{24.67 (0.49)} & \textbf{24.87 (0.64)} & 10.61 (1.22) & \underline{24.67 (1.15)} \\
Elastic    & 12.24 (0.80) & \underline{42.25 (1.85)} & \underline{42.25 (1.71)} & \textbf{42.38 (1.52)} & 15.30 (1.06) & 41.41 (1.39) \\
Pixelate   & 12.50 (0.48) & 38.02 (1.06) & \textbf{38.61 (1.43)} & \underline{38.54 (0.79)} & 14.00 (0.09) & 37.17 (1.36) \\
JPEG       & \underline{45.90 (0.32)} & 40.69 (2.58) & 40.49 (2.44) & 40.49 (2.30) & \textbf{46.94 (0.40)} & 39.84 (2.53) \\
\midrule
\textbf{ImageNet-C} & 23.68 & 31.98 & \textbf{32.14} & \underline{32.03} & 25.66 & 31.52 \\
\bottomrule
\end{tabular}%
}
\label{tab:resnet_corr_comparison}
\end{table}

In Table \ref{tab:resnet_corr_comparison} can see that on ResNet-50, \acronym{} still is able to deliver considerable improvements over using no adaptation. We compare vanilla \acronym{}, as well as \acronym{}-BN, which updates batch normalization parameters, as is common in many other TTA methods (but not applicable to transformers). What is notable, is that in case where using no adaptation is the second most effective, \acronym{} is the most effective method. All other methods, deliver a reduction in accuracy, suggesting that \acronym{} is suitable for adaptation in settings where no other method is able to improve.

\subsection{Ablation Studies}

\begin{table}
    \caption{Ablation Studies}
    \centering
    % First Subtable: Batch Size
    \begin{subtable}[b]{0.48\textwidth}
        \centering
        \begin{tabular}{ll}
        \toprule
        \textbf{Batch Size} & \textbf{Accuracy (\%)} \\
        \midrule
        1  & 56.91 \\
        4  & 57.04 \\
        8  & 56.99 \\
        16 & \underline{57.02} \\
        32 & 56.97 \\
        64 & 56.98 \\
        \bottomrule
        \end{tabular}
        \caption{Average accuracy across all corruptions for varying batch sizes. ViT-Base on ImageNet-C, using all level 5 corruptions, when adapting with \acronym{}. Adapting on 512 samples and evaluating on remaining ones.}
        \label{tab:batch_abl}
    \end{subtable}
    \hfill
    % Second Subtable: Alpha Ablation
    \begin{subtable}[b]{0.48\textwidth}
        \centering
        \begin{tabular}{ll}
        \toprule
        \textbf{Alpha} & \textbf{Accuracy (\%)} \\
        \midrule
        0.00 & 53.27 \\
        0.01 & 53.40 \\
        0.01 & 53.59 \\
        0.05 & 54.39 \\
        0.10 & 55.14 \\
        0.50 & 56.64 \\
        \bottomrule
        \end{tabular}
        \caption{Average accuracy across all corruptions for levels of $\alpha$. ViT-Base on ImageNet-C, using all level 5 corruptions, when adapting with \acronym{}-Continuous. Adapting on 512 samples and evaluating on remaining ones.}
        \label{tab:alpha_abl}
    \end{subtable}
    
    \label{tab:combined_ablations}
\end{table}

In Table \ref{tab:batch_abl} we can see that \acronym{} is batch size independent, and there is no significant difference in final accuracy when changing the batch size. This confirms our theoretical analysis of \acronym{}. When using \acronym{}-Continuous, we can see in Table \ref{tab:alpha_abl}, adapting on 512 samples per corruption, higher $\alpha$ outperforms the lower ones. This is because of the effectiveness of \acronym{} in low sample settings, not needing many samples to get the full benefits of adaptation. A lower adaptation rate will only slowly update the model and thus provide less overall accuracy.

\subsection{Neural Collapse Statistics}

\begin{table}
\centering
\caption{Neural Collapse (NC) statistics across different ViT architectures. Measured on ImageNet. Metrics used from \citep{NEURIPS2024_f88cc893}}
\begin{tabular}{lcccc}
\toprule
\textbf{Model} & \textbf{NC1} & \textbf{NC2} & \textbf{NC3} & \textbf{NC4} \\
\midrule
ViT-Small & 0.6070 & 1.2207 & 1.1969 & 0.8518 \\
ViT-Base  & 0.5110 & 1.0092 & 0.9501 & 0.9238 \\
ViT-Large & 0.4640 & 1.2548 & 1.2069 & 0.9067 \\
\bottomrule
\end{tabular}
\label{tab:nc_stats}
\end{table}

In Table \ref{tab:nc_stats} we can see the NC statistics for NC1 (Variability Collapse), NC2 (Convergence to simplex ETF), NC3 (Convergence to self-duality) and NC4 (Simplification to nearest class-center). We can see a clear decrease in NC1 as model size increases. NC2 and NC3 follow a similar trend to each other, where ViT-Base performs best. Similarly, NC4 is highest for ViT-Base, closely followed by ViT-Large. Overall, we can see that ViT-Base shows the highest neural collapse.

\subsection{Resource Consumption on Raspberry Pi} \label{app-resource-rasp}

\begin{figure}[H]
    \centering % Center the entire figure

     % --- Top Row ---
     \begin{subfigure}{0.49\textwidth}
         \centering
         \includegraphics[width=\linewidth]{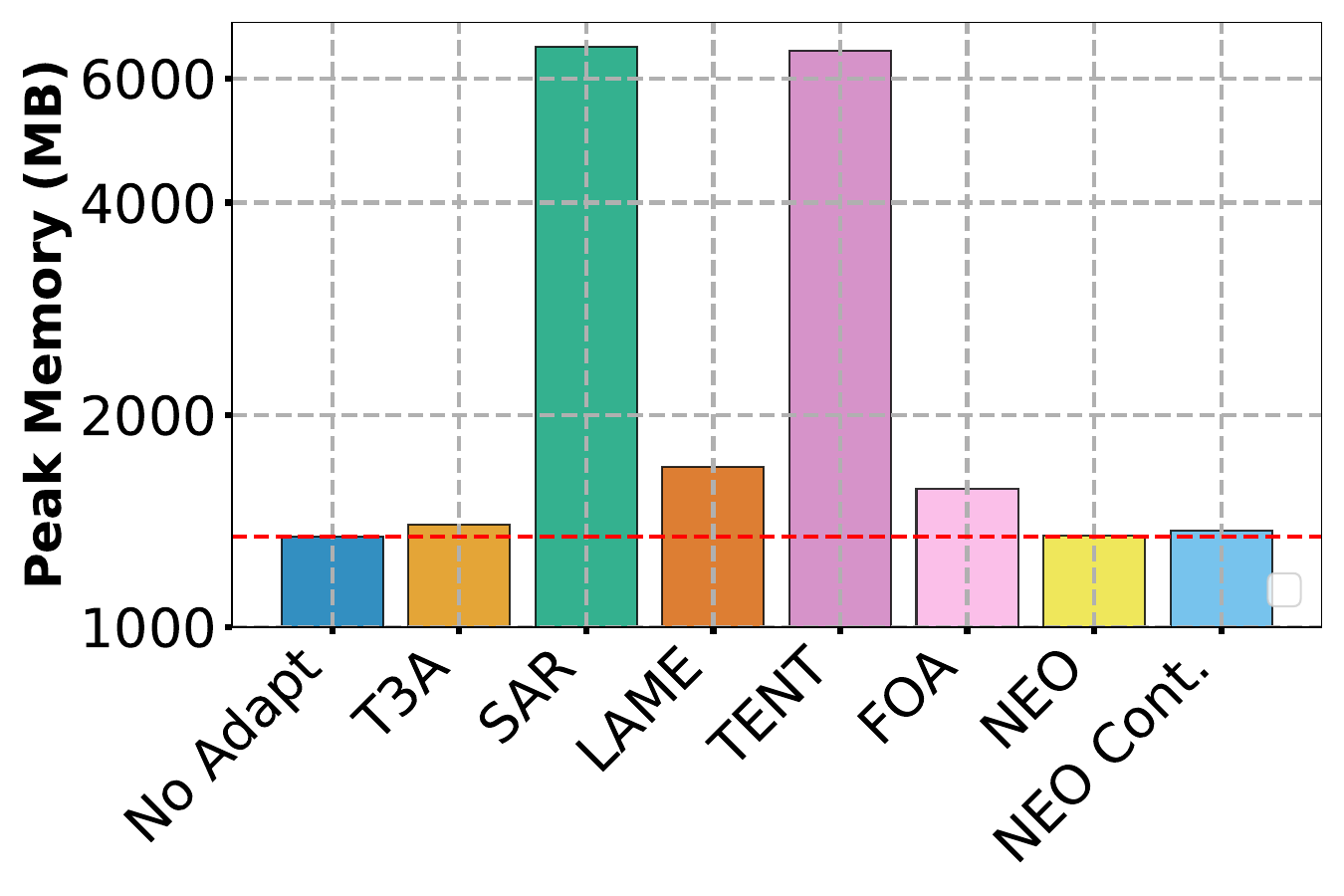}
         \caption{Peak Memory for Raspberry Pi}
     \end{subfigure}
     \begin{subfigure}{0.49\textwidth}
         \centering
         \includegraphics[width=\linewidth]{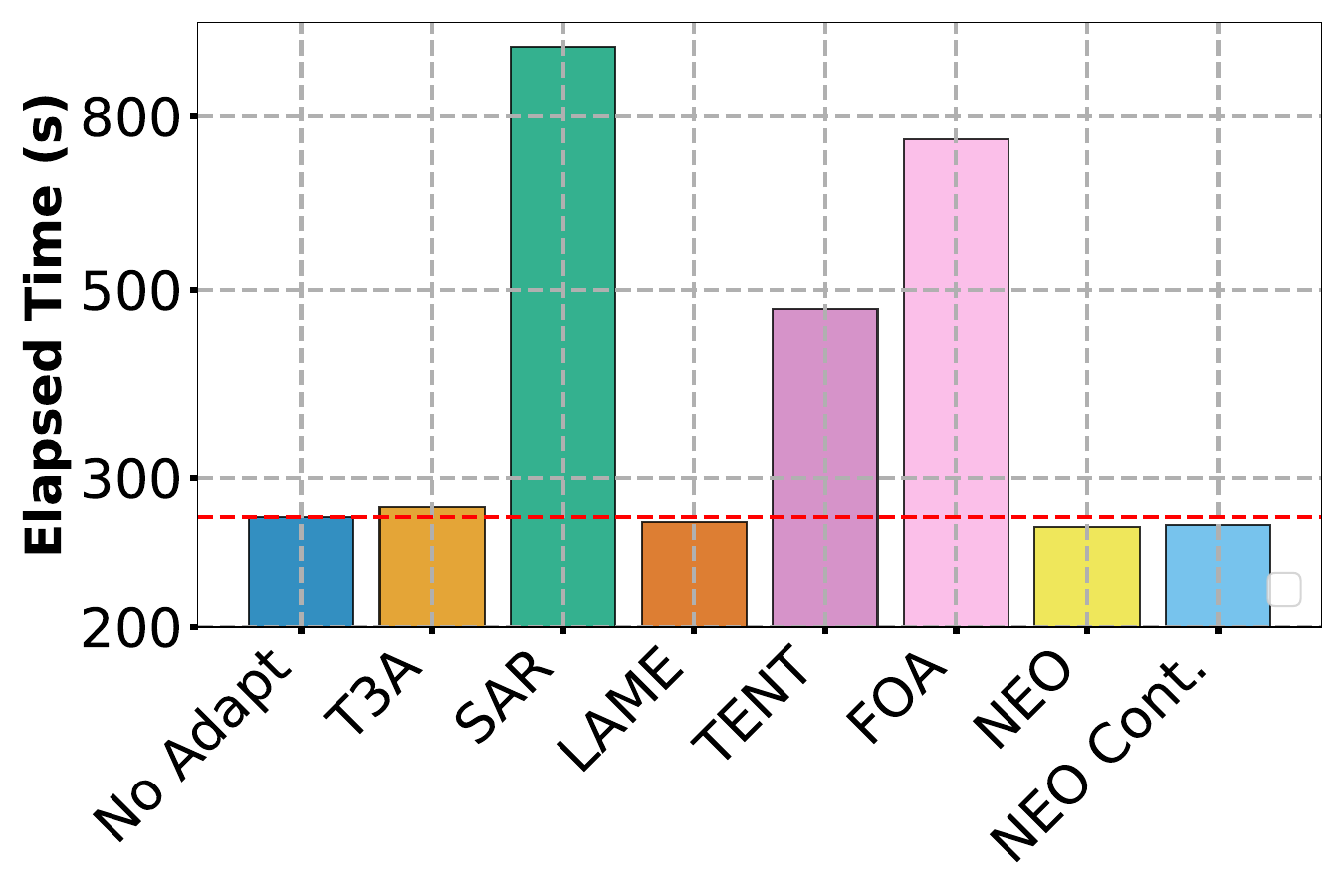}
         \caption{Elapsed Time for Raspberry Pi}
     \end{subfigure}

    % --- Main Figure Caption (the "big label") ---
    \caption{Peak memory and elapsed time for adapting on Vit-Base on ImageNet-C.  Raspberry Pi 128 samples - Gaussian Noise 5 - Batch Size = 8.}
    \label{fig:app-resource-rasp}
\end{figure}

\acronym{} is the most efficient TTA method for both memory usage and inference time. Due to the large memory requirements of CoTTA and Surgeon we could not show results for them on Raspberry Pi.

\newpage
\subsection{ImageNet-C breakdown by corruption type}

\begin{table}[h!]
\centering
\caption{Accuracy (\%) with 95\% confidence intervals across different corruption types and adaptation methods with ViT-Small on ImageNet-C. Accuracy is calculated on the 512 samples used to adapt. The highest accuracy per corruption type is in bold, and the second-highest is underlined.}
\resizebox{\textwidth}{!}{%
\begin{tabular}{lccccccccc}
\toprule
\textbf{Corruption} & \textbf{No Adapt} & \textbf{T3A} & \textbf{SAR} & \textbf{LAME} & \textbf{TENT} & \textbf{CoTTA} & \textbf{FOA} & \textbf{Surgeon} & \textbf{\acronym{}}\\
\midrule
\multicolumn{10}{l}{\textit{Noise}} \\
Gaussian   & 33.4 (0.6) & 33.0 (1.4) & 33.8 (1.4) & 32.6 (1.3) & 34.7 (1.5) & 33.0 (1.4) & 33.7 (0.8) & \underline{35.3 (2.2)} & \textbf{36.0 (0.7)} \\
Shot       & 32.1 (0.5) & 32.1 (1.0) & \textbf{34.7 (0.9)} & 31.8 (0.8) & 34.0 (0.9) & 31.7 (0.7) & 33.4 (0.7) & \underline{34.4 (1.7)} & \textbf{34.7 (0.6)} \\
Impulse    & 33.0 (0.5) & 32.7 (0.9) & \textbf{35.5 (1.2)} & 32.2 (0.8) & \underline{34.5 (1.0)} & 32.8 (1.3) & 34.0 (0.5) & 33.6 (2.6) & \textbf{35.5 (0.5)} \\
\midrule
\multicolumn{10}{l}{\textit{Blur}} \\
Defocus    & 30.9 (0.5) & 31.3 (1.2) & 32.0 (1.3) & 30.7 (1.2) & 32.1 (1.5) & 31.9 (0.8) & \textbf{37.1 (0.6)} & 32.0 (1.4) & \underline{35.9 (0.5)} \\
Glass      & 22.9 (0.6) & 23.5 (0.7) & 24.9 (0.7) & 22.7 (0.7) & 24.6 (0.8) & 23.7 (0.8) & \underline{25.1 (0.5)} & 24.3 (1.0) & \textbf{26.0 (0.6)} \\
Motion     & 41.1 (0.6) & 40.7 (1.2) & 41.6 (1.2) & 40.2 (1.1) & 41.9 (1.2) & 40.4 (1.1) & \underline{44.2 (0.5)} & 40.8 (1.9) & \textbf{44.4 (0.6)} \\
Zoom       & 32.5 (0.6) & 32.3 (1.2) & 33.2 (1.2) & 31.7 (1.2) & 33.5 (1.1) & 31.9 (1.0) & \underline{35.2 (0.5)} & 32.9 (2.2) & \textbf{36.4 (0.6)} \\
\midrule
\multicolumn{10}{l}{\textit{Weather}} \\
Snow       & 43.6 (0.6) & 44.4 (0.9) & 44.9 (1.0) & 42.1 (0.8) & 44.9 (0.9) & 44.2 (0.9) & \underline{47.2 (0.6)} & 43.7 (3.4) & \textbf{48.5 (0.5)} \\
Frost      & 43.3 (0.7) & 44.2 (1.4) & 44.9 (1.4) & 43.4 (1.4) & 45.0 (1.5) & 43.8 (1.3) & \underline{46.4 (0.6)} & 44.6 (1.6) & \textbf{47.6 (0.8)} \\
Fog        & 46.3 (0.6) & 46.5 (0.9) & 46.3 (1.1) & 45.3 (0.8) & 46.0 (1.0) & 47.0 (1.0) & \underline{49.7 (0.6)} & 46.3 (1.6) & \textbf{51.9 (0.5)} \\
Brightness & 70.4 (0.6) & 70.8 (0.8) & 71.3 (0.9) & 70.4 (0.9) & 71.4 (0.9) & 70.3 (1.3) & 71.3 (0.7) & \underline{71.5 (2.5)} & \textbf{71.9 (0.5)} \\
\midrule
\multicolumn{10}{l}{\textit{Digital}} \\
Contrast   & 16.0 (0.5) & 16.0 (0.8) & 17.9 (2.2) & 15.7 (0.9) & 18.7 (0.9) & 16.1 (0.9) & \textbf{21.5 (0.5)} & 15.9 (1.8) & \underline{19.7 (0.4)} \\
Elastic    & 36.9 (0.6) & 36.9 (1.2) & 36.4 (1.8) & 35.5 (1.2) & 37.6 (1.2) & 37.0 (0.9) & \underline{42.4 (0.6)} & 38.3 (3.1) & \textbf{43.8 (0.5)} \\
Pixelate   & 55.4 (0.6) & 55.6 (1.0) & 56.5 (1.0) & 55.2 (1.1) & \underline{56.9 (1.0)} & 56.1 (1.2) & 56.3 (0.7) & 54.8 (2.8) & \textbf{57.7 (0.6)} \\
JPEG       & 55.2 (0.5) & 55.4 (0.9) & 56.3 (1.0) & 55.0 (0.8) & 56.2 (1.0) & 55.2 (1.1) & \textbf{57.9 (0.7)} & 54.3 (2.3) & \underline{57.8 (0.5)} \\
\midrule
\textbf{ImageNet-C} & 39.6 (0.6) & 39.7 (1.1) & 40.7 (1.3) & 39.0 (1.0) & 40.8 (1.1) & 39.7 (1.1) & \underline{42.4 (0.6)} & 40.2 (2.2) & \textbf{43.2 (0.6)} \\
\bottomrule
\end{tabular}%
}
\end{table}

\begin{table}[h!]
\centering
\caption{Accuracy (\%) with 95\% confidence intervals across different corruption types and adaptation methods with ViT-Large on ImageNet-C. Accuracy is calculated on the 512 samples used to adapt. The highest accuracy per corruption type is in bold, and the second-highest is underlined.}
\resizebox{\textwidth}{!}{%
\begin{tabular}{lccccccccc}
\toprule
\textbf{Corruption} & \textbf{No Adapt} & \textbf{T3A} & \textbf{SAR} & \textbf{LAME} & \textbf{TENT} & \textbf{CoTTA} & \textbf{FOA} & \textbf{Surgeon} & \textbf{NEO}\\
\midrule
\multicolumn{10}{l}{\textit{Noise}} \\
Gaussian   & 63.1 (0.7) & 63.4 (1.6) & 63.4 (1.5) & 63.0 (1.6) & 63.9 (1.6) & 63.4 (1.5) & 63.1 (0.6) & \textbf{65.6 (1.5)} & \underline{64.5 (0.7)} \\
Shot       & 61.6 (0.6) & 61.1 (1.1) & 61.1 (1.1) & 60.7 (1.0) & 61.9 (1.0) & 61.6 (1.3) & 62.2 (0.7) & \textbf{64.6 (2.9)} & \underline{62.9 (0.5)} \\
Impulse    & 63.7 (0.6) & 64.3 (1.4) & 64.4 (1.3) & 64.0 (1.3) & \underline{64.6 (1.4)} & 64.4 (1.1) & 64.0 (0.6) & 64.1 (1.6) & \textbf{65.0 (0.6)} \\
\midrule
\multicolumn{10}{l}{\textit{Blur}} \\
Defocus    & 52.7 (0.7) & 53.1 (1.2) & 53.2 (1.2) & 52.6 (1.2) & 53.9 (1.2) & 52.8 (0.9) & \textbf{56.8 (0.5)} & 53.5 (2.1) & \underline{56.2 (0.7)} \\
Glass      & 44.8 (0.6) & 44.8 (1.0) & 44.9 (1.1) & 44.3 (1.1) & 45.7 (0.9) & 45.2 (1.1) & \underline{46.0 (0.7)} & 45.5 (2.3) & \textbf{46.4 (0.5)} \\
Motion     & 60.5 (0.6) & 60.7 (1.4) & 60.7 (1.3) & 60.3 (1.3) & 61.2 (1.4) & 61.0 (1.3) & \underline{61.8 (0.6)} & 60.5 (3.1) & \textbf{62.4 (0.6)} \\
Zoom       & 55.0 (0.7) & 55.0 (1.7) & 55.2 (1.7) & 54.6 (1.7) & 55.8 (1.7) & 55.5 (1.6) & \underline{56.8 (0.6)} & 54.9 (1.5) & \textbf{57.0 (0.7)} \\
\midrule
\multicolumn{10}{l}{\textit{Weather}} \\
Snow       & 66.3 (0.8) & 65.8 (2.0) & 65.9 (2.0) & 65.1 (2.0) & 66.2 (1.9) & 66.2 (1.5) & \underline{66.6 (0.6)} & 66.1 (3.2) & \textbf{68.1 (0.7)} \\
Frost      & 62.3 (0.6) & 62.3 (1.1) & 62.4 (1.1) & 61.7 (1.1) & 62.7 (1.1) & 62.9 (1.1) & \underline{63.9 (0.6)} & 62.2 (2.8) & \textbf{64.4 (0.7)} \\
Fog        & 62.6 (0.5) & 61.9 (1.4) & 62.4 (1.2) & 61.4 (1.2) & 62.2 (1.0) & 62.2 (1.5) & \underline{64.7 (0.5)} & 62.1 (2.2) & \textbf{67.2 (0.6)} \\
Brightness & 80.2 (0.5) & \underline{80.5 (1.1)} & 80.3 (0.9) & 80.1 (1.2) & \textbf{80.6 (1.0)} & 80.1 (1.0) & 80.1 (0.6) & 79.7 (2.6) & \textbf{80.6 (0.5)} \\
\midrule
\multicolumn{10}{l}{\textit{Digital}} \\
Contrast   & 39.7 (0.5) & 39.2 (1.2) & 42.3 (1.6) & 39.2 (1.2) & 40.4 (1.3) & 39.0 (1.2) & \textbf{46.0 (0.6)} & 41.1 (1.5) & \underline{42.9 (0.6)} \\
Elastic    & 56.0 (0.6) & 55.7 (1.0) & 55.4 (1.2) & 54.7 (1.0) & 56.1 (1.1) & 55.5 (0.9) & \underline{59.0 (0.5)} & 54.7 (3.1) & \textbf{59.3 (0.6)} \\
Pixelate   & 74.9 (0.6) & 75.6 (1.3) & 74.8 (1.8) & \underline{75.4 (1.3)} & \textbf{76.2 (1.2)} & 74.7 (1.2) & 75.0 (0.4) & 74.4 (2.0) & \underline{75.9 (0.5)} \\
JPEG       & 72.7 (0.5) & 72.7 (1.0) & 73.0 (1.1) & 72.4 (1.1) & 72.9 (1.1) & 72.2 (1.2) & \underline{73.7 (0.6)} & 71.1 (3.0) & \textbf{74.1 (0.5)} \\
\midrule
\textbf{ImageNet-C} & 61.1 (0.6) & 61.1 (1.3) & 61.3 (1.4) & 60.6 (1.3) & 61.6 (1.3) & 61.1 (1.2) & \underline{62.6 (0.6)} & 61.3 (2.5) & \textbf{63.1 (0.6)} \\
\bottomrule
\end{tabular}%
}
\end{table}

\newpage
\subsection{Accuracy results on full datasets}

All results are averaged over seeds 1234, 2020, 9999. Not all TTA methods are available for all experiments.

\begin{figure}[H]
    \centering
    \includegraphics[width=0.6\linewidth]{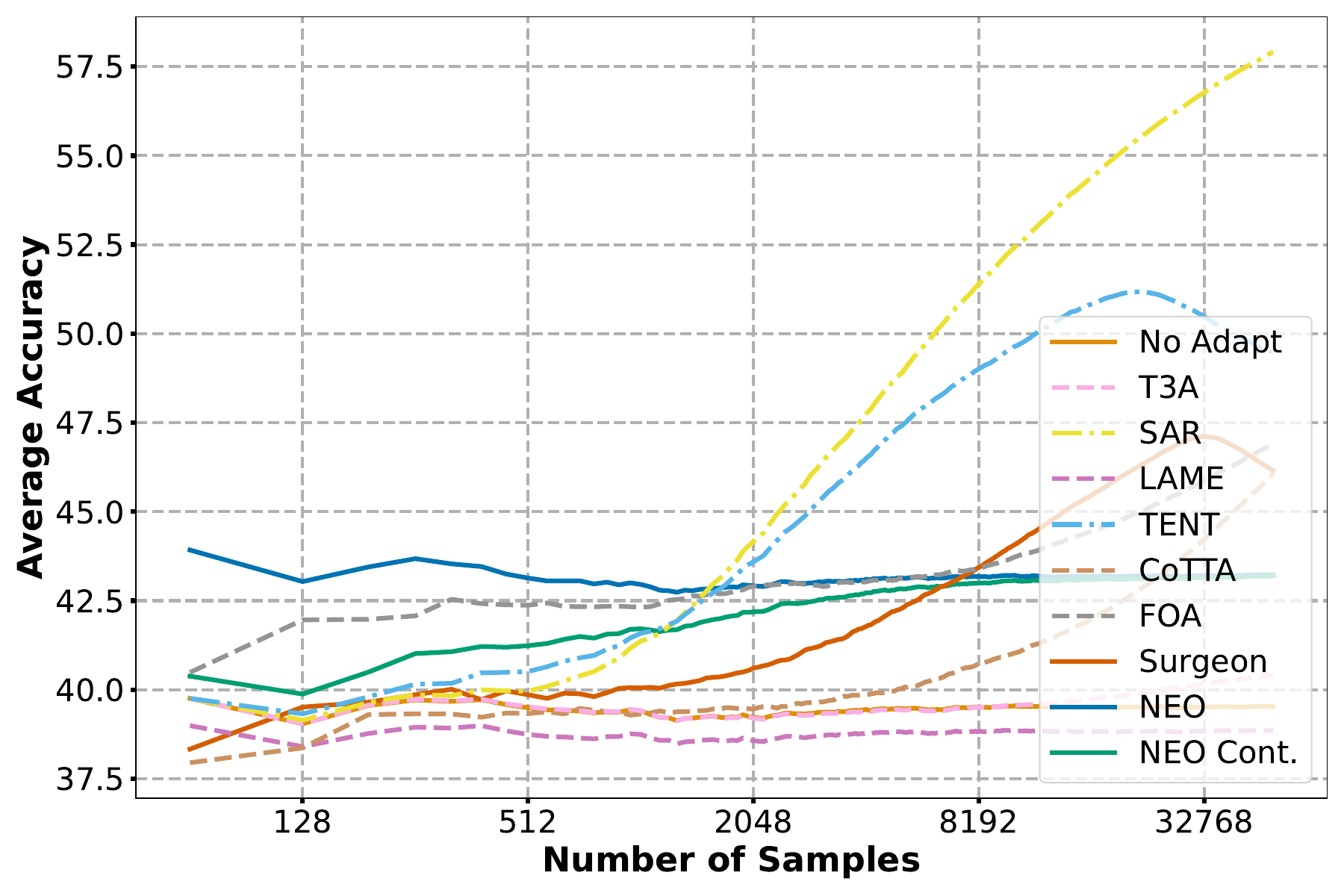}
    \caption{ViT-S - ImageNet-C}

\end{figure}

\begin{figure}[H]
    \centering
    \includegraphics[width=0.6\linewidth]{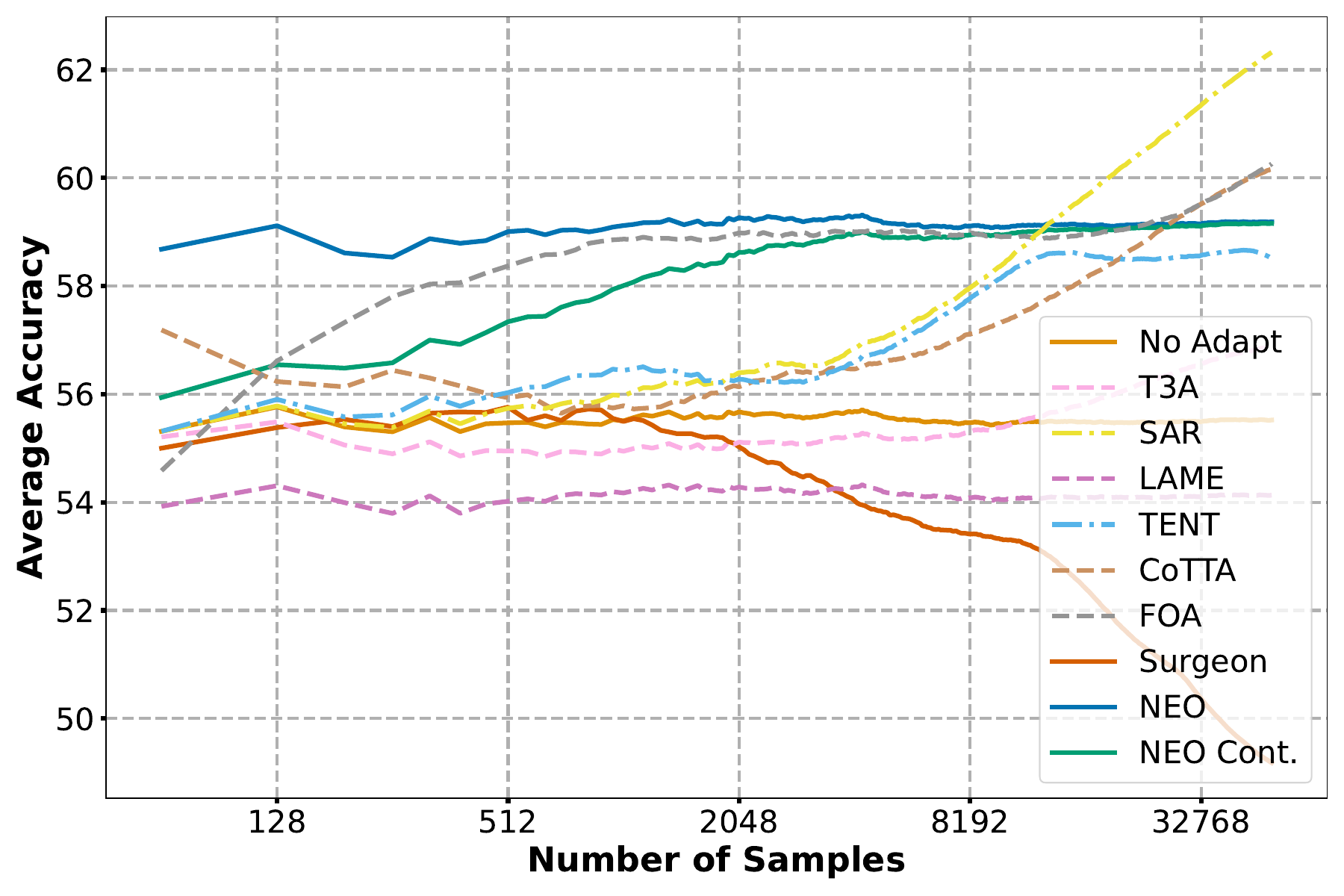}
    \caption{ViT-B - ImageNet-C}

\end{figure}

\begin{figure}[H]
    \centering
    \includegraphics[width=0.6\linewidth]{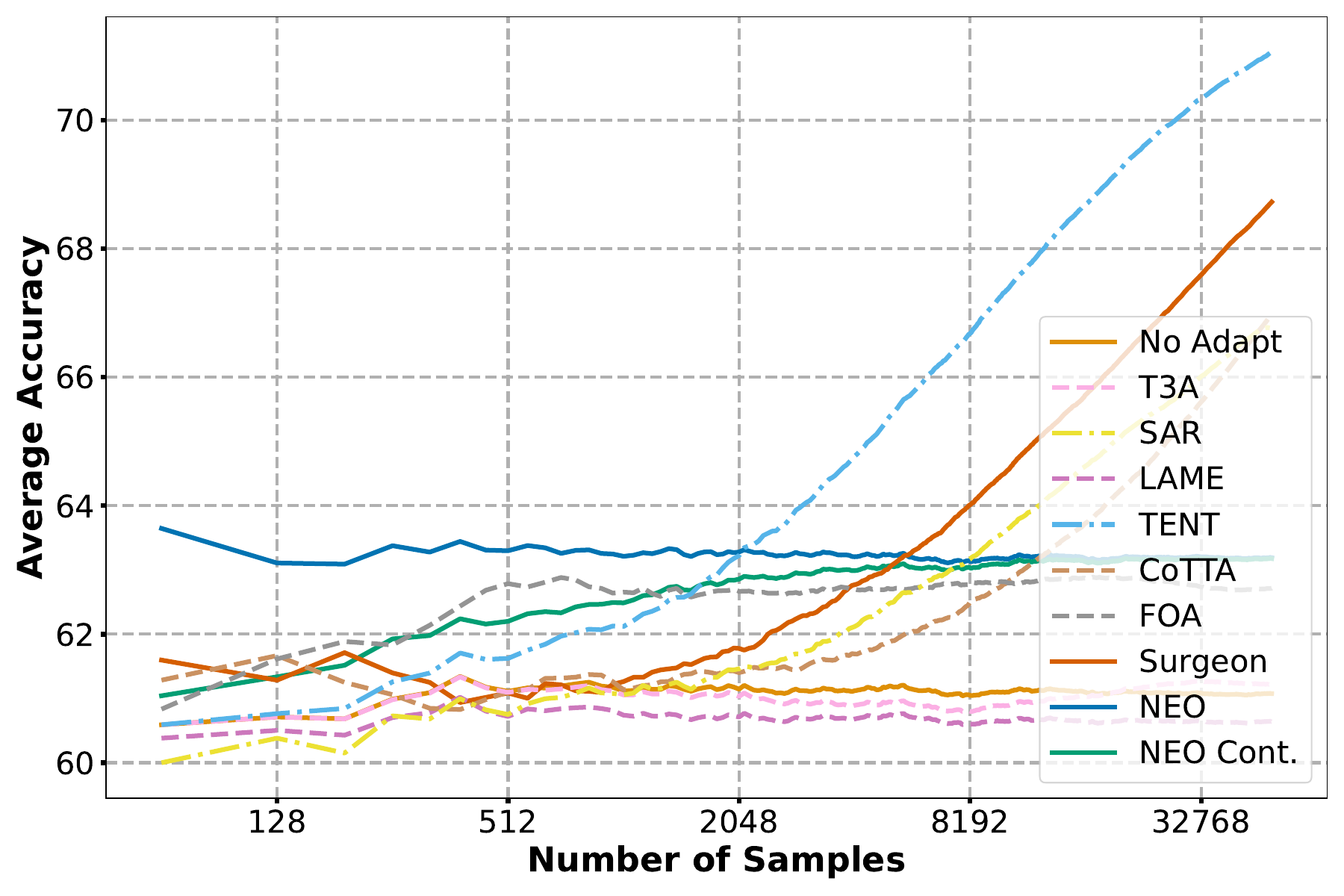}
    \caption{ViT-L - ImageNet-C}

\end{figure}

\begin{figure}[H]
    \centering
    \includegraphics[width=0.6\linewidth]{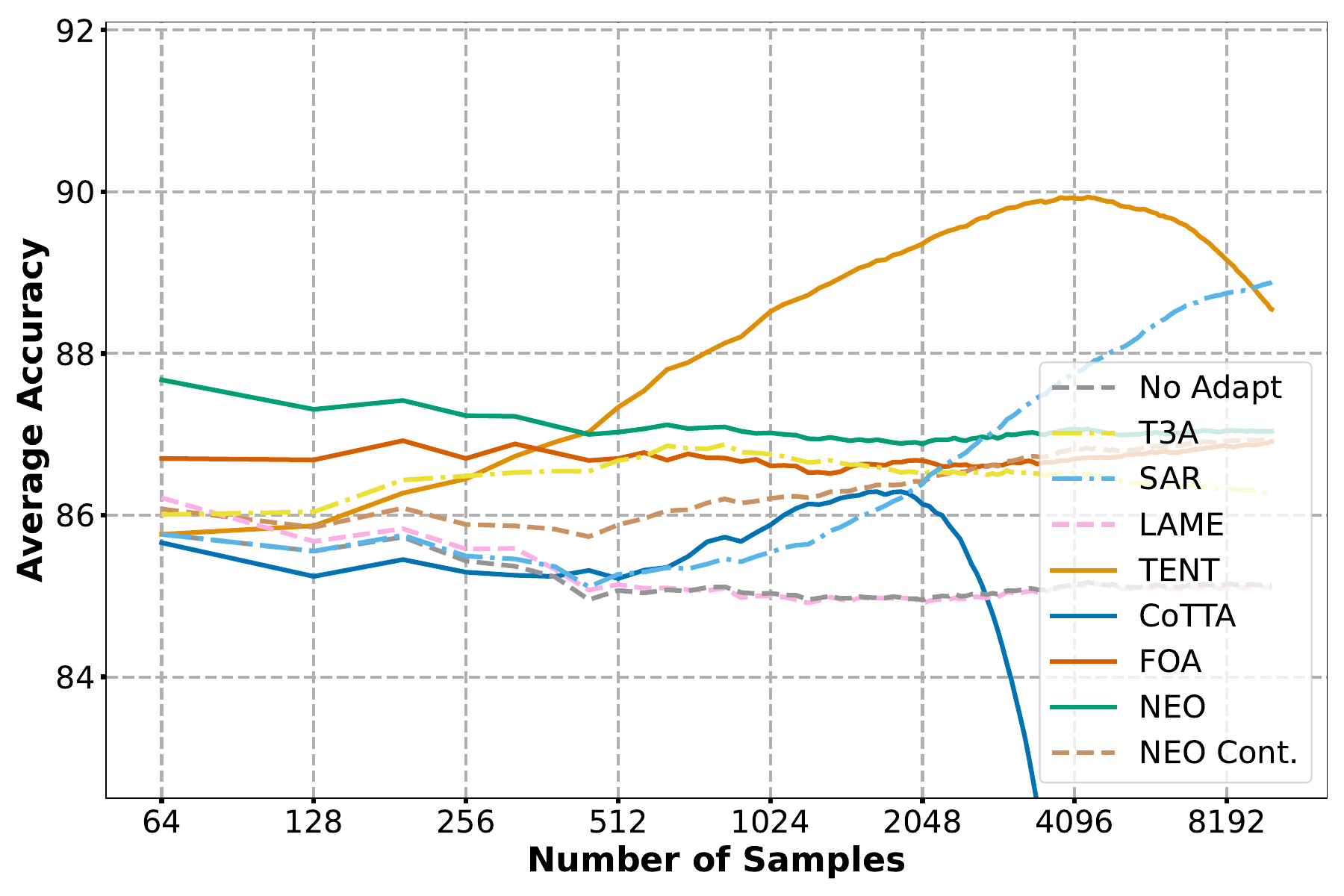}
    \caption{ViT-S - CIFAR-10-C}

\end{figure}

\begin{figure}[H]
    \centering
    \includegraphics[width=0.6\linewidth]{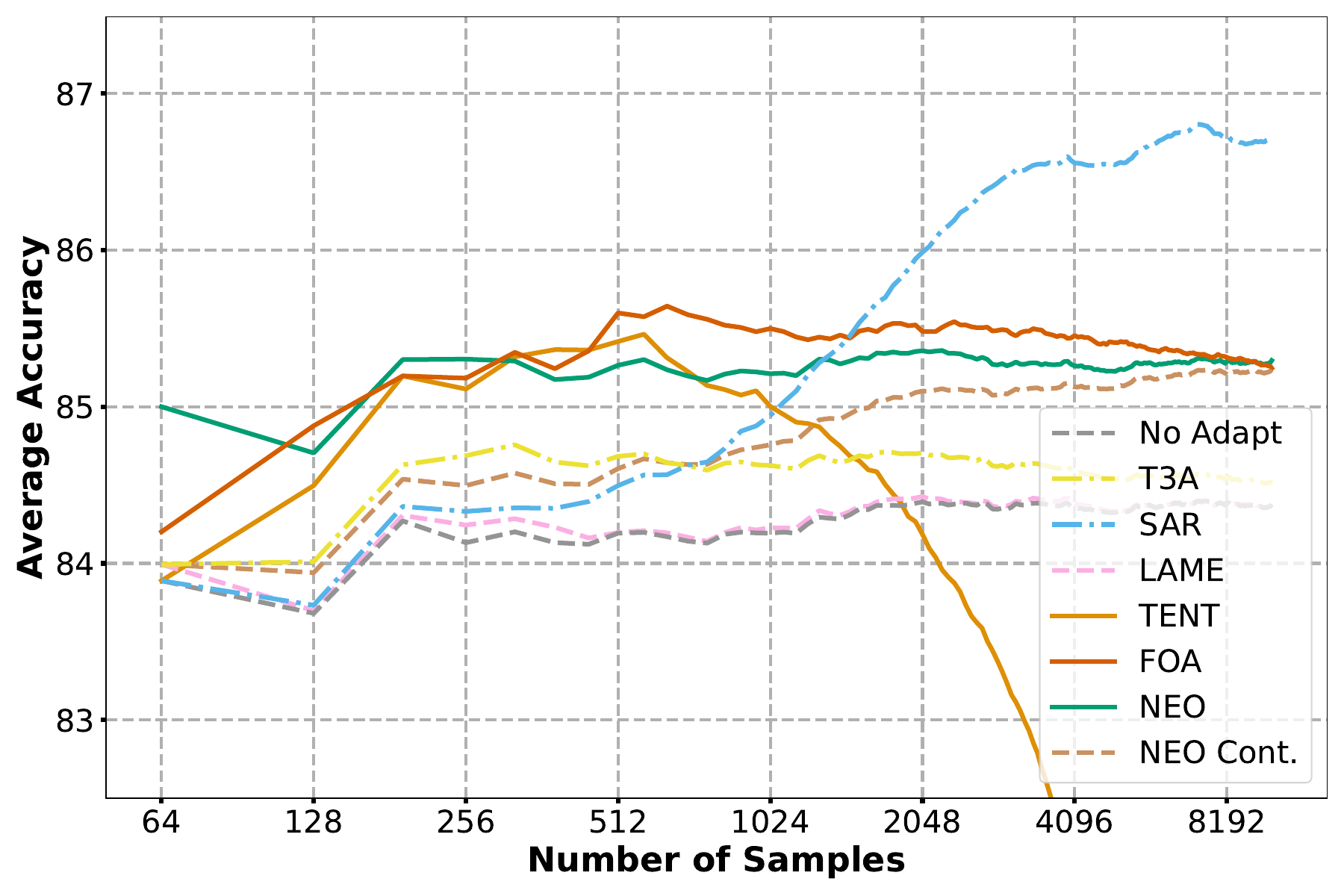}
    \caption{ViT-B - CIFAR-10-C}

\end{figure}

\begin{figure}[H]
    \centering
    \includegraphics[width=0.6\linewidth]{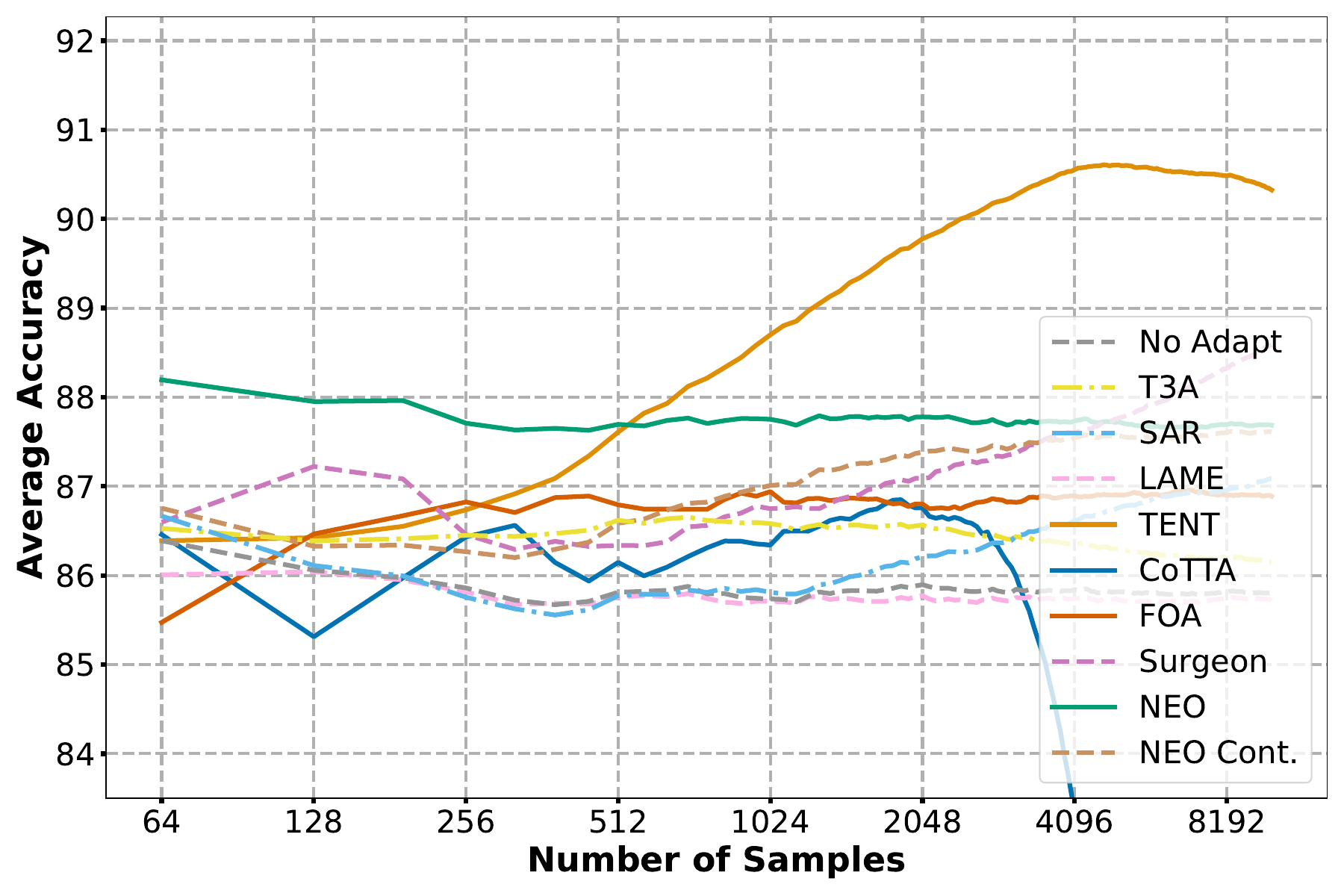}
    \caption{ViT-L - CIFAR-10-C}

\end{figure}

\begin{figure}[H]
    \centering
    \includegraphics[width=0.6\linewidth]{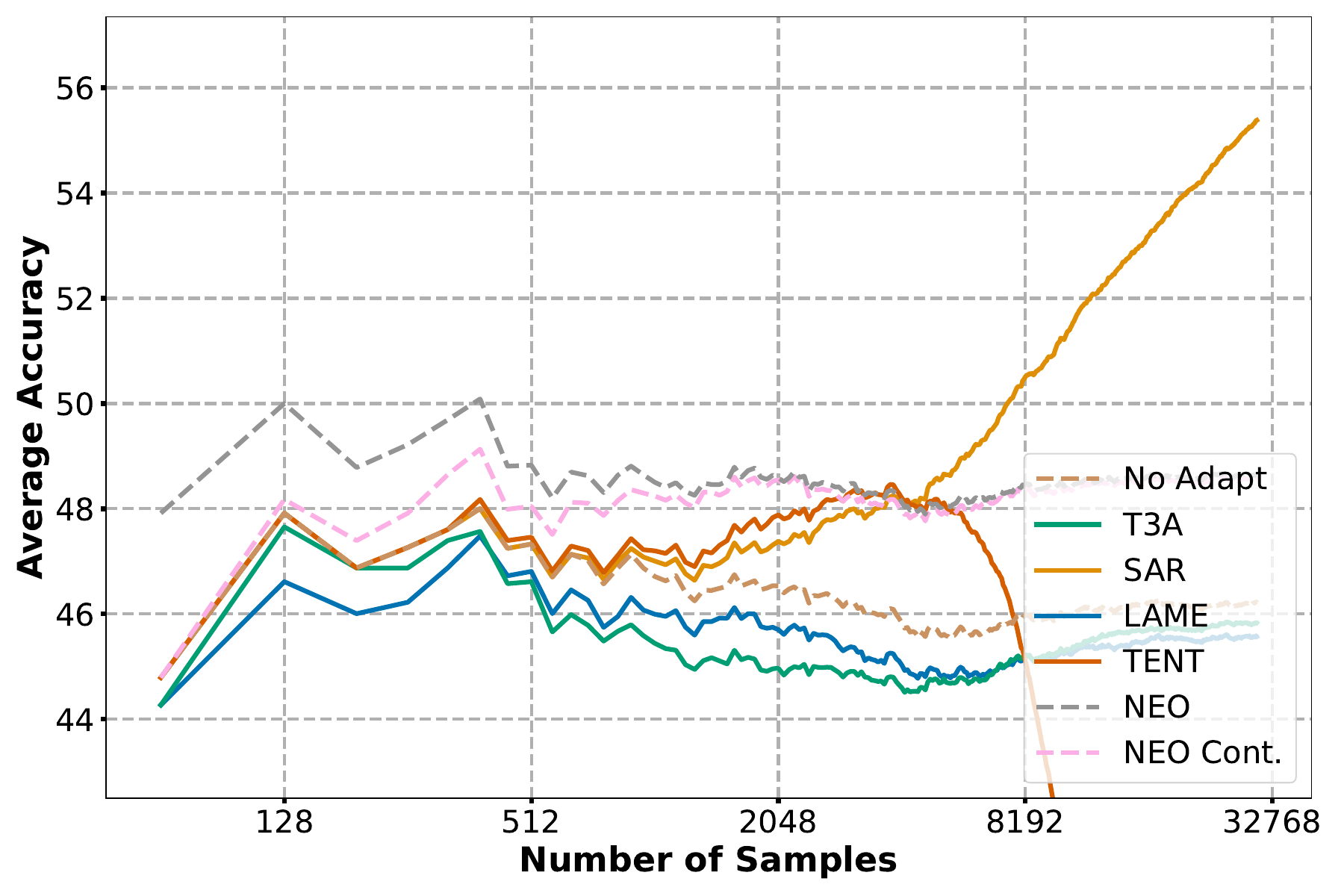}
    \caption{ViT-S - ImageNet-R}

\end{figure}

\begin{figure}[H]
    \centering
    \includegraphics[width=0.6\linewidth]{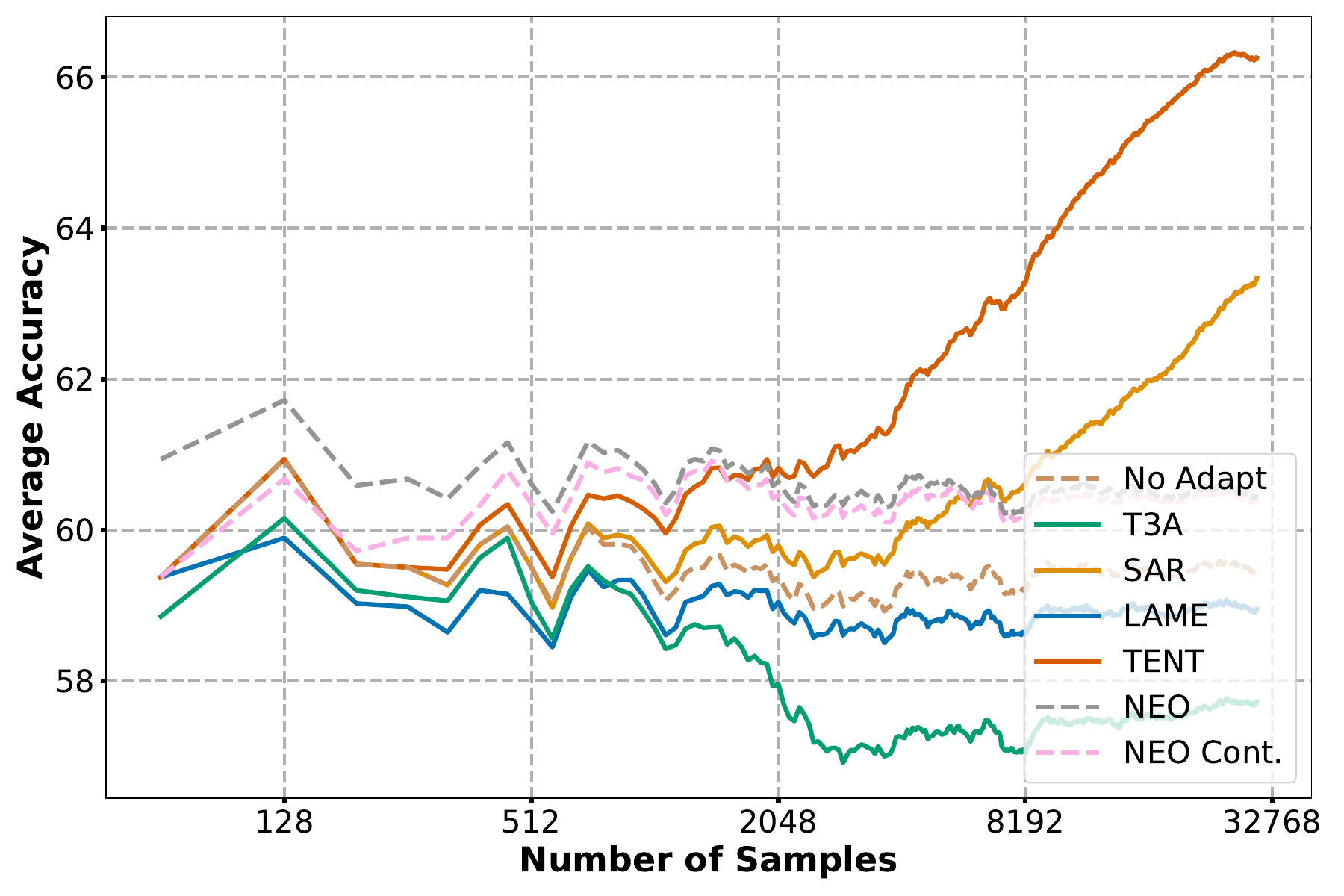}
    \caption{ViT-B - ImageNet-R}

\end{figure}

\begin{figure}[H]
    \centering
    \includegraphics[width=0.6\linewidth]{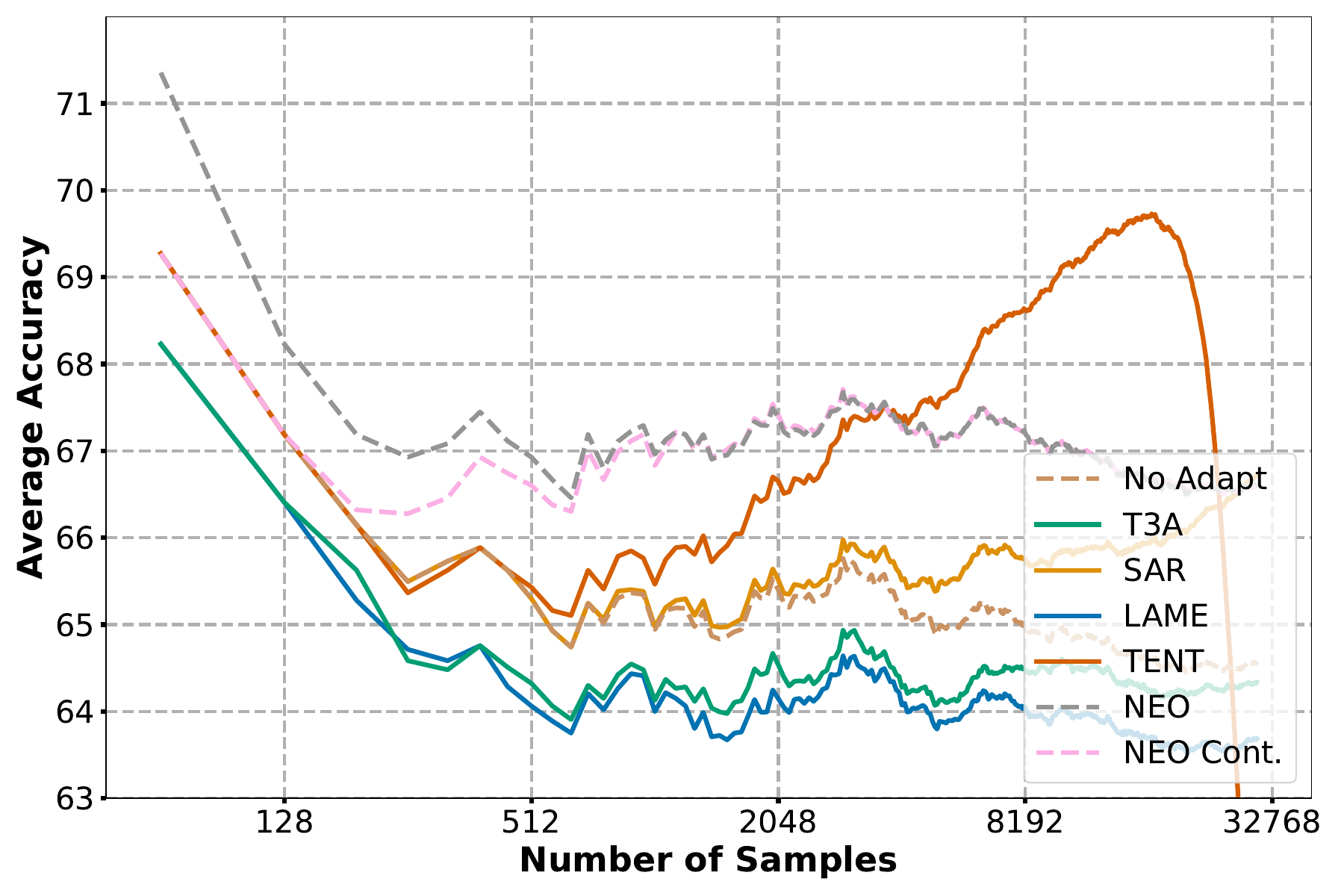}
    \caption{ViT-L - ImageNet-R}

\end{figure}

\begin{figure}[H]
    \centering
    \includegraphics[width=0.6\linewidth]{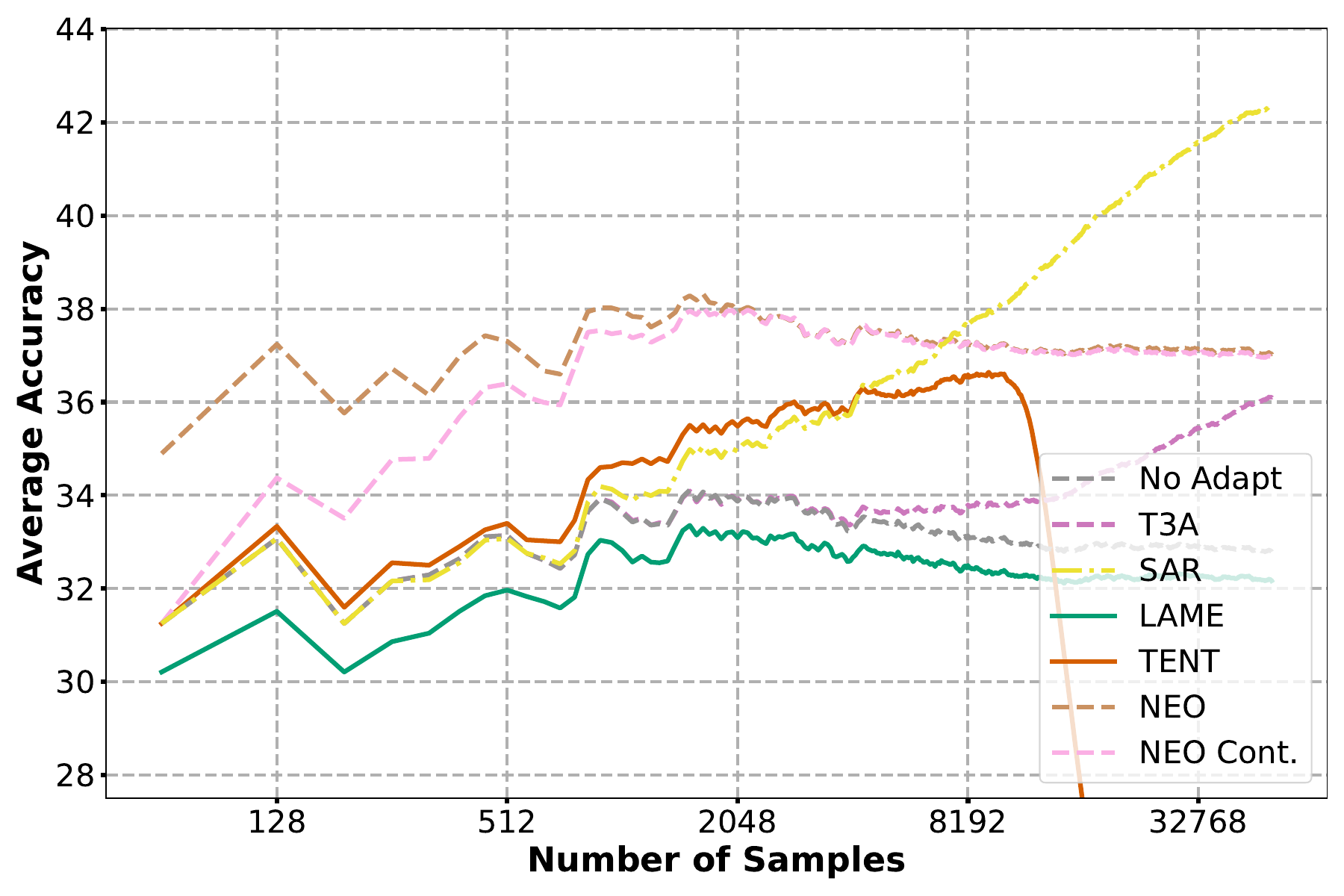}
    \caption{ViT-S - ImageNet-S}

\end{figure}

\begin{figure}[H]
    \centering
    \includegraphics[width=0.6\linewidth]{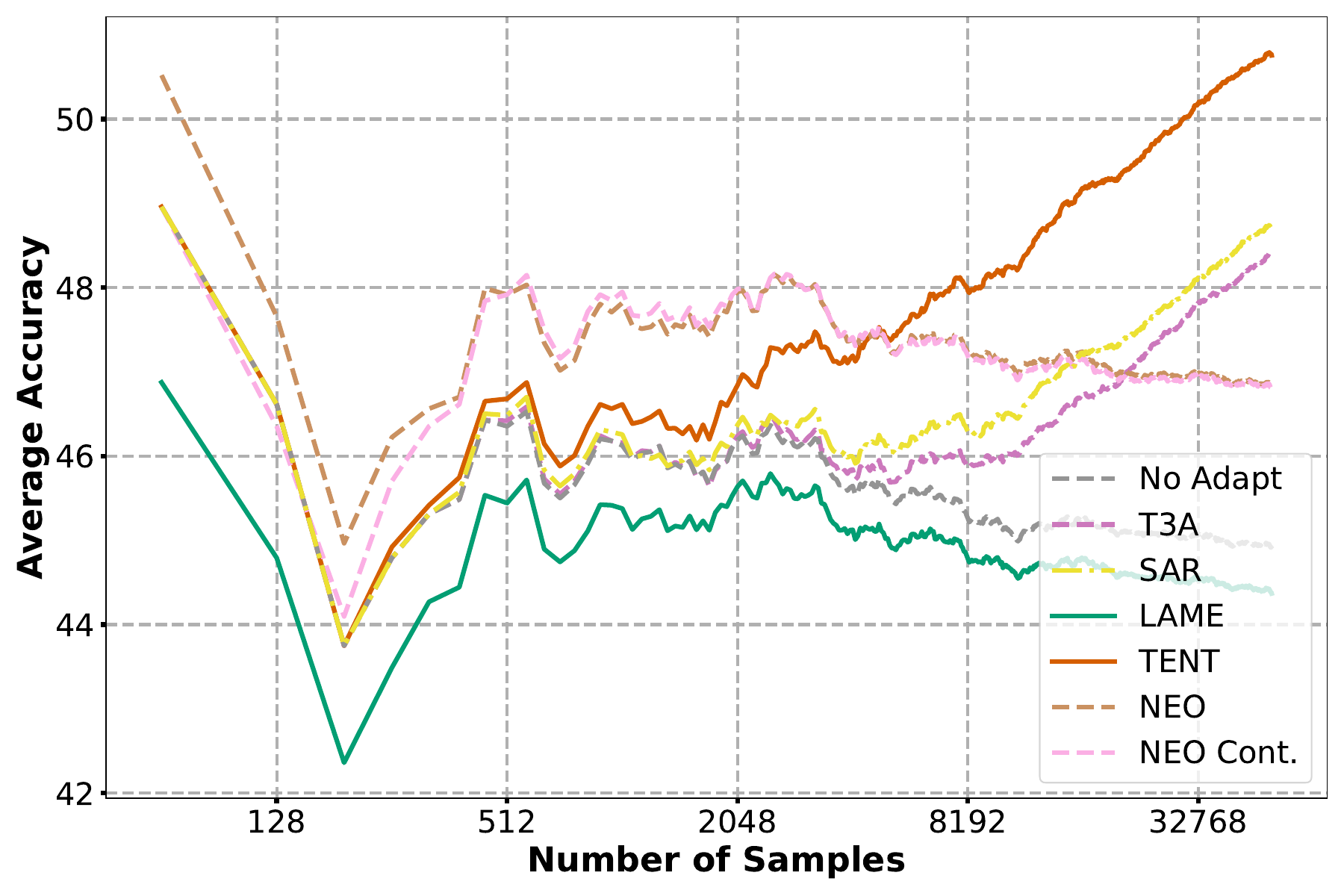}
    \caption{ViT-B - ImageNet-S}

\end{figure}

\begin{figure}[H]
    \centering
    \includegraphics[width=0.6\linewidth]{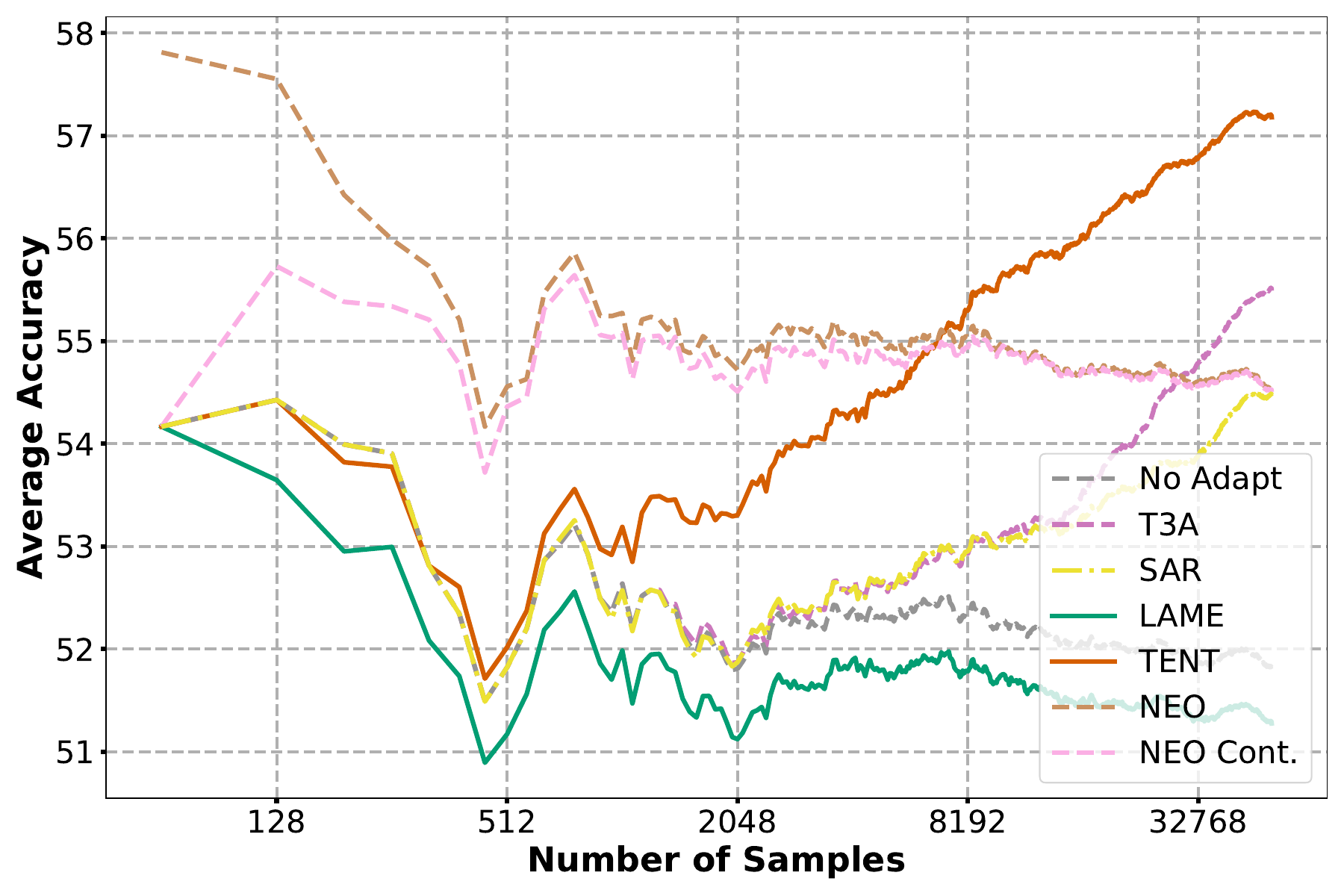}
    \caption{ViT-L - ImageNet-S}

\end{figure}

\newpage
\subsection{ECE results on full datasets}

Calculated over seeds 1234, 2020 and 9999.

\begin{figure}[H]
    \centering
    \includegraphics[width=0.6\linewidth]{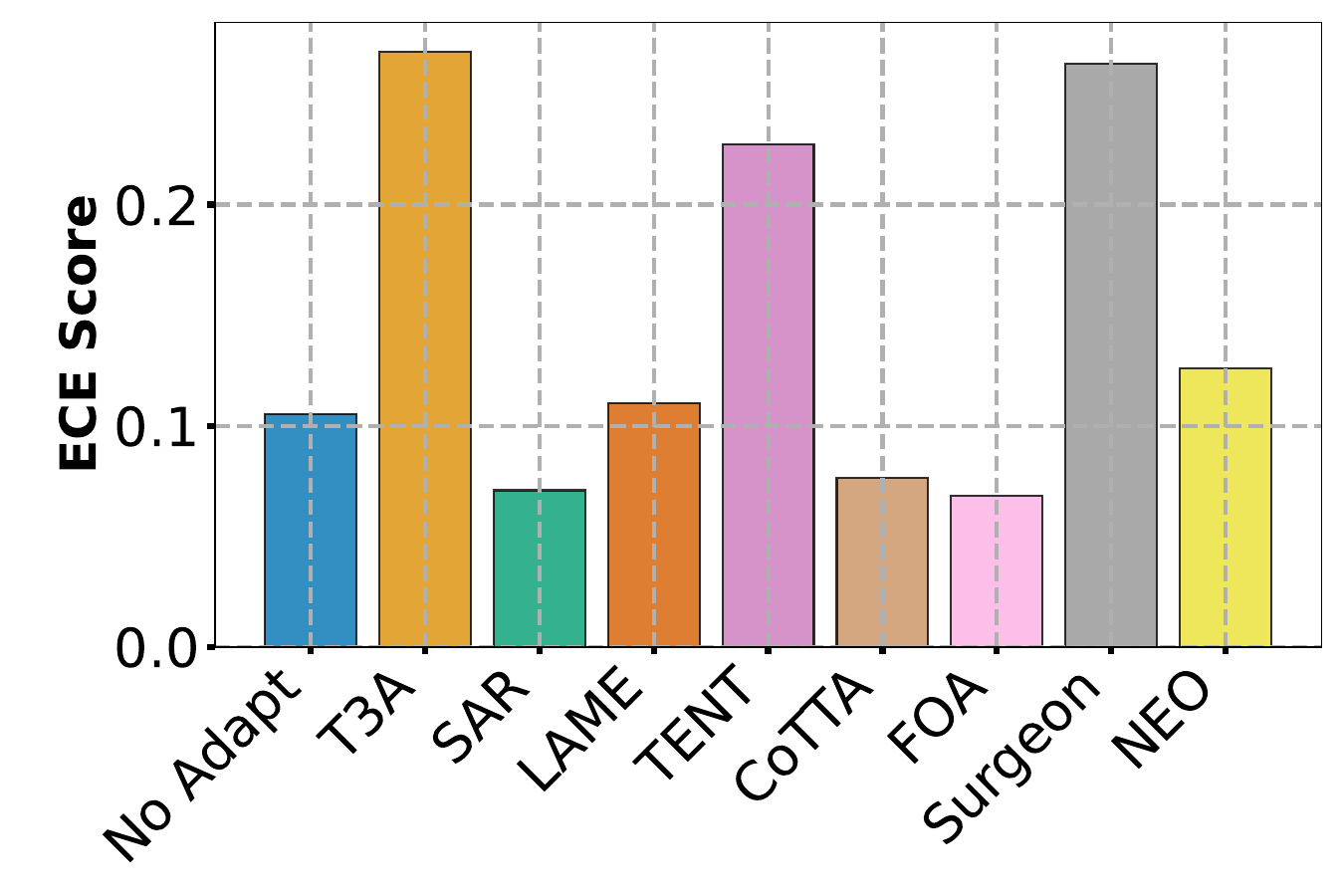}
    \caption{ViT-B - ImageNet-C}

\end{figure}

\begin{figure}[H]
    \centering
    \includegraphics[width=0.6\linewidth]{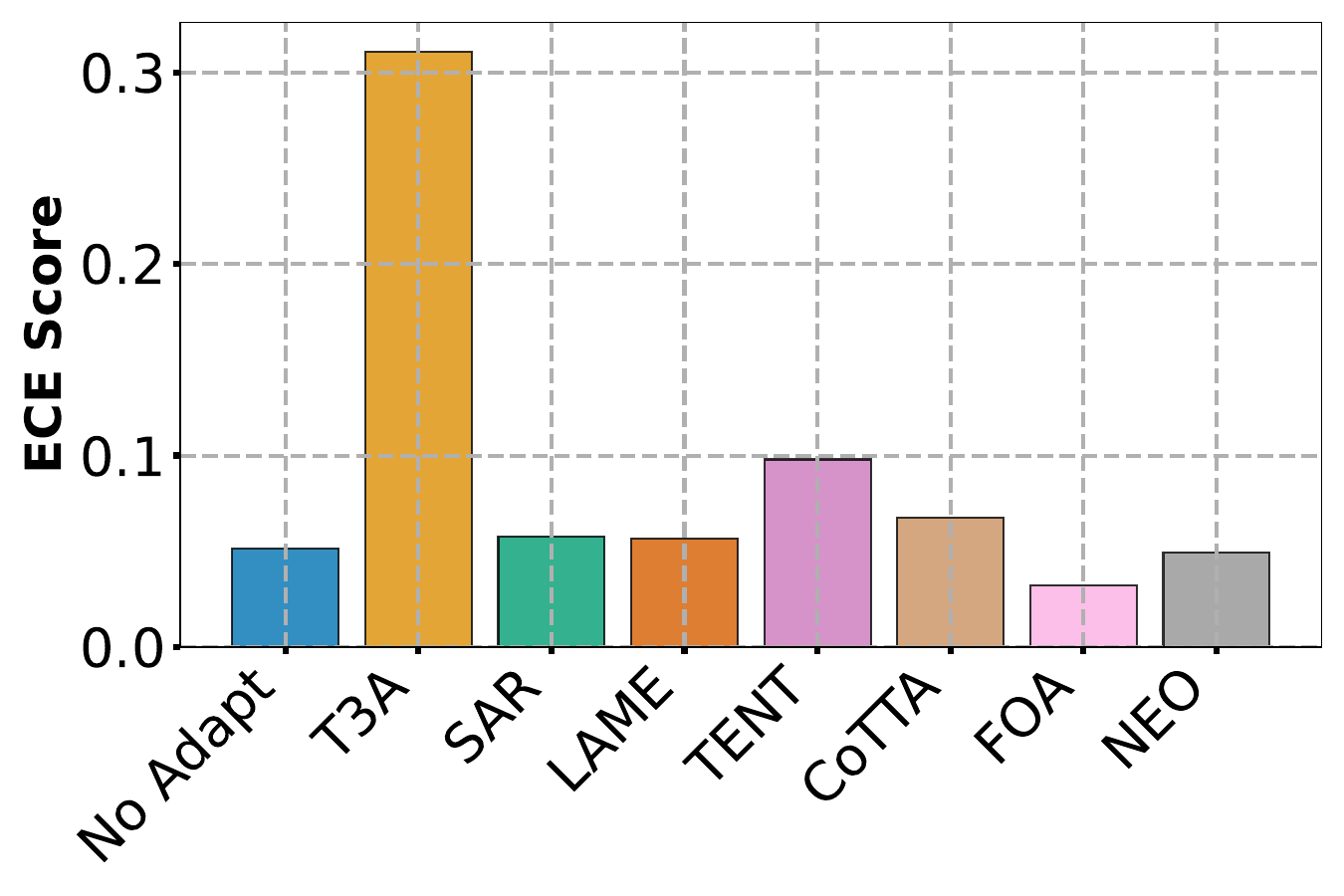}
    \caption{ViT-L - ImageNet-C}

\end{figure}

\begin{figure}[H]
    \centering
    \includegraphics[width=0.6\linewidth]{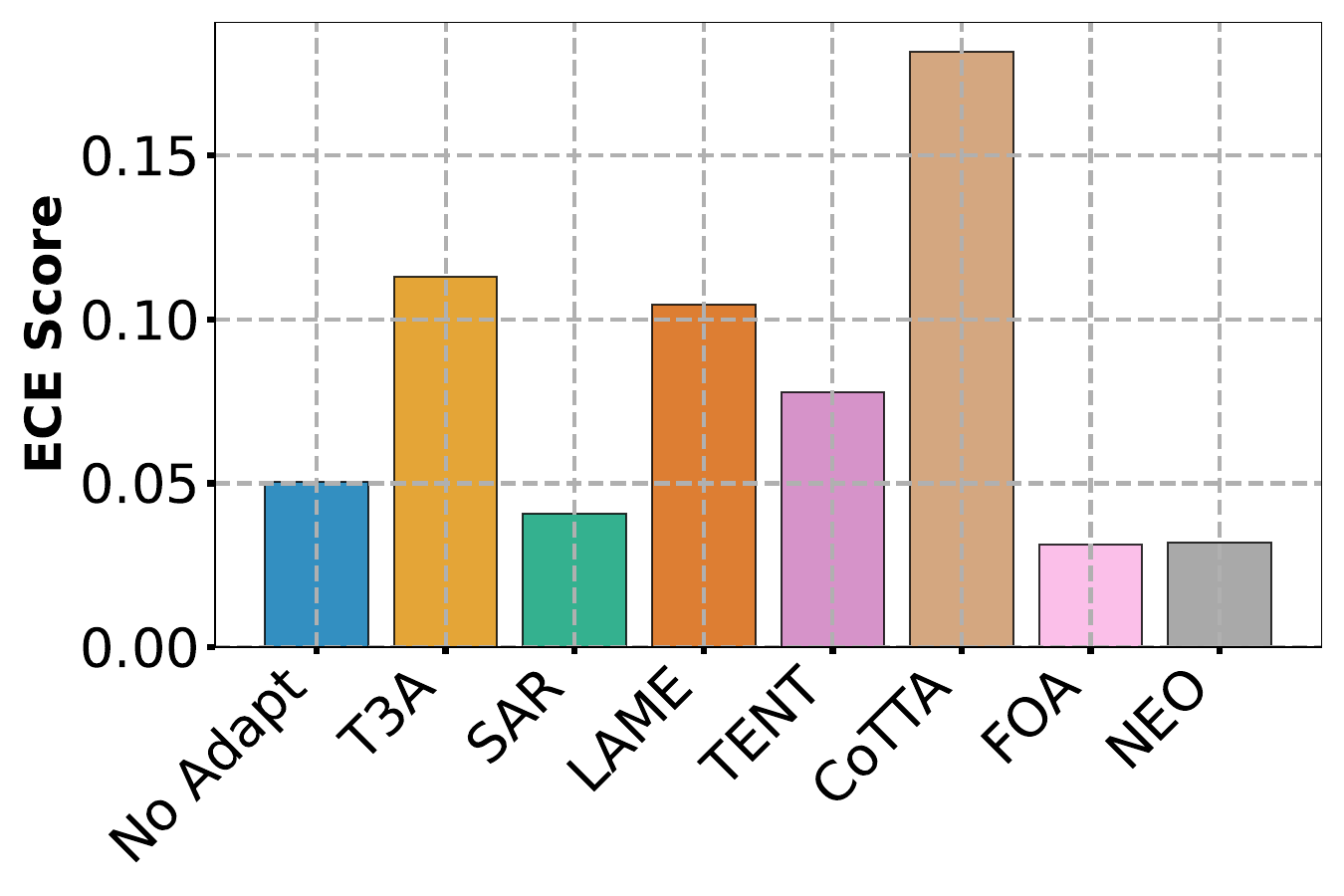}
    \caption{ViT-S - CIFAR-10-C}

\end{figure}

\begin{figure}[H]
    \centering
    \includegraphics[width=0.6\linewidth]{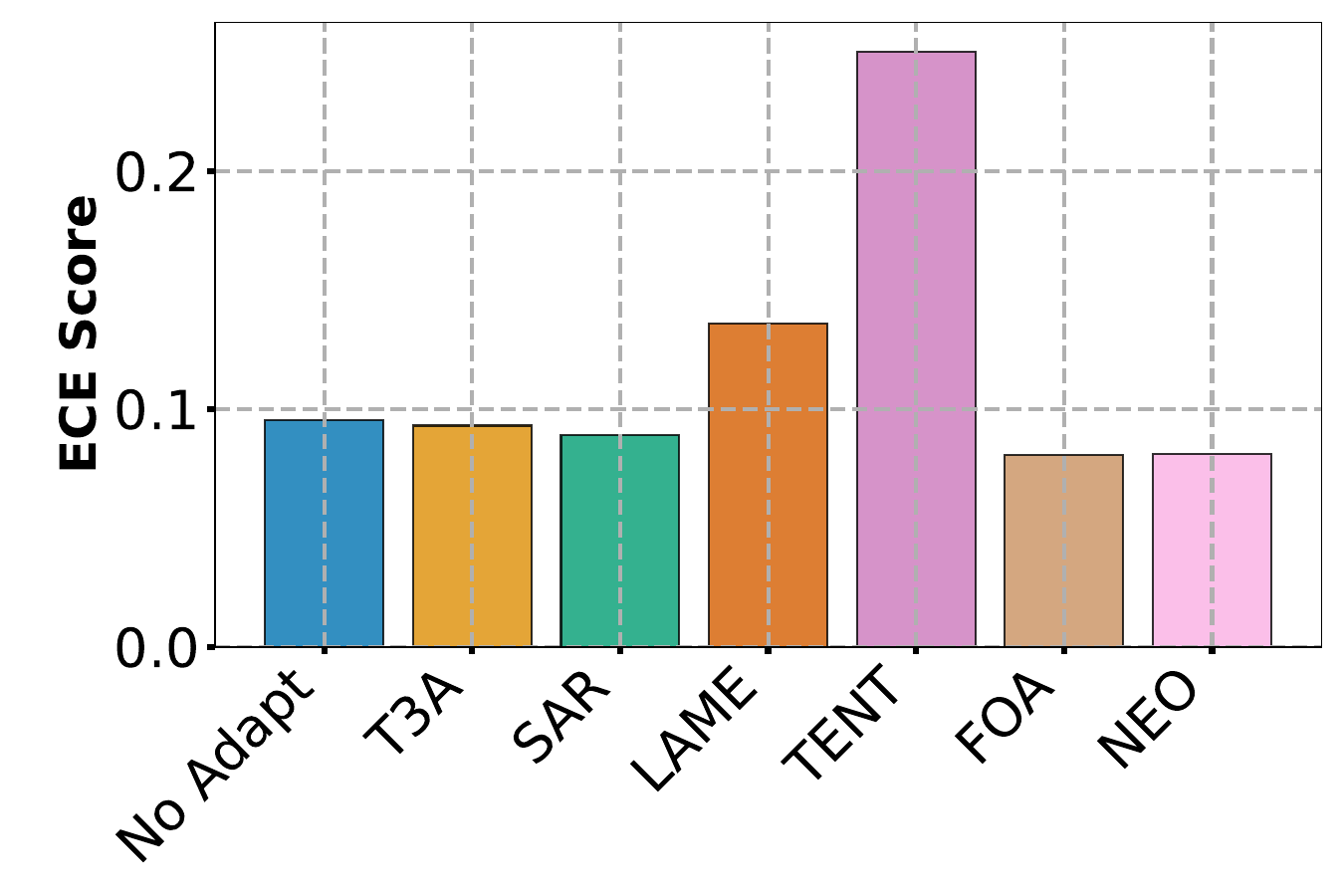}
    \caption{ViT-B - CIFAR-10-C}

\end{figure}

\begin{figure}[H]
    \centering
    \includegraphics[width=0.6\linewidth]{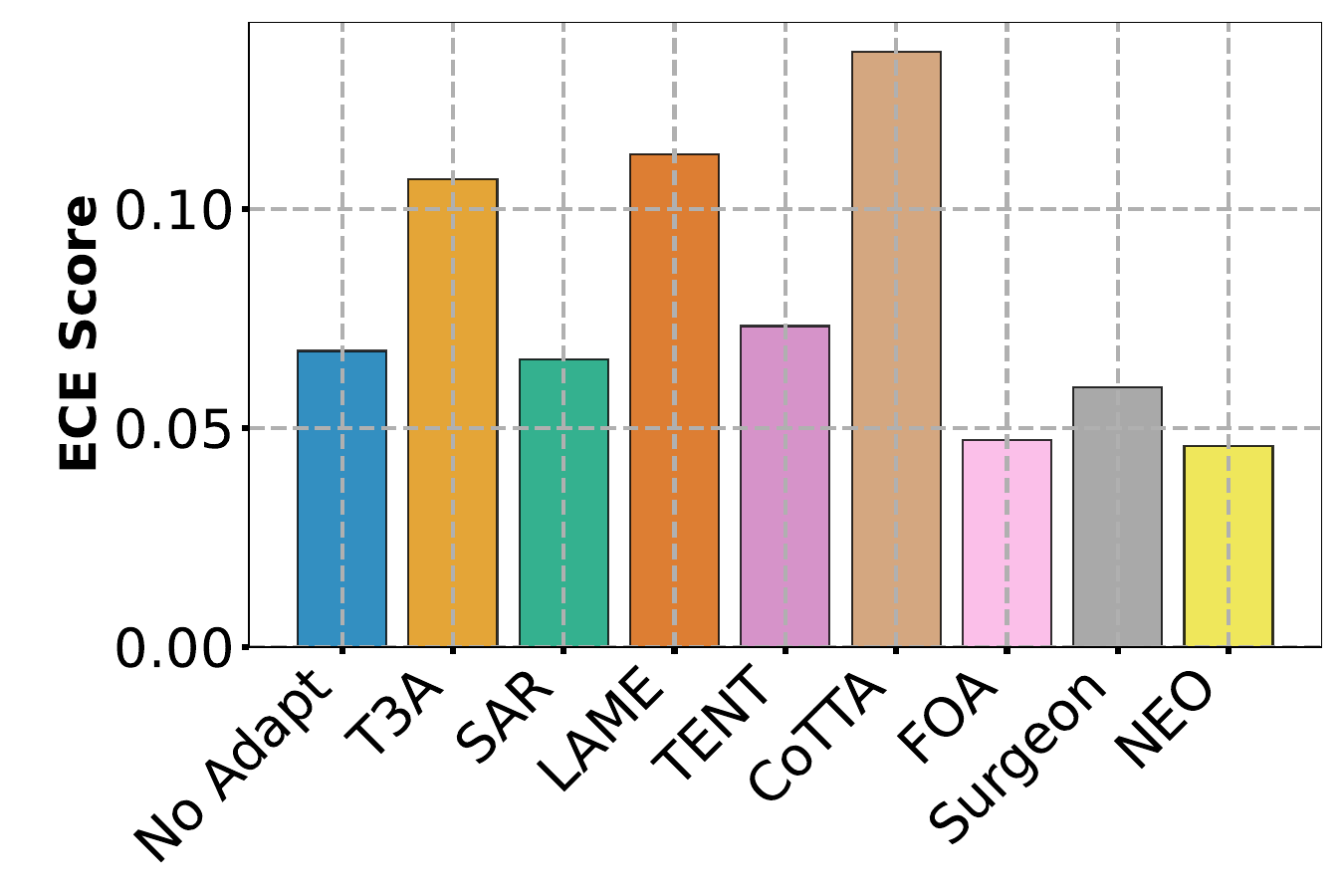}
    \caption{ViT-L - CIFAR-10-C}

\end{figure}

\begin{figure}[H]
    \centering
    \includegraphics[width=0.6\linewidth]{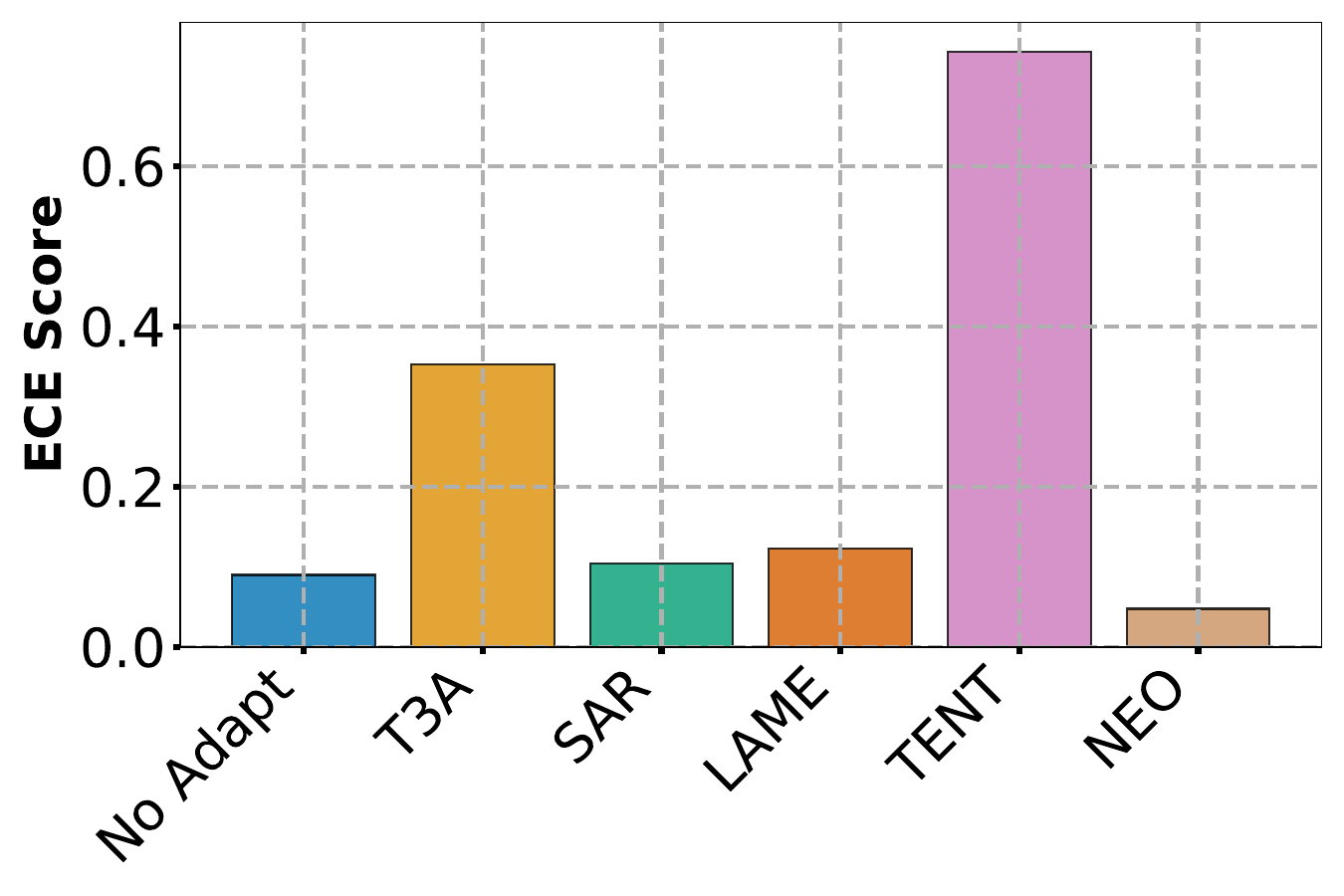}
    \caption{ViT-S - ImageNet-R}

\end{figure}

\begin{figure}[H]
    \centering
    \includegraphics[width=0.6\linewidth]{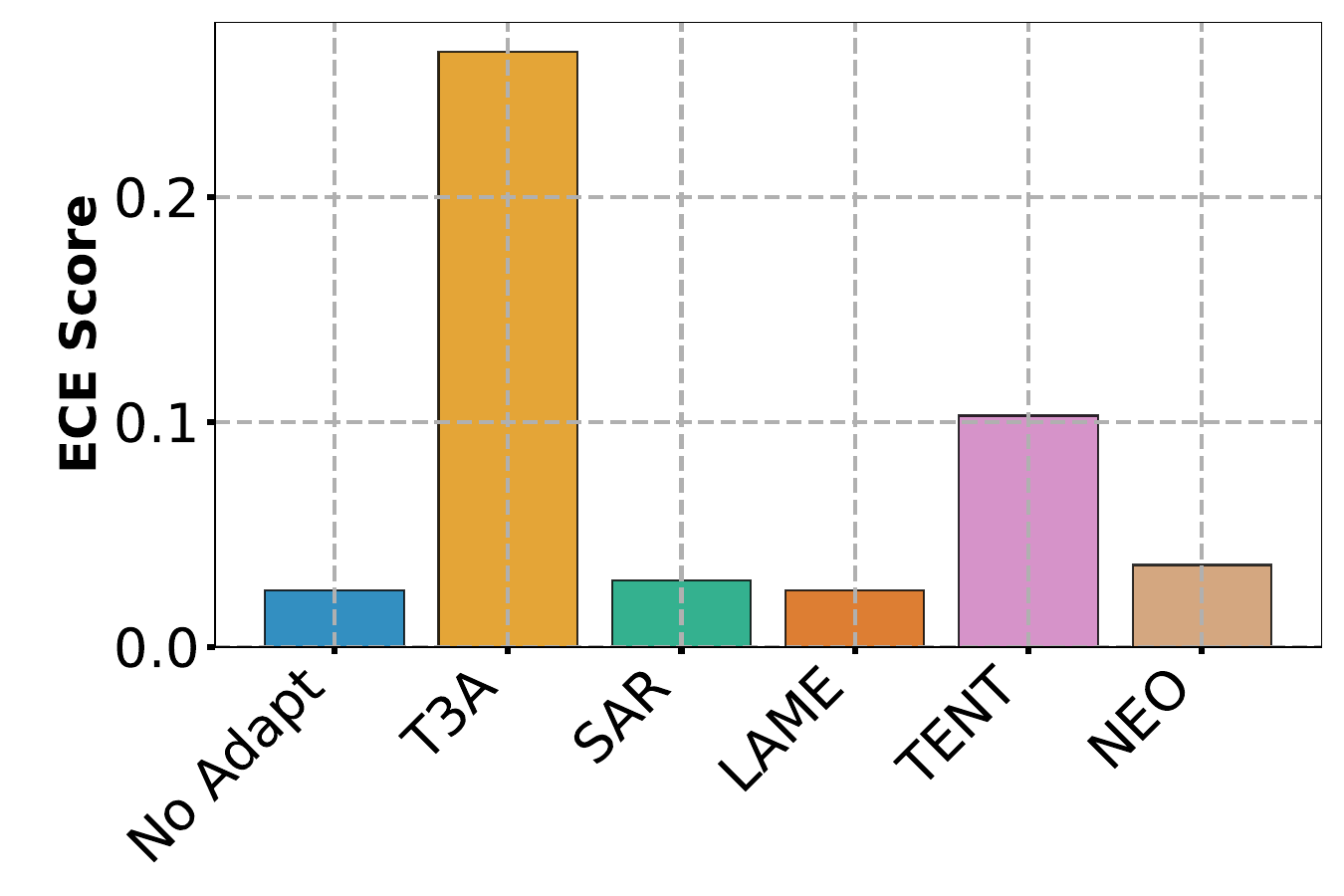}
    \caption{ViT-B - ImageNet-R}

\end{figure}

\begin{figure}[H]
    \centering
    \includegraphics[width=0.6\linewidth]{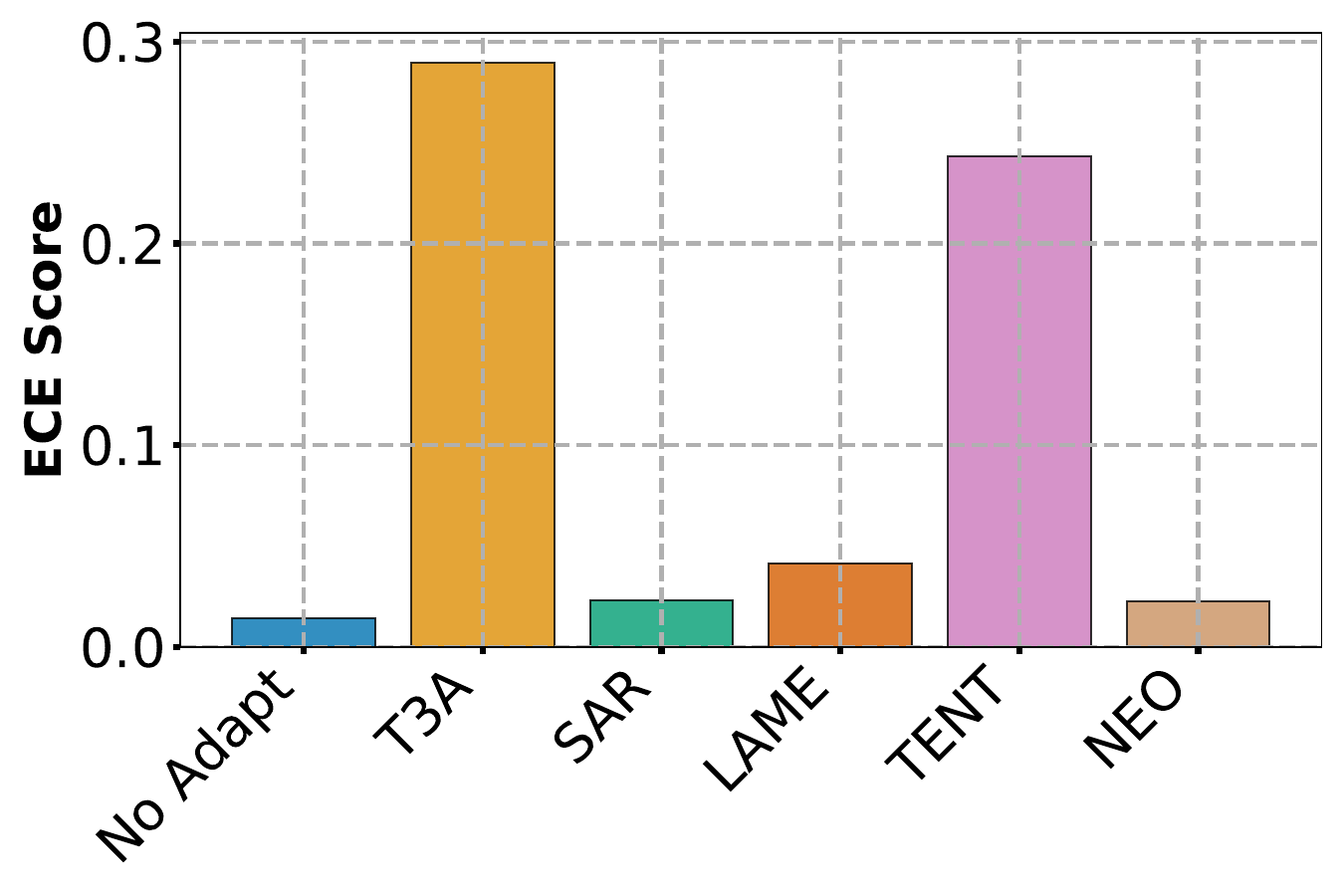}
    \caption{ViT-L - ImageNet-R}

\end{figure}

\begin{figure}[H]
    \centering
    \includegraphics[width=0.6\linewidth]{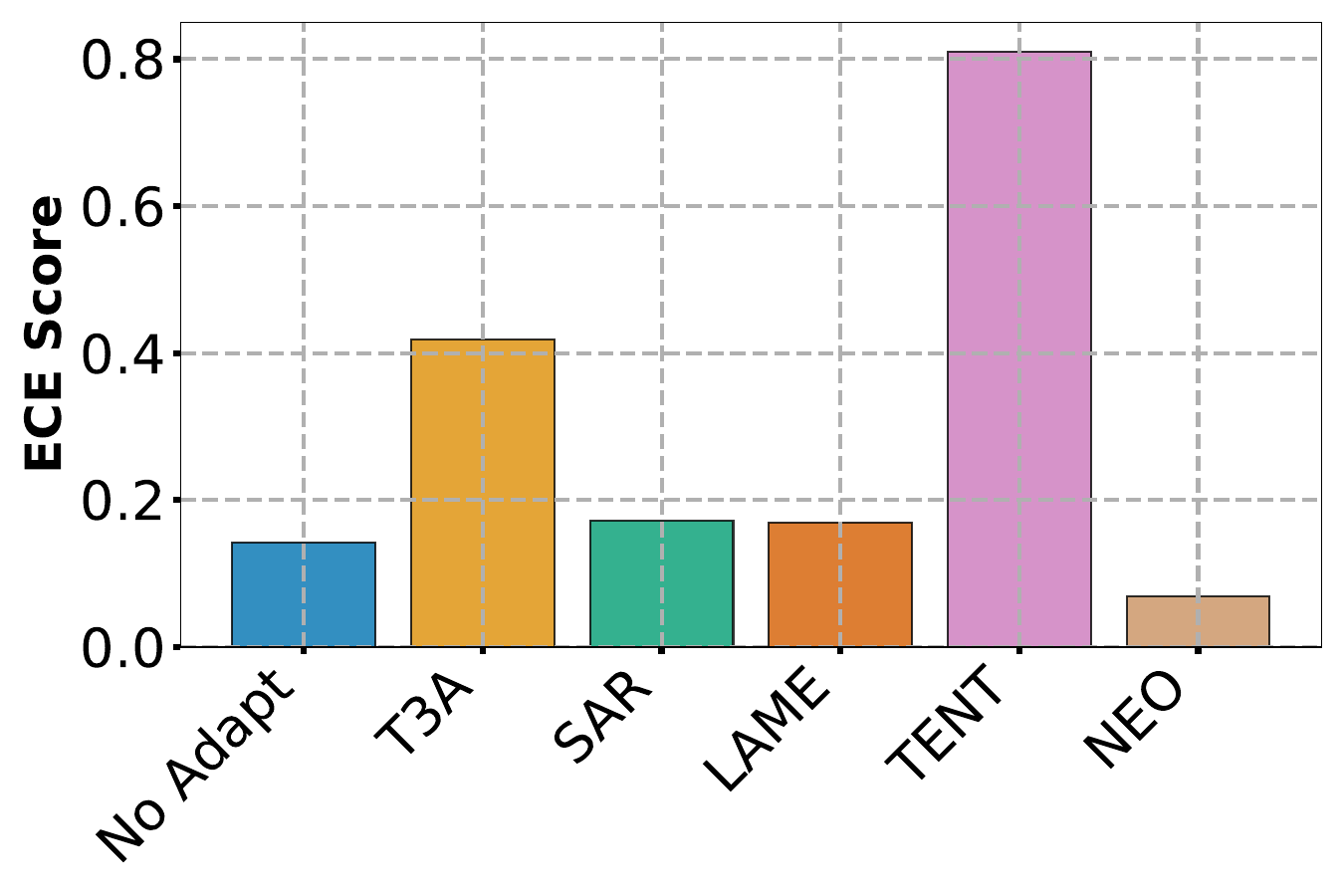}
    \caption{ViT-S - ImageNet-S}

\end{figure}

\begin{figure}[H]
    \centering
    \includegraphics[width=0.6\linewidth]{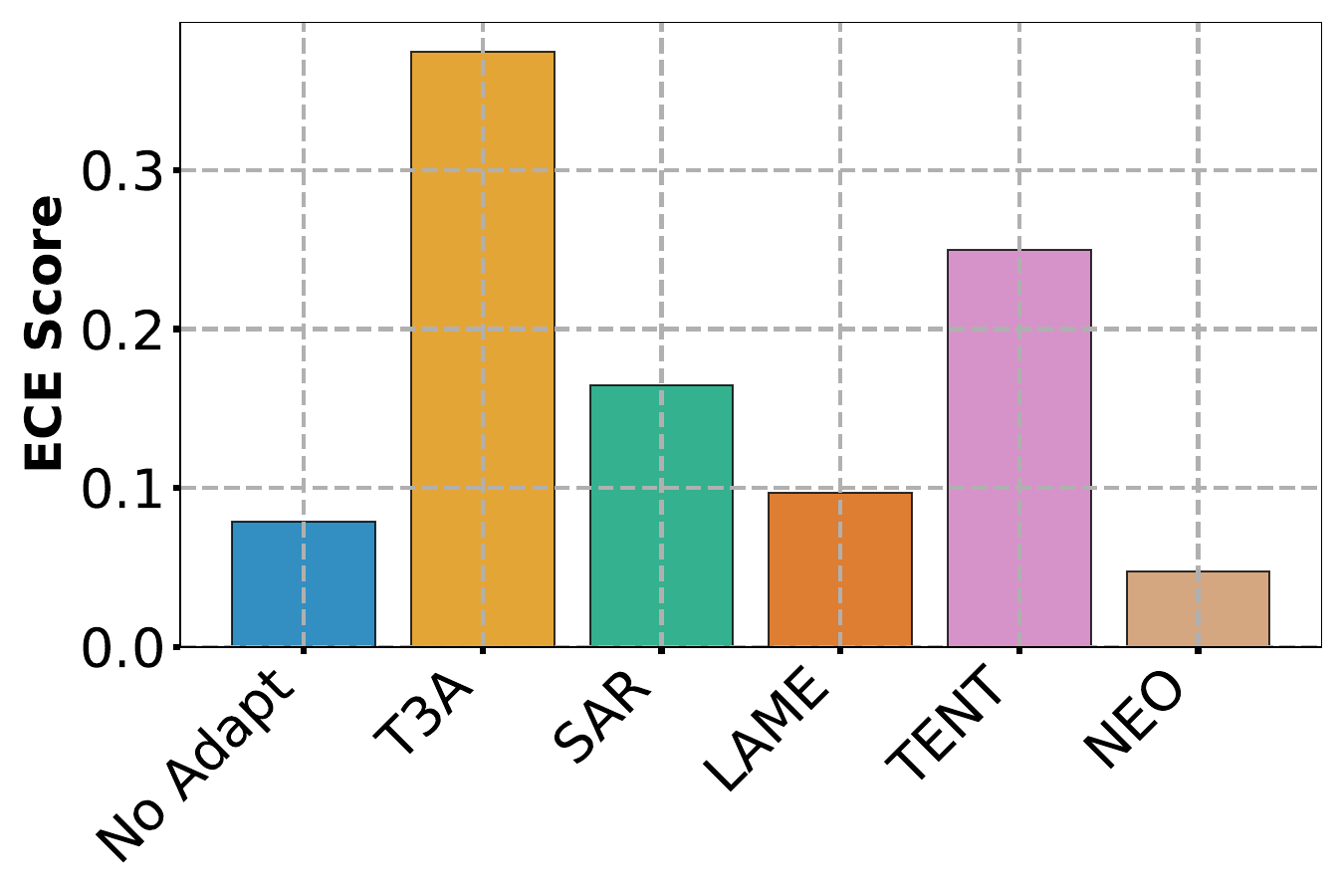}
    \caption{ViT-B - ImageNet-S}

\end{figure}

\begin{figure}[H]
    \centering
    \includegraphics[width=0.6\linewidth]{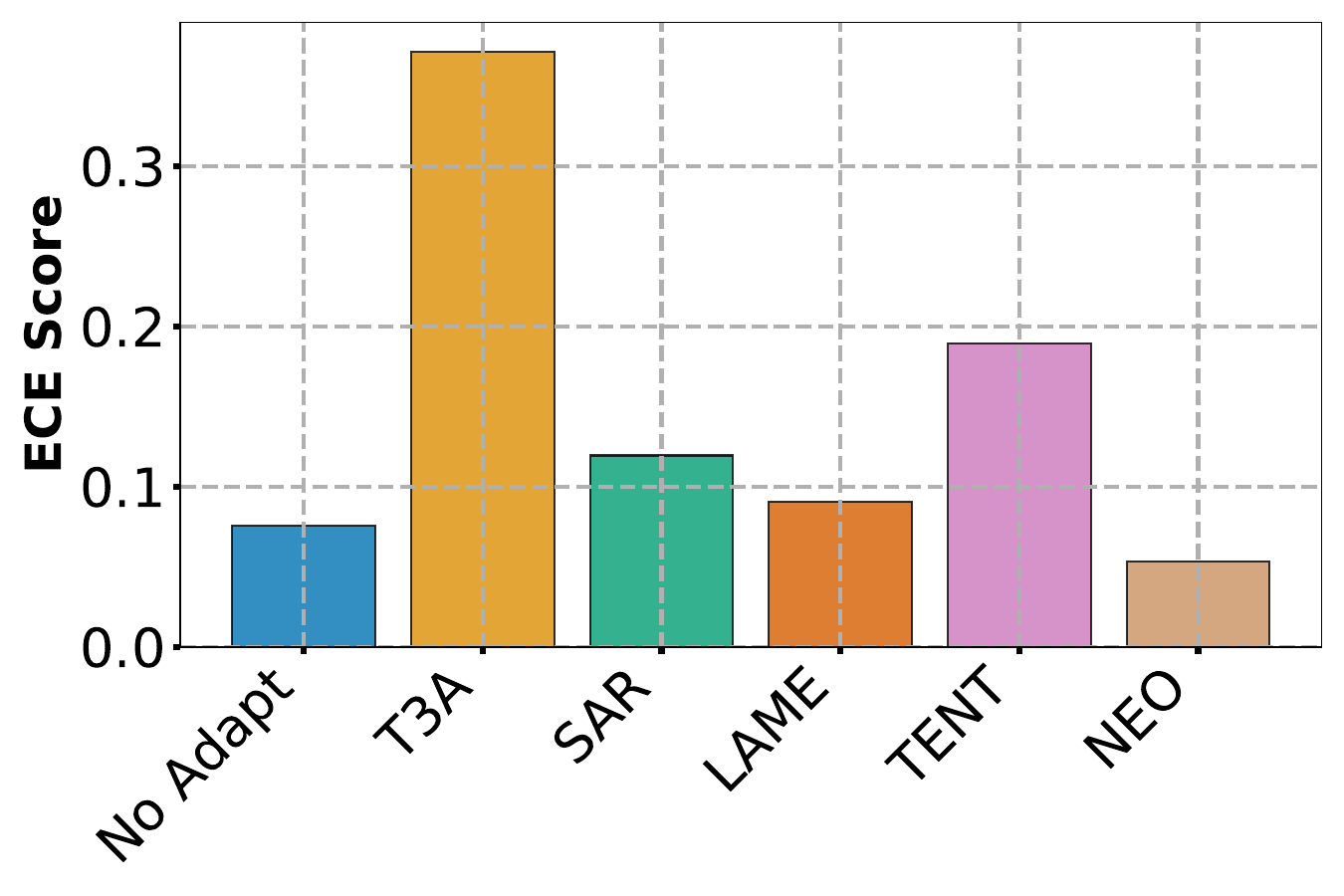}
    \caption{ViT-L - ImageNet-S}

\end{figure}

\subsection{Continual Adaptation on ImageNet-C 512 Samples over corruption index}

These figures show adaptation over time (starting adaptation at index 0 and ending at 15). Corruptions are randomly ordered over different repetitions, resulting results that do not depend on a specific sequence of corruptions.

\begin{figure}[H]
    \centering
    \includegraphics[width=0.6\linewidth]{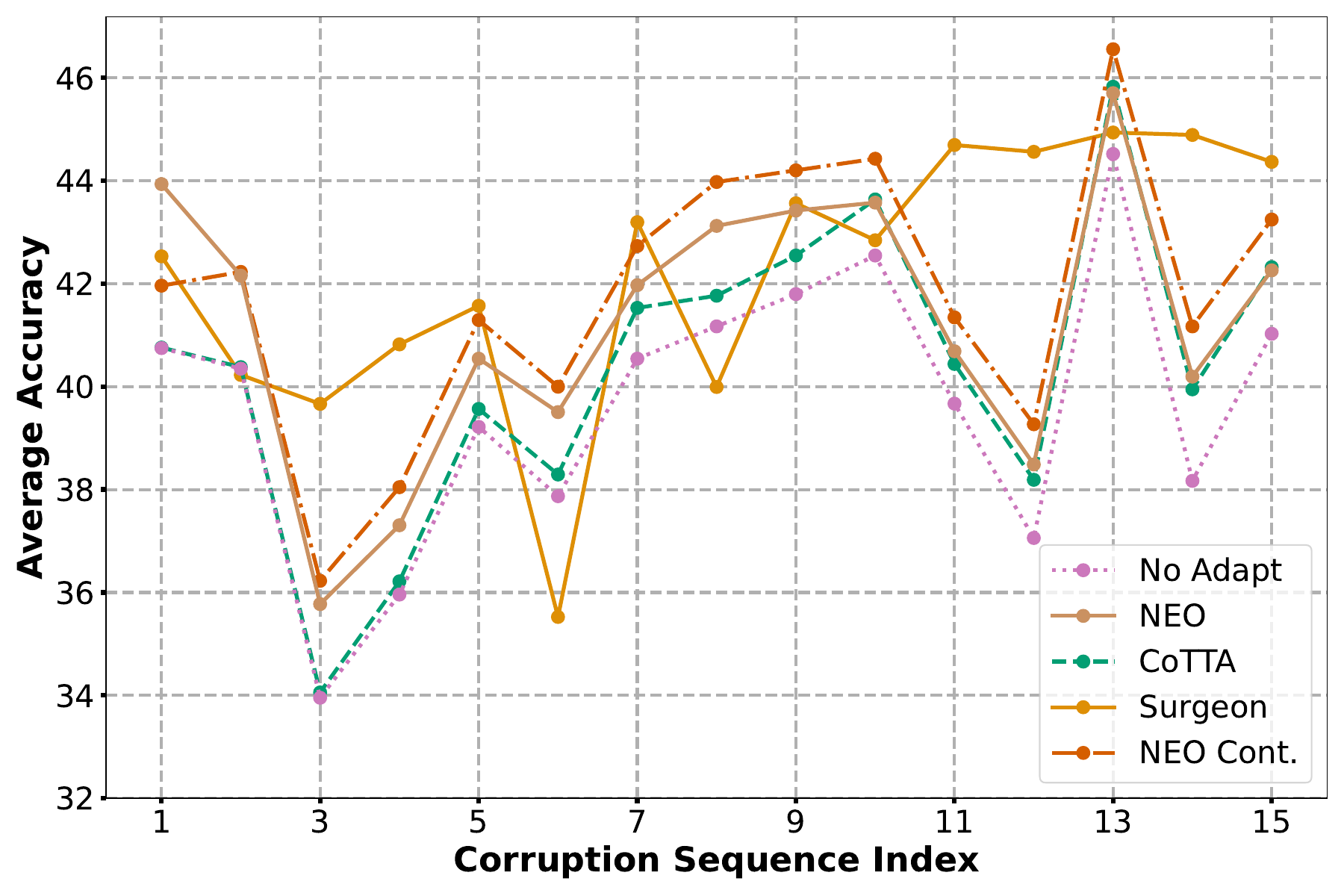}
    \caption{ViT-S - ImageNet-C}

\end{figure}

\begin{figure}[H]
    \centering
    \includegraphics[width=0.6\linewidth]{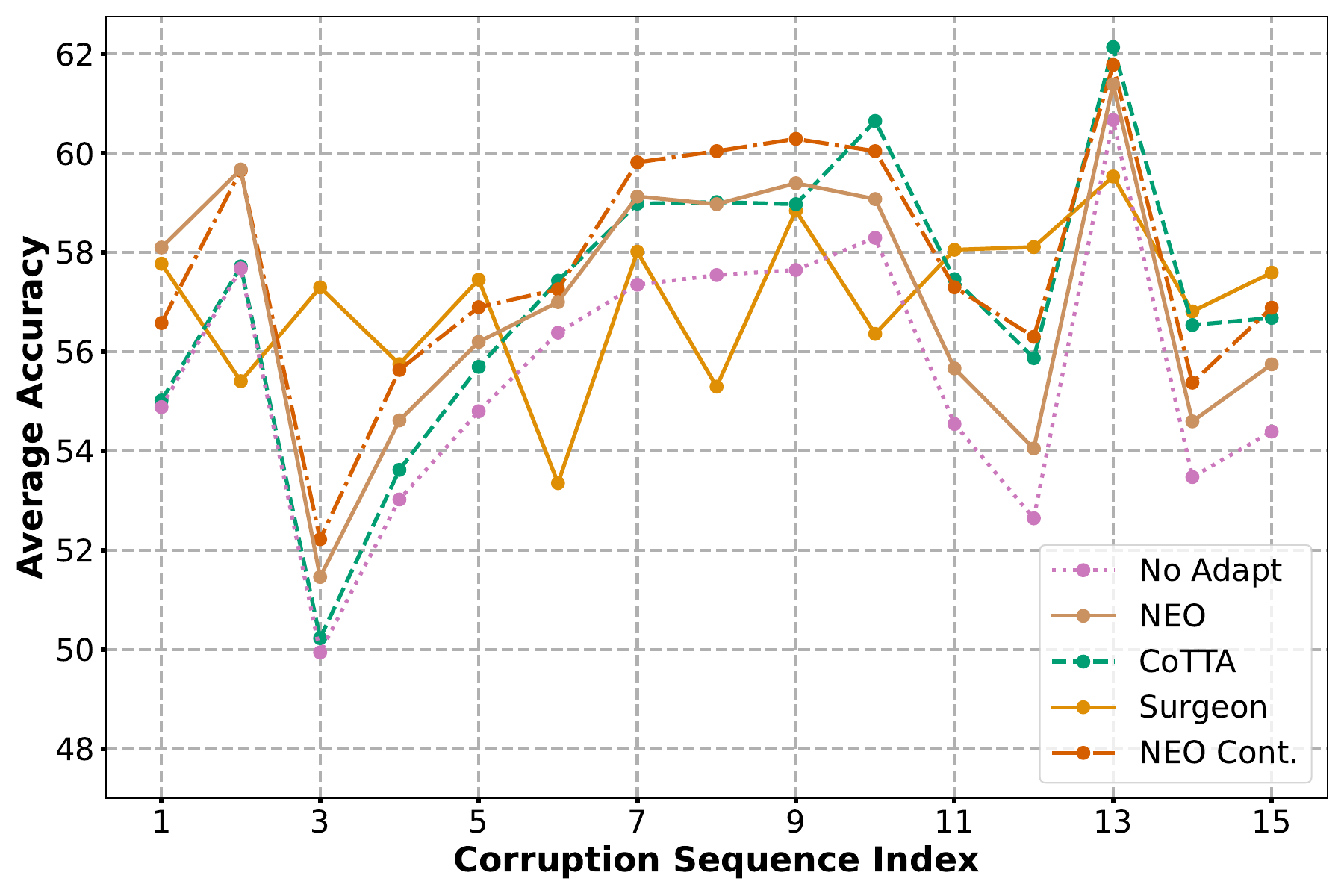}
    \caption{ViT-B - ImageNet-C}

\end{figure}

\begin{figure}[H]
    \centering
    \includegraphics[width=0.6\linewidth]{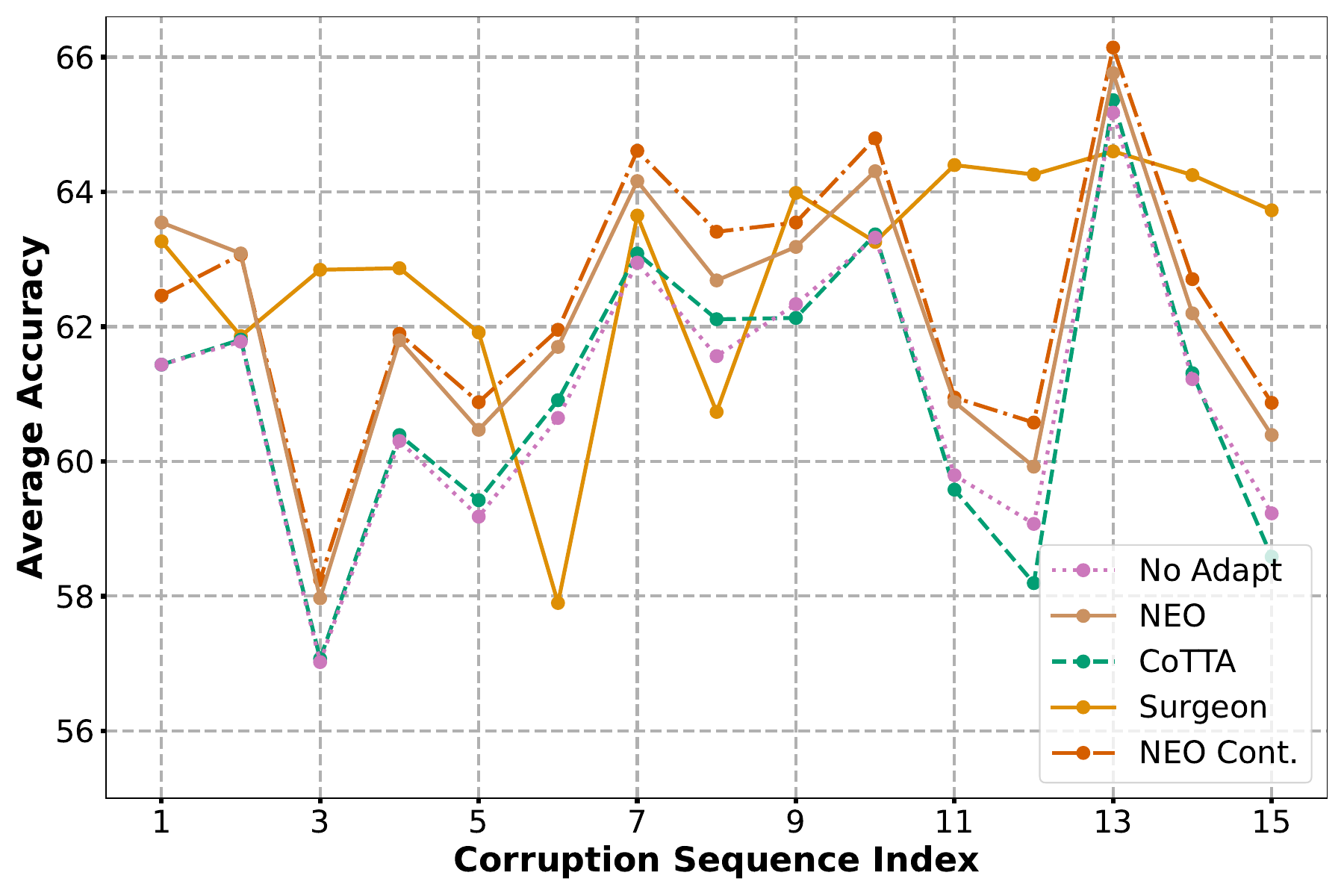}
    \caption{ViT-L - ImageNet-C}

\end{figure}

\section{Disclosure of AI Usage}

LLMs were used to help search for relevant works, writing parts of the code (e.g., plots, bash scripts) and proof-reading.

\end{document}